\newtheorem{theorem}{Theorem}[section]
\theoremstyle{definition}
\newtheorem{definition}{Definition}
\newtheorem{lemma}[theorem]{Lemma}
\newcommand{\RNum}[1]{\lowercase\expandafter{\romannumeral #1\relax}}
\newcommand{\RNumU}[1]{\uppercase\expandafter{\romannumeral #1\relax}}
\journal{Elsevier}
\begin{document}
\date{}
\begin{frontmatter}
\title{Multiview Random Vector Functional Link Network for Predicting DNA-Binding Proteins}
\author[inst1]{A. Quadir}
\ead{mscphd2207141002@iiti.ac.in}
\author[inst1]{M. Sajid}
\ead{phd2101241003@iiti.ac.in}
\author[inst1]{M. Tanveer\corref{Correspondingauthor}}
\ead{mtanveer@iiti.ac.in}

\affiliation[inst1]{organization={Department of Mathematics, Indian Institute of Technology Indore},
            addressline={Simrol}, 
            city={Indore},
            postcode={453552}, 
            state={Madhya Pradesh},
            country={India}}
            \cortext[Correspondingauthor]{Corresponding author}

\begin{abstract}
The identification of DNA-binding proteins (DBPs) is essential due to their significant impact on various biological activities. Understanding the mechanisms underlying protein-DNA interactions is essential for elucidating various life activities. In recent years, machine learning-based models have been prominently utilized for DBP prediction. In this paper, to predict DBPs, we propose a novel framework termed a multiview random vector functional link (MvRVFL) network, which fuses neural network architecture with multiview learning. The MvRVFL model integrates both late and early fusion advantages, enabling separate regularization parameters for each view, while utilizing a closed-form solution for efficiently determining unknown parameters. The primal objective function incorporates a coupling term aimed at minimizing a composite of errors stemming from all views. From each of the three protein views of the DBP datasets, we extract five features. These features are then fused together by incorporating a hidden feature during the model training process. The performance of the proposed MvRVFL model on the DBP dataset surpasses that of baseline models, demonstrating its superior effectiveness. We further validate the practicality of the proposed model across diverse benchmark datasets, and both theoretical analysis and empirical results consistently demonstrate its superior generalization performance over baseline models. The source code of the proposed MvRVFL model is available at \url{https://github.com/mtanveer1/MvRVFL}.
\end{abstract}
\begin{keyword}
Multiview learning, Random vector functional link network, Extreme learning machine, Support vector machine, DNA-binding protein.
\end{keyword}
\end{frontmatter}
\section{Introduction and Motivation}
A DNA-binding protein (DBP) is a protein that has the ability to interact with DNA \cite{jones1987cellular}. The DBP plays a crucial role in numerous vital biological processes, including transcriptional regulation, DNA replication and repair, cellular development, and chromatin organization \cite{ohlendorf1982molecular}. Therefore, precise prediction of DBPs is of considerable significance for proteome annotation as well as endeavors in synthetic biology. To identify DBPs, extensive work has been carried out in wet lab settings, including techniques such as X-Ray crystallography \cite{chou2003crystal}, genetic analysis \cite{freeman1995molecular}, and chromatin immunoprecipitation on microarrays \cite{buck2004chip}. While results obtained from wet-lab methods are likely the most reliable, it's important to note that this approach is also associated with significantly high time and labor costs. In contrast to wet-lab methods, computational approaches can substantially decrease resource requirements and expedite the identification of DBPs. Furthermore, in the postgenomic era, there's an escalating demand for the development of efficient and swift computational methods for accurately identifying DBPs, thereby underscoring their increasing significance within the bioinformatics domain.

Computational techniques are widely used for identifying DBPs \cite{qu2019review}. These techniques are generally divided into two primary types: structure-based methods and sequence-based methods. Structure-based methods utilize information derived from the protein's three-dimensional structure. Initially, structure-based methods dominated the field of DBP prediction, with models such as LBi-DBP \cite{zeng2024lbi}, StackDPP \cite{ahmed2024stackdpp}, DBPboost \cite{sun2024dbpboost}, and ULDNA \cite{zhu2024uldna}. Effectively encoding a protein sequence into a vector format is considered both a fundamental and difficult aspect of sequence-based approaches \cite{zhang2019review}. DNABinder \cite{ranjan2021ensemble} employs evolutionary information encoded in a position-specific scoring matrix (PSSM), generated through the PSI-BLAST multiple sequence alignment tool \cite{sf1997gapped}. A support vector machine (SVM) is utilized for classification, taking these features as input, representing the initial application of this method in identifying DNA-binding proteins (DBPs). In iDNA-Prot \cite{chou2011some}, pseudo amino acid composition (PseAAC) features \cite{chowdhury2017idnaprot}, derived from protein sequences through the grey model, are combined with random forest (RF) for identifying DNA-binding proteins (DBPs). iDNAProt-ES \cite{chowdhury2017idnaprot} utilizes evolutionary data like PSSM alongside structural information predicted by SPIDER2 \cite{yang2017spider2} to characterize the features of a given protein. These features are trained to identify DBPs using SVM with a linear kernel. DBP datasets, derived from various sources or views, offer complementary information essential for improving task performance. Each view contributes unique perspectives and features that, when combined, enhance the overall understanding and efficiency of the model. Nevertheless, existing multiview learning approaches often necessitate complex architectures or extensive preprocessing to manage these diverse views effectively \cite{hu2023improving, quadir2024multiview}. The growing complexity of data from different sources poses significant challenges in extracting meaningful insights and achieving high performance. In recent years, the integration of multiview information has become increasingly popular in the domain of bioinformatics \cite{zou2019fkrr, zhou2022multivariate}. The FKRR-MVSF model \cite{zou2019fkrr}, which utilizes multi-view sequence features and fuzzy kernel ridge regression, is introduced for the identification of DNA-binding proteins. In FKRR-MVSF, the initial step entails deriving multi-view sequence features from protein sequences, followed by the application of a multiple kernel learning (MKL) approach to combine these varied features. In MSFBinder \cite{liu2018model}, a stacking framework is introduced to predict DNA-binding proteins (DBPs) by merging features from various perspectives. The multi-view hypergraph restricted kernel machine (MV-H-RKM) \cite{guan2022mv} model is introduced to derive various features from protein sequences. The multi-view features are linked through a shared latent representation, and multi-hypergraph regularization is employed to integrate them while preserving the structural coherence between the original features and the latent space.

Despite the extensive use of hyperplane-based classifiers like SVMs and their variants in predicting DNA-binding proteins, however, DBP prediction remains a challenging task due to the complex, heterogeneous, and nonlinear nature of protein sequences, which encapsulate diverse evolutionary, structural, and physicochemical properties \cite{glasscock2025computational}. SVM-based models and manual identification are both time-consuming and non-scalable, highlighting the growing need for efficient machine learning–based predictive frameworks \cite{sun2022mlapsvm}.

Among efficient learning paradigms, the Random Vector Functional Link (RVFL) \cite{pao1994learning, malik2023random} network has been widely recognized for its fast training speed and closed-form learning capability \cite{sajid2025wave, sajid2025gb}. Despite these advantages, conventional RVFL architectures fail to fully leverage complementary information from multiple heterogeneous feature representations, as evidenced by our experimental observations \cite{sajid2025rvfl}. This shortcoming arises because standard RVFLs perform shallow transformations of input data, limiting their capacity to capture intricate cross-feature dependencies, particularly when integrating PSSM-based, evolutionary, and physicochemical descriptors.

In the context of DBP prediction, there exists a pressing need for an RVFL-based model that can effectively exploit the complementary nature of multiple biological feature views while preserving computational efficiency. To address this gap, we introduce the Multiview RVFL (MvRVFL) network, an advanced RVFL formulation designed to integrate diverse biological features through a unified multiview learning framework. Unlike deep learning models that require extensive parameter tuning and computational resources, MvRVFL preserves the analytical training property of classical RVFL, yielding a closed-form solution with low memory and runtime overhead.

\noindent The main characteristics of the proposed MvRVFL model are as follows:
\begin{enumerate}
    \item To the best of our knowledge, this is the first time to integrate multiview learning principles into the RVFL network. While conventional RVFL models are known for their simplicity, scalability, and efficiency compared to traditional ANNs and ML methods, they have not been extended to a principled multiview optimization setting. The proposed MvRVFL formulation enables simultaneous exploitation of complementary information through both early and late fusion across distinct biological feature views.
    \item We introduce a novel optimization framework for MvRVFL that incorporates a coupling term into the objective function. This design allows the model to minimize a composite of errors originating from all views while maintaining a balance between shared and view-specific learning. The formulation thus provides a mathematically grounded mechanism for effective multiview unification.
    \item The proposed model unifies early fusion (which captures inter-view correlations at the input level) and late fusion (which aggregates decision-level outputs from each view). This hybrid fusion design enables MvRVFL to capture shared structures while preserving individual view information. The ablation studies empirically confirm that this combined fusion approach consistently outperforms models employing only early or late fusion, thereby validating its effectiveness (See section 6.4).
    \item The proposed MvRVFL framework is specifically designed with the goal of enhancing DBP prediction. By leveraging heterogeneous protein descriptors, such as PSSM-based, evolutionary, and physicochemical features, as distinct views, MvRVFL effectively captures complex discriminative patterns.
\end{enumerate}

\noindent The paper's main highlights are as follows:
\begin{itemize}
    \item We propose a novel MvRVFL network that leverages multiple distinct feature sets using the MVL and the simplicity cum effectiveness of RVFL to predict DBP.
    \item The mathematical foundations of the MvRVFL model are established by deriving its consistency and generalization error bounds using Rademacher complexity analysis.
    \item Following the method outlined in \cite{guan2022mv}, we partition each protein sequence into five segments of equal size, resulting in 14 distinct continuous and discontinuous regions. Each region is characterized using 63-dimensional features (7 + 21 + 35), leading to a comprehensive 882-dimensional (63 x 14) feature representation of the entire protein sequence. We extract five types of features from the sequences: NMBAC, PSSM-DWT, MCD, PSSM-AB, and PsePSSM. Experiments showcased that our method outperforms traditional models in effectively identifying DNA-binding proteins.
    \item Moreover, to assess the generalization capability of the proposed model, we perform evaluations using the UCI, KEEL, Corel5k, and AwA datasets, which span various domains. The results from the experiments show that the MvRVFL model surpasses various baseline models in performance.
\end{itemize}

The remainder of the paper is organized as follows: \ref{Background} offers a literature review on the baseline model, and \ref{Random Vector Functional Link (RVFL) Network} provides an overview of related work. Section \ref{Notations} introduces the notations and mathematical symbols that are consistently used throughout the paper. Section \ref{methods} covers the method for extracting features from DNA-binding protein sequences. We present the detailed formulation of the proposed MvRVFL model in Section \ref{MvRVFL}. Section \ref{Generalization Capability Analysis} explores the ability of the proposed MvRVFL model to generalize effectively. In Section \ref{Experiments and Results}, we detail the experimental findings and offer a comparative assessment between the proposed approach and established baseline models. Finally, Section \ref{Conclusion and Future Work} presents the conclusions and potential future research directions.

\section{Notations}
\label{Notations}
Let $\mathscr{T} = \mathscr{T}^A \times \mathscr{T}^B$ be the sample space defined by the Cartesian product of two separate feature views, $A$ and $B$, where $\mathscr{T}^A \subseteq \mathbb{R}^{m_1}$, $\mathscr{T}^B \subseteq \mathbb{R}^{m_2}$ and $\mathscr{Y} = \{-1, +1\}$ denotes the label space. Here $n$ represents the number of samples, $m_1$ and $m_2$ denote the number of features corresponding to view $A$ (Vw-$A$) and view $B$ (Vw-$B$), respectively. Suppose $\mathscr{H} = \{(x_i^A, x_i^B, y_i) | x_i^A \in \mathscr{T}^A, x_i^B \in \mathscr{T}^B, y_i \in \mathscr{Y}\}_{i=1}^n$ represent a two-view dataset. Let $X_1 \subseteq \mathbb{R}^{n \times m_1}$ and $X_2 \subseteq \mathbb{R}^{n \times m_2}$ be the input matrix of Vw-$A$ and Vw-$B$, respectively, and $Y$ denotes the label vector. $H_1 \subseteq \mathbb{R}^{n \times h_l}$ and $H_2 \subseteq \mathbb{R}^{n \times h_l}$ are hidden layer matrices, which are obtained by applying a nonlinear activation function, denoted as $\phi$, to the input matrices $X_1$ and $X_2$ after transforming them with randomly initialized weights and biases. Here, \( h_l \) denotes the number of nodes in the hidden layer, and \( (\cdot)^t \) indicates the transpose operation.

\section{Data Preprocessing and Feature Extractions}
\label{methods}
In this part, we first describe the DBP sequence from three different viewpoints. Next, five distinct algorithms are employed to derive features from these various viewpoints \cite{guan2022mv}. Finally, the MvRVFL model is leveraged to integrate the extracted features and construct the predictor for detecting DNA-binding proteins (DBPs). Fig \ref{Flowchart of the model construction} illustrates the flowchart of the model construction process.

\begin{figure}
    \centering
    \includegraphics[width=0.9\textwidth,height=7cm]{DNA.png}
    \caption{Flowchart of extracting features of Protein Sequences.}
    \label{Flowchart of the model construction}
\end{figure}

\subsection{Features of DBP}
In this subsection, we detail the DBP sequence from three different perspectives: physicochemical properties, evolutionary information, and amino acid composition. These views are transformed into feature matrices using various extraction algorithms: Multi-scale Continuous and Discontinuous (MCD) \cite{you2014prediction} for amino acid composition; Normalized Moreau-Broto Autocorrelation (NMBAC) \cite{ding2016predicting} for physicochemical properties; and PSSM-based Discrete Wavelet Transform (PSSM-DWT) \cite{nanni2012wavelet}, Pseudo Position-Specific Scoring Matrix (PsePSSM) \cite{liu2015pse}, and PSSM-based Average Blocks (PSSM-AB) \cite{cheol2010position}, for evolutionary information.

\subsection{Multi-Scale Continuous and Discontinuous for Protein Sequences}
The Multi-scale Continuous and Discontinuous (MCD) approach employs multi-scale decomposition to partition a protein sequence into equal-length segments, subsequently analyzing both continuous and discontinuous regions within each segment. Three descriptors are used for each region: distribution (D), composition (C), and transition (T). The following outlines the detailed process of feature extraction for each descriptor.

For distribution (D), the protein sequence is categorized into 7 different types of amino acids. The total number of occurrences of amino acids in each category is denoted by $M_i$, where $i$ belongs to the set $\{1, 2, \ldots, 7\}$. Subsequently, within each amino acid category, we determine the positions of the $1^{st}$, $25^{th}$ percentile, $50^{th}$ percentile, $75^{th}$ percentile, and last amino acids of that category in the entire protein sequence. These positions are then normalized by dividing by the total sequence length \( L \). Therefore, each amino acid category can be represented by a $5$-dimensional feature vector. For the entire protein sequence, this results in a $35$-dimensional feature vector. 

In the Composition (C) approach, the 20 standard amino acids are classified into seven groups according to their dipole moments and side-chain volumes, with each group exhibiting comparable characteristics. Table \ref{Amino Acid Categories} outlines the specific amino acids assigned to each respective group in detail. Let \( S = \{s_1, s_2, \ldots, s_L\} \) represent a protein sequence, where each \( s_i \) corresponds to the \( i^{\text{th}} \) amino acid residue, and \( L \) indicates the overall sequence length. Based on Table \ref{Amino Acid Categories}, we can represent \( S \) as \( S = \{s_1, s_2, \ldots, s_L\} \), where each residue \( s_i \) is classified into one of the categories \( \{1, 2, 3, \ldots, 7\} \). We calculate the proportion of amino acid residues in each category throughout the entire length of the protein sequence. In the case of composition (C), it is expressed as a 7-dimensional feature vector, with each component representing one of the seven predefined amino acid groups.

\begin{table*}[htp]
\centering
    \caption{Amino Acid Categories Based on Side Chain Dipoles and Volumes}
    \label{Amino Acid Categories}
    \resizebox{1\linewidth}{!}{
\begin{tabular}{ccccccc} 
\hline
Group 1 & Group 2 & Group 3 & Group 4 & Group 5 & Group 6 & Group 7 \\ \hline
A, G, V & D, E & F, P, I, L & H, Q, N, W & K, R & T, Y, M, S & C \\ \hline
\end{tabular}}
\end{table*}

For transition (T), the protein sequence is segmented into seven groups of amino acids according to the previously defined classification. The transition frequency between various amino acid categories can be computed to represent the protein’s transition (T) characteristics. This transition information is captured using a 21-dimensional vector, which encompasses all possible pairwise transitions among the seven amino acid groups.

In this study, each protein sequence is partitioned into $5$ uniform-length segments, creating a total of 14 continuous or discontinuous regions. Each region is characterized by the $63$-dimensional features mentioned earlier ($35$ for distribution, $21$ for transition, and $7$ for composition). Consequently, the entire protein sequence is described using an $882$-dimensional feature vector ($63 \times 14$).
\begin{table*}[ht!]
\centering
    \caption{Six physicochemical characteristics linked to each amino acid.}
    \label{Six Physicochemical}
    \resizebox{0.7\linewidth}{!}{
\begin{tabular}{lcccccc}
\hline Amino acid & Q1 & Q2 & SASA & H & NCISC & VSC \\
\hline 
D & 0.105 & 13 & 1.587 & -0.9 & -0.02382 & 40 \\
C & 0.128 & 5.5 & 1.461 & 0.29 & -0.03661 & 44.6 \\
A & 0.046 & 8.1 & 1.181 & 0.62 & 0.007187 & 27.5 \\
R & 0.291 & 10.5 & 2.56 & -2.53 & 0.043587 & 105 \\
G & 0 & 9 & 0.881 & 0.48 & 0.179052 & 0 \\
H & 0.23 & 10.4 & 2.025 & -0.4 & -0.01069 & 79 \\
P & 0.131 & 8 & 1.468 & 0.12 & 0.239531 & 41.9 \\
E & 0.151 & 12.3 & 1.862 & -0.74 & 0.006802 & 62 \\
I & 0.186 & 5.2 & 1.81 & 1.38 & 0.021631 & 93.5 \\
N & 0.134 & 11.6 & 1.655 & -0.78 & 0.005392 & 58.7 \\
Q & 0.18 & 10.5 & 1.932 & -0.85 & 0.049211 & 80.7 \\
F & 0.29 & 5.2 & 2.228 & 1.19 & 0.037552 & 115.5 \\
L & 0.186 & 4.9 & 1.931 & 1.06 & 0.051672 & 93.5 \\
T & 0.108 & 8.6 & 1.525 & -0.05 & 0.003352 & 51.3 \\
Y & 0.298 & 6.2 & 2.368 & 0.26 & 0.023599 & 117.3 \\
M & 0.221 & 5.7 & 2.034 & 0.64 & 0.002683 & 94.1 \\
W & 0.409 & 5.4 & 2.663 & 0.81 & 0.037977 & 145.5 \\
S & 0.062 & 9.2 & 1.298 & -0.18 & 0.004627 & 29.3 \\
K & 0.219 & 11.3 & 2.258 & -1.5 & 0.017708 & 100 \\
V & 0.14 & 5.9 & 1.645 & 1.08 & 0.057004 & 71.5 \\
\hline
\end{tabular}}
\end{table*}
\subsection{Normalized Moreau-Broto Autocorrelation for Protein Sequences}
The Normalized Moreau-Broto Autocorrelation (NMBAC) is a descriptor derived from the physicochemical attributes of amino acids. Following the methodology outlined in \cite{ding2016predicting}, we extract relevant features from protein sequences. Specifically, this work considers six physicochemical characteristics: hydrophobicity (O), solvent-accessible surface area (SASA), polarizability (P1), polarity (P2), side chain volume (VSC), and the net charge index of side chains (NCISC). Table \ref{Six Physicochemical} lists the values of these six physicochemical properties for each amino acid. The values for each property are standardized by the given equation:
\begin{align}
    \hat{M}_{i, j} = \frac{M_{i, j} - M_j}{S_j},
\end{align}
where \( M_{i,j} \) denotes the value assigned to the \( i^{th} \) amino acid corresponding to the \( j^{th} \) physicochemical property, while \( M_j \) refers to the average of that property calculated across all 20 standard amino acids. \( S_j \) represents the standard deviation of property \( j \) among these 20 amino acids. Hence, NMBAC can be computed using the following formula:
\begin{align}
    P(lg, j) = \frac{1}{L-lg} \sum_{i=1}^{L-lg}(\hat{M}_{i, j}\times \hat{M}_{{i+lg}, j}),
\end{align}
where \( j \in \{1, 2, \ldots, 6\} \) corresponds to one of the six selected physicochemical properties, \( i \in \{1, 2, \ldots, 20\} \) indicates the \( i^{th} \) amino acid in the sequence, \( L \) stands for the total length of the protein sequence, and \( lg \) refers to the lag or interval between amino acid residues. We use the same \( lg \) values as those in \cite{ding2016predicting}, ranging from $1$ to $30$. Consequently, this results in a $180$-dimensional vector $(30 \times 6)$. Moreover, the NMBAC feature incorporates the occurrence frequencies of the 20 amino acids in the protein sequence. As a result, the NMBAC feature vector has a total dimension of $200(180 + 20)$. 
\begin{figure}
    \centering
    \includegraphics[width=0.8\textwidth,height=6cm]{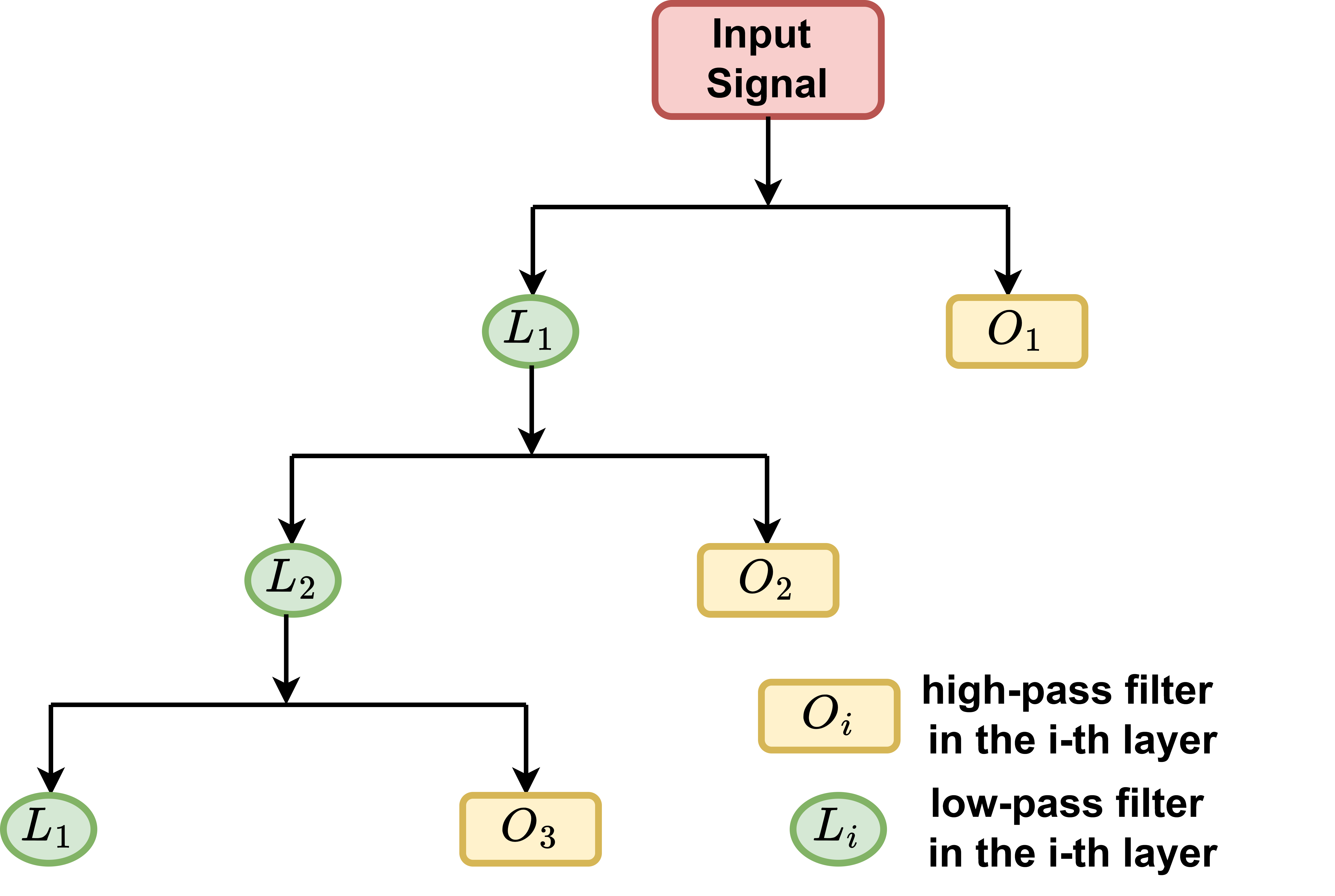}
    \caption{Four-level discrete wavelet transform for PSSM analysis.}
    \label{Four-level discrete wavelet transform for PSSM analysis}
\end{figure}

\subsection{PSSM-Based Average Blocks for Protein Sequences}
Due to its ability to encapsulate evolutionary insights, the Position-Specific Scoring Matrix (PSSM) has gained widespread use in recent years for predicting protein functions. Three different feature types are taken from the PSSM: Pseudo-PSSM (PsePSSM), PSSM-derived Average Blocks (PSSM-AB), and Discrete Wavelet Transform based on PSSM (PSSM-DWT). The original PSSM matrix, denoted as \( Q_{PSSM} \), is defined as follows:
\begin{align}
Q_{PSSM} = 
    \begin{bmatrix}
         Q_{1, 1} & Q_{1, 2} & \cdots & Q_{1, 20} \\
         \vdots   & \vdots  & \ddots & \vdots  \\
         Q_{L, 1} & Q_{L, 2} & \cdots & Q_{L, 20} \\
    \end{bmatrix}_{L \times 20},
\end{align} 
where \( Q_{i,j} \) represents the score for the transition from the $i^{th}$ residue to the $j^{th}$ residue class throughout the evolutionary process. For every protein, the PSSM feature's dimensionality is \( L \times 20 \).

The PSSM is first partitioned into segments by PSSM-AB, with each segment representing $5\%$ of the total protein sequence. As a result, the PSSM is split into $20$ core blocks, each containing $20$ columns, regardless of the protein's length. The extraction formula is given by:
\begin{align}
    E_j = \frac{1}{A_j}\sum_{i=1}^{A_j}Q_i^{(j)},
\end{align}
where \( E_j \) represents the feature vector of the \(j^{th}\) block, and its dimension is \( 1 \times 20 \). The sequence length in the \(j^{th}\) block is denoted by \( A_j \), and \( Q_{i}^{(j)} \) refers to the PSSM value for the \(i^{th}\) residue within the \(j^{th}\) block, also with a dimension of \( 1 \times 20 \). The feature vector derived from each protein sequence has a dimension of $400$ \((20 \times 20)\), with a total of $20$ blocks.

\subsection{PSSM-Based Discrete Wavelet Transform for Protein Sequences}
To derive features using the Discrete Wavelet Transform (DWT), we treat each column of the protein’s PSSM profile as an input signal. This approach, known as PSSM-based DWT (PSSM-DWT), enables the extraction of informative patterns from the PSSM. As shown in Fig. \ref{Four-level discrete wavelet transform for PSSM analysis}, a 4-level discrete wavelet transform is applied to the PSSM for analysis \cite{ding2016predicting}.

High-pass and low-pass filters are used to partition the approximation function at each stage of the procedure, followed by downsampling. As a result, high-frequency coefficients \( O_i \) and low-frequency coefficients \( L_i \) are produced, where \( i \) indicates the decomposition level. For each set of coefficients at every layer, we compute statistical measures including the mean, median, maximum, and minimum. Furthermore, we extract more information from each layer of the low-frequency component by obtaining the first five discrete cosine transform coefficients. Thus, for a protein's PSSM profile, we obtain a $1040$-dimensional feature vector \(((4 + 4 + 5) \times 4 \times 20)\).

\subsection{Pseudo PSSM for Protein Sequences}
The dimensions of the PSSM features vary depending on the length of the protein sequence. When extracting features from the PSSM profile, the PsePSSM is often used to ensure that PSSM features have uniform dimensions and include sequence order information. The features derived from PsePSSM are organized as follows:
\begin{align}
    Q_{PsePSSM} = [Q_1 ~ Q_2 ~ \ldots Q_{20} ~ J_1^{\zeta_1} ~ J_2^{\zeta_1} ~ \ldots ~ J_{20}^{\zeta_1} ~ \ldots ~ J_1^{\zeta_N} ~ J_2^{\zeta_N} ~ \ldots ~ J_{20}^{\zeta_N}]^T,
\end{align}
where
\begin{align}
    Q_j = \frac{1}{L}\sum_{i=1}^L Q_{i, j}, ~~ j = 1, 2, \ldots, 20,
\end{align}
and
\begin{align}
    J_j^{\zeta_l} = \frac{1}{L - \zeta_l}\sum_{i=1}^{L - \zeta_l} [Q_{i, j} - Q_{(i+\zeta_l),j}]^2, ~~~ \zeta_l < L;~ 1 \leq l \leq N,
\end{align}
where \( l \) indicates the gap between residues, \( L \) denotes the total length of the protein sequence, and \( Q_j \) represents the average score for residues in the sequence transitioning to the \(j^{th}\) residue, and \( J_j^{\zeta_l} \) denotes the PSSM values for two adjacent residues separated by a distance of \( l \). We adopt the same range of \( l \) values as those used in \cite{chou2007memtype}, where \( l \) varies from 1 to $15$. As a result, we form a vector of size 320 for each protein sequence, which consists of \(20\) plus \(20 \times 15\) components. We can extract PSSM features with a fixed length using this technique while preserving sequence order information.

\subsection{Feature Selection for Protein Sequences}
In this section, we discuss feature selection and our use of the Minimum Redundancy–Maximum Relevancy (mRMR) method \cite{wei2017local}. This method prioritizes input features by reducing redundancy while enhancing their relevance to the target. In mRMR, mutual information helps evaluate the relevance and redundancy of the features. The calculations are performed as follows:
\begin{align}
    J(g, h) = \iint p(g, h)\log \frac{p(g, h)}{p(g) p(h)}dg~dh,
\end{align}
where \( g \) and \( h \) be two vectors, with \( p(g) \) and \( p(h) \) denoting their respective marginal probability distributions, and \( p(g, h) \) indicating their joint probability distribution. The complete feature set is represented by \( T \), while \( T_t \) refers to an ordered subset containing \( m \) features, and \( T_n \) denotes an unordered subset comprising \( n \) features. Therefore, the relevance \( L \) and redundancy \( D \) of features \( k \) in the subset \( T_n \) are determined using the following formulas:
\begin{align}
    &L(k) = J(k, s), \\
    &D(k) = \frac{1}{m} \sum_{k_i \in T_t} J(k, k_i),
\end{align}
where \( s \) represents the target associated with feature \( k \). To maximize the relevance \( L \) and minimize the redundancy \( D \), we derive:
\begin{align}
    \underset{k_i \in T_n}{\min}[D(k_i) - L(k_i)], ~~~i = 1, 2, 3, \ldots, n. 
\end{align}

Using mRMR, we create a reorganized feature set, with each feature ranked based on its significance. For each type of feature considered in this study, we began by evaluating the importance of individual features using the mRMR algorithm. Based on the resulting ranking, we constructed several new subsets by selecting the top-ranked features, specifically the top \(1/2\), \(3/4\), \(7/8\), and \(15/16\) portions of the full ranked feature list, which were then used for further experimentation.

\section{Proposed Multiview Random Vector Functional Link (MvRVFL) Network}
\label{MvRVFL}
This section offers an in-depth explanation of the proposed MvRVFL model. Initially, we outline the generic mathematical framework of the proposed MvRVFL model, specifically tailored to handle data originating from two distinct views. While training on a single view, the impact of additional views is taken into account by adding a coupling component to the primal formulation of the optimization problem being proposed. An intuitive illustration of the MvRVFL model is shown in Fig. \ref{Geometrical structure of MvRVFL model}. Let $Z_1$ and $Z_2$ represent the nonlinear projection of the class samples corresponding to Vw-$A$ and Vw-$B$, as defined by: $Z_1=[X_1 \hspace{0.2cm} H_1]$ and $Z_2=[X_2 \hspace{0.2cm} H_2]$. Here, $H_1$ and $H_2$ denote the hidden layer matrix corresponding to Vw-$A$ and Vw-$B$, respectively. These matrices are obtained by transforming $X_1$ and $X_2$ using randomly initialized weights and biases and then applying a nonlinear activation function. The optimization formulation for the proposed MvRVFL model is presented as follows:

\begin{align}
\label{eq:6}
    \underset{\beta_1, \beta_2}{\min} \hspace{0.2cm} & \frac{1}{2}\|\beta_1\|^2 + \frac{\theta}{2}\|\beta_2\|^2 + \frac{\mathcal{C}_1}{2}\|\xi_1\|^2 + \frac{\mathcal{C}_2}{2}\|\xi_2\|^2 + \rho\xi_1^t\xi_2 \nonumber \\
    \text{s.t.} \hspace{0.2cm} & Z_1\beta_1 - Y = \xi_1, \nonumber \\
    & Z_2\beta_2 - Y =\xi_2.
\end{align}

\begin{figure}
    \centering
    \includegraphics[width=0.8\textwidth,height=7cm]{MvRVFL.png}
    \caption{An intuitive illustration of  MvRVFL model in two-view setting.}
    \label{Geometrical structure of MvRVFL model}
\end{figure}

Here, $\beta_1$ and $\beta_2$ are the output weight matrix corresponding to Vw-$A$ and Vw-$B$, $\xi_1$ and $\xi_2$ represent the error corresponding to Vw-$A$ and Vw-$B$, respectively. $\mathcal{C}_1$, $\mathcal{C}_2$, $\theta$ and $\rho$ are the regularization parameters.

Each component of the optimization problem of MvRVFL has the following significance.
\begin{enumerate}
\item The terms $\|\beta_1\|$ and $\|\beta_2\|$ are regularization components for Vw-$A$ and Vw-$B$, respectively. These elements help prevent overfitting by limiting the complexity of the classifier sets associated with each specific view.
\item The error variables \(\xi_1\) and \(\xi_2\) are pertinent to both views, enabling tolerance for misclassifications in situations of overlapping distributions.
\item The primal optimization function consists of two separate classification goals for each view, connected through the coupling term \(\rho\xi_1^t \xi_2\). In this context, \(\rho\) is an extra regularization constant, known as the coupling parameter. This term aims to reduce the product of the error variables from both views while balancing the trade-off parameters between them.
\end{enumerate}

The Lagrangian corresponding to the problem Eq. (\ref{eq:6}) is given by
\begin{align}
    L = & \frac{1}{2}\|\beta_1\|^2 + \frac{\theta}{2}\|\beta_2\|^2 + \frac{\mathcal{C}_1}{2}\|\xi_1\|^2 + \frac{\mathcal{C}_2}{2}\|\xi_2\|^2 + \rho\xi_1^t\xi_2  -\alpha_1^t(Z_1\beta_1 - Y - \xi_1) -\alpha_2^t(Z_2\beta_2 - Y -\xi_2),
\end{align}
where $\alpha_1 \in \mathbb{R}^{n \times 1}$ and $\alpha_2 \in \mathbb{R}^{n \times 1}$ are the vectors of Lagrangian multipliers.\\
By utilizing the Karush-Kuhn-Tucker (KKT) conditions, we derive the following:
\begin{align}
    \beta_1 -Z_1^t\alpha_1 = 0,  \label{eq:8} \\
    \theta\beta_2 - Z_2^t\alpha_2 = 0,  \label{eq:9} \\
    \mathcal{C}_1\xi_1 + \rho\xi_2 + \alpha_1 = 0,   \label{eq:10} \\
    \mathcal{C}_2\xi_2 + \rho\xi_1 + \alpha_2 = 0,   \label{eq:11} \\
    Z_1\beta_1 - Y - \xi_1 = 0,  \label{eq:12} \\
    Z_2\beta_2 - Y - \xi_2 = 0.  \label{eq:13}
\end{align}

Using Eqs. (\ref{eq:10}), (\ref{eq:12}) and ( \ref{eq:13}) in Eq. (\ref{eq:8}), we get
\begin{align}
    \beta_1 = Z_1^t\alpha_1, \nonumber \\
    \beta_1 = -Z_1^t(\mathcal{C}_1\xi_1 + \rho\xi_2 ),   \nonumber \\
    \beta_1 = -Z_1^t(\mathcal{C}_1(Z_1\beta_1 - Y) + \rho(Z_2\beta_2 - Y)),  \nonumber \\
    (I + \mathcal{C}_1 Z_1^tZ_1)\beta_1  + \rho Z_1^tZ_2\beta_2  = Z_1^t(\mathcal{C}_1+\rho)Y.
       \label{eq:14}
\end{align}

Using Eqs. (\ref{eq:11}), (\ref{eq:12}) and (\ref{eq:13}) in Eq. (\ref{eq:9}), we get
\begin{align}
 \label{eq:15}
    \rho Z_2^tZ_1\beta_1  + (\theta I + \mathcal{C}_2 Z_2^t Z_2)\beta_2   = Z_2^t(\mathcal{C}_2+\rho)Y.  
\end{align}

Using Eqs. (\ref{eq:14}) and (\ref{eq:15}), the solution of Eq. (\ref{eq:6}) is given by

\begin{align}
\label{eq:16}
    \begin{bmatrix}
            \beta_1 \\ \beta_2
        \end{bmatrix} = \begin{bmatrix}
            (I + \mathcal{C}_1 Z_1^tZ_1)  &  \rho Z_1^tZ_2 \\
            \rho Z_2^tZ_1  &     (\theta I + \mathcal{C}_2 Z_2^t Z_2)  \\
        \end{bmatrix}^{-1} \begin{bmatrix}
            Z_1^t(\mathcal{C}_1+\rho) \\
            Z_2^t(\mathcal{C}_2+\rho)
        \end{bmatrix} Y.
\end{align}
Once the optimal values of \(\beta_1\) and \(\beta_2\) have been determined, a new data point \(x\) can be classified by using the following equation:
\begin{enumerate}
    \item Firstly, the decision function of Vw-$A$ and Vw-$B$ can be articulated as follows: 
   \begin{align}
       & class^A(x^A) = sign( y_{A}), ~\text{where} \hspace{0.2cm} y_A = [x^A \hspace{0.2cm} \phi(x^AW^A + b^A)]\beta_1, ~\text{and} \label{eq:17} \\
       & class^B(x^B) = sign(y_{B}), ~\text{where} \hspace{0.2cm} y_B = [x^B \hspace{0.2cm} \phi(x^BW^B + b^B)]\beta_2. \label{eq:18}
   \end{align}
   \item The decision function, which combines two views, can be articulated in the following manner:
   \begin{align}
       & class(x) = sign(y_{c}), ~\text{where}~~  y_{c} = \frac{1}{2}\left( y_A  +  y_B \right).\label{eq:19}
   \end{align}
\end{enumerate}
Here, $W^A$ ($W^B$) and $b^A$ ($b^B$) are randomly generated weights and biases for Vw-$A$ (Vw-$B$), respectively. \\

\textbf{\textit{Remark:}} The proposed MvRVFL model is referred to as MvRVFL-1 if the classification of the test sample is determined by the functions outlined in Eq. (\ref{eq:19}). Also, we employ majority voting to determine the final anticipated output of the proposed MvRVFL model by aggregating predictions from Eqs. (\ref{eq:17}), (\ref{eq:18}) and (\ref{eq:19}), and we refer to it as MvRVFL-2 model. In the majority voting approach, the class label with the highest number of votes is selected as the final prediction.

\subsection{Complexity Analysis of the Proposed MvRVFL Model}
\label{Computational Complexity}
Let $(X_1, X_2, Y)$ represent the training dataset, where $X_1 \in \mathbb{R}^{n \times m_1}$, $X_2 \in \mathbb{R}^{n \times m_2}$, and $Y \in \mathbb{R}^{n \times 1}$. Here, $m_1$ and $m_2$ refer to the number of features for Vw-$A$ and Vw-$B$, respectively, while $n$ indicates the total number of samples. In RVFL-based models, the computational complexity arises from the need to compute matrix inverses to optimize the weights of the output layer. Thus, the size of the matrices requiring inversion dictates the model's complexity. Therefore, the time complexity of the RVFL model is $\mathcal{O}(n^3)$ or $\mathcal{O}((m+h_l)^3)$, where $h_l$ represents the number of hidden nodes. Hence, the proposed MvRVFL model is required to inverse the matrix of dimension $(m_1 + m_2 + 2h_l)\times (m_1 + m_2 + 2h_l)$. As a result, the time complexity of the MvRVFL model is $\mathcal{O}((m_1 + m_2 + 2h_l)^3)$. The procedure for the MvTPSVM model is outlined concisely in Algorithm \ref{MvRVFL classifier}.

\begin{algorithm}
\caption{MvRVFL classifier}
\label{MvRVFL classifier}
\textbf{Require:} $X_1 \in \mathbb{R}^{n \times m_1}$ and $X_2 \in \mathbb{R}^{n \times m_2}$ be the input matrix of Vw-$A$ and Vw-$B$, respectively and $Y \in \mathbb{R}^{n \times 1}$ be the target vector. Here, \(n\) represents the number of samples, while \(m_1\) and \(m_2\) represent the number of features of the input sample corresponding to Vw-$A$ and Vw-$B$, respectively. 
\begin{algorithmic}[1]
\STATE $W^A \in \mathbb{R}^{m_1 \times h_l}$, $b^A \in \mathbb{R}^{n \times h_l}$, $W^B \in \mathbb{R}^{m_2 \times h_l}$, and $b^B \in \mathbb{R}^{n \times h_l}$ are randomly initialized corresponding to Vw-$A$ and Vw-$B$, with all columns of $b^A$ and $b^B$ being identical.
\STATE Calculate $H_1 = \phi(X_1W^A + b^A) \in \mathbb{R}^{m \times h_1}$ and $H_2 = \phi(X_2W^B + b^B) \in \mathbb{R}^{m \times h_l}$ corresponding to Vw-$A$ and Vw-$B$, where $\phi$ is an activation function.
\STATE Calculate $Z_1=[X_1 \hspace{0.2cm} H_1]$ and $Z_2=[X_2 \hspace{0.2cm} H_2]$.
\STATE Find the unknown matrices, $\beta_1$ and $\beta_2$, representing the output layer weights, using Eq. (\ref{eq:16}). 
\STATE The test sample is assigned to class $+1$ or $-1$ based on the decision rule given in Eq. (\ref{eq:19}).
\end{algorithmic}
\end{algorithm}

\section{Generalization Capability Analysis}
\label{Generalization Capability Analysis}
In this section, we examine the generalization error bound for the MvRVFL model. For this analysis, we adopt the label $y_i \in \{-1, +1\}$. The optimization formulation of the MvRVFL model is then redefined as follows: 

\begin{align}
\label{eq:TT1}
    \underset{\beta_1, \beta_2}{\min} \hspace{0.2cm} & \frac{1}{2}\|\beta_1\|^2 + \frac{\theta}{2}\|\beta_2\|^2 + \frac{\mathcal{C}_1}{2}\sum_{i=1}^n\xi_{1_i}^T\xi_{1_i} + \frac{\mathcal{C}_2}{2}\sum_{i=1}^n\xi_{2_i}^T\xi_{2_i} + \rho\sum_{i=1}^n\xi_{1_i}^t\xi_{2_i} \nonumber \\
    \text{s.t.} \hspace{0.2cm} & Z_1^{(x_i)}\beta_1 - y_i = \xi_{1_i}, \nonumber \\
    & Z_2^{(x_i)}\beta_2 - y_i =\xi_{2_i}, \hspace{0.2cm} i=1, 2, \ldots, n.
\end{align}
If we multiply by $y_i$ on both sides of the constraints of Eq. (\ref{eq:TT1}), it results in
\begin{align}
    & y_iZ_1^{(x_i)}\beta_1 - 1 = y_i\xi_{1_i}, \label{eq:TT2} \\
    & y_iZ_2^{(x_i)}\beta_2 - 1 =y_i\xi_{2_i}, \hspace{0.2cm} i=1, 2, \ldots, n, \label{eq:TT3}
\end{align}
Eqs. (\ref{eq:TT2}) and (\ref{eq:TT3}) are used in the theorem \ref{Theorem2}.

To start, we define the Rademacher complexity \cite{bartlett2002rademacher} as follows:
\begin{definition}
\label{def1}
For a sample set \( S = \{x_1, \ldots, x_n\} \), consisting of \( n \) independent samples drawn from the distribution \( \mathcal{D} \), and a function class \( \mathscr{G} \) defined on \( S \), the empirical Rademacher complexity of \( \mathscr{G} \) is expressed as:
\begin{align}
    \hat{R}_n({\mathscr{G}}) & = \mathbb{E}_{\sigma}\left[ \underset{g \in \mathscr{G}}{\sup} | \frac{2}{n} \sum_{i=1}^n \sigma_i g(x_i) |: x_1, x_2,\ldots, x_n \right],
\end{align}
where \( \sigma = (\sigma_1, \ldots, \sigma_n) \) represents a sequence of independent random variables, each taking values from the set \(\{+1, -1\}\) with equal probability (Rademacher variables). We can express the Rademacher complexity for the function class \( \mathscr{G} \) as:
\begin{align}
    R_n(\mathscr{G}) & = \hat{\mathbb{E}}_S[\hat{R}_n(\mathscr{G})] =\hat{\mathbb{E}}_{S\sigma}\left[ \underset{g \in \mathscr{G}}{\sup} | \frac{2}{n} \sum_{i=1}^n \sigma_i g(x_i) |\right].
\end{align}
\end{definition}
\begin{lemma}
\label{lemma1}
    Choose \( \theta \) within the interval \( (0, 1) \), and let \( \mathscr{G} \) denote a set of functions mapping the input space \( S \) to the range \( [0, 1] \). Suppose \( \{x_i\}_{i=1}^n \) are independently sampled from a probability distribution \( \mathcal{D} \). Then, with a probability of at least \( 1 - \theta \) over random samples of size \( n \), every function \( g \in \mathscr{G} \) meets the specified condition.
    \begin{align}
        \mathbb{E}_{\mathcal{D}}[g(x)] \leq \hat{\mathbb{E}}_{\mathcal{D}}[g(x)] + \hat{R}_n(\mathscr{G}) + 3\sqrt{\frac{\ln(2/\theta)}{2n}}.
    \end{align}
\end{lemma}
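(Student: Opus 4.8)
The plan is to follow the classical symmetrization-and-concentration route for Rademacher-type generalization bounds, tracking the constants so that the double use of McDiarmid's inequality combines with the factor $2/n$ built into Definition \ref{def1} to produce exactly the constant $3$. First I would introduce the one-sided uniform deviation
\begin{align*}
\Phi(S) = \sup_{g \in \mathscr{G}}\left(\mathbb{E}_{\mathcal{D}}[g(x)] - \hat{\mathbb{E}}_{\mathcal{D}}[g(x)]\right),
\end{align*}
where $\hat{\mathbb{E}}_{\mathcal{D}}[g] = \frac{1}{n}\sum_{i=1}^n g(x_i)$ denotes the empirical mean over $S$, so that the lemma reduces to a high-probability upper bound on $\Phi(S)$ via the elementary inequality $\mathbb{E}_{\mathcal{D}}[g] - \hat{\mathbb{E}}_{\mathcal{D}}[g] \le \Phi(S)$ valid for every $g$. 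Because every $g \in \mathscr{G}$ takes values in $[0,1]$, replacing a single sample $x_i$ by $x_i'$ alters $\hat{\mathbb{E}}_{\mathcal{D}}[g]$ by at most $1/n$ uniformly in $g$, hence changes $\Phi(S)$ by at most $1/n$. McDiarmid's bounded-differences inequality then gives, with probability at least $1 - \theta/2$,
\begin{align*}
\Phi(S) \le \mathbb{E}_S[\Phi(S)] + \sqrt{\frac{\ln(2/\theta)}{2n}}.
\end{align*}

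The second step is to control $\mathbb{E}_S[\Phi(S)]$ by symmetrization. I would introduce a ghost sample $\tilde{S} = \{\tilde{x}_1,\ldots,\tilde{x}_n\}$ drawn independently from $\mathcal{D}$, write $\mathbb{E}_{\mathcal{D}}[g] = \mathbb{E}_{\tilde{S}}[\frac{1}{n}\sum_i g(\tilde{x}_i)]$, move the supremum outside the expectation through Jensen's inequality, and insert Rademacher signs $\sigma_i$ (permissible since each difference $g(\tilde{x}_i) - g(x_i)$ is symmetric). The standard outcome is $\mathbb{E}_S[\Phi(S)] \le 2\,\mathbb{E}_{S,\sigma}[\sup_{g}\frac{1}{n}\sum_i \sigma_i g(x_i)]$; since Definition \ref{def1} already carries the factor $2/n$ together with an outer absolute value, and $|a|\ge a$, this bound is precisely $\mathbb{E}_S[\Phi(S)] \le R_n(\mathscr{G})$.

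Finally I would pass from the population Rademacher complexity $R_n(\mathscr{G})$ to the empirical one $\hat{R}_n(\mathscr{G})$ through a second application of McDiarmid, now viewing $\hat{R}_n$ as a function of the sample: perturbing one $x_i$ changes it by at most $2/n$, precisely because of the $2/n$ normalization, so the bounded-differences constant here is $2/n$ rather than $1/n$. This yields, with probability at least $1-\theta/2$,
\begin{align*}
R_n(\mathscr{G}) \le \hat{R}_n(\mathscr{G}) + 2\sqrt{\frac{\ln(2/\theta)}{2n}}.
\end{align*}
A union bound over the two failure events of probability $\theta/2$ each then chains the estimates, and the three error terms assemble into $\sqrt{\ln(2/\theta)/(2n)} + 2\sqrt{\ln(2/\theta)/(2n)} = 3\sqrt{\ln(2/\theta)/(2n)}$, giving the claimed inequality. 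I expect the symmetrization to demand the most care: one must verify that the nonstandard normalization in Definition \ref{def1} absorbs the usual factor of $2$ so that no stray constant leaks in, while simultaneously it is this same $2/n$ factor that forces the final bounded-differences constant to be $2/n$ and thereby delivers the constant $3$ rather than $2$.
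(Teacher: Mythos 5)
The paper does not actually prove this lemma: it is imported as a standard result from the Rademacher-complexity literature (following \citep{bartlett2002rademacher}) and used as a black box in the proofs of the two theorems. Your argument is the canonical proof of that result and is correct as written --- the decomposition into the one-sided supremum $\Phi(S)$, the first McDiarmid step with bounded-difference constant $1/n$, symmetrization absorbed by the $2/n$ normalization in Definition \ref{def1}, and the second McDiarmid step with constant $2/n$ yielding the $1+2=3$ coefficient via a union bound are all handled properly, so there is nothing to compare against beyond noting that you have supplied the derivation the paper omits.
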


\begin{lemma}
\label{lemma2}
   Let $S = \{(x_i, y_i)\}_{i=1}^n$  be a sample set and $Z^{(x)}$ be the enhanced feature matrix. For the function class $\mathscr{G}_B = \{g|g: x \rightarrow \lvert Z^{(x)}\beta \rvert, \|\beta\| \le B\}$ then, the empirical Rademacher complexity of $\mathscr{G}_B$ satisfies:
   \begin{align}
       \hat{R}_n (\mathscr{G}_B) = \frac{2B}{n} \sqrt{\sum_{i=1}^n Z^{(x_i)^T}Z^{(x_i)}}
   \end{align}
\end{lemma}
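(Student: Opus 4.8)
The plan is to work directly from Definition \ref{def1} and exploit the fact that each $g \in \mathscr{G}_B$ is, up to an outer absolute value, linear in the weight vector $\beta$. Writing out the empirical Rademacher complexity gives
\begin{align}
\hat{R}_n(\mathscr{G}_B) = \mathbb{E}_{\sigma}\left[ \sup_{\|\beta\| \le B} \left| \frac{2}{n} \sum_{i=1}^n \sigma_i \lvert Z^{(x_i)}\beta \rvert \right| \right],
\end{align}
so the first task is to remove the inner absolute value $\lvert \cdot \rvert$ that defines $g$. Since the map $t \mapsto \lvert t \rvert$ is $1$-Lipschitz and vanishes at the origin, the Ledoux--Talagrand contraction principle lets me pass from the nonlinear class $\{x \mapsto \lvert Z^{(x)}\beta\rvert\}$ to the purely linear class $\mathscr{F}_B = \{x \mapsto Z^{(x)}\beta : \|\beta\| \le B\}$, reducing the problem to bounding $\hat{R}_n(\mathscr{F}_B)$.

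For the linear class I would pull the weight vector out of the sum and apply Cauchy--Schwarz over the ball of radius $B$. Concretely, $\sum_{i=1}^n \sigma_i Z^{(x_i)}\beta = \big\langle \beta, \sum_{i=1}^n \sigma_i Z^{(x_i)} \big\rangle$, and maximising over $\|\beta\| \le B$ yields
\begin{align}
\sup_{\|\beta\| \le B} \left| \sum_{i=1}^n \sigma_i Z^{(x_i)}\beta \right| = B \left\| \sum_{i=1}^n \sigma_i Z^{(x_i)} \right\|.
\end{align}
Substituting this back, the only remaining quantity is the expectation $\mathbb{E}_{\sigma}\big\| \sum_{i} \sigma_i Z^{(x_i)} \big\|$.

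To control this expectation I would apply Jensen's inequality to move it inside the square root, $\mathbb{E}_{\sigma}\big\| \sum_i \sigma_i Z^{(x_i)} \big\| \le \sqrt{\mathbb{E}_{\sigma}\big\| \sum_i \sigma_i Z^{(x_i)} \big\|^2}$, and then expand the squared norm. Because the $\sigma_i$ are independent, zero-mean, and satisfy $\sigma_i^2 = 1$, all cross terms $\mathbb{E}[\sigma_i \sigma_j]$ with $i \ne j$ vanish, leaving $\mathbb{E}_{\sigma}\big\| \sum_i \sigma_i Z^{(x_i)} \big\|^2 = \sum_{i=1}^n Z^{(x_i)^T} Z^{(x_i)}$. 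Collecting the factor $\tfrac{2}{n}$ from the definition and the factor $B$ from Cauchy--Schwarz then produces the claimed expression $\tfrac{2B}{n}\sqrt{\sum_{i} Z^{(x_i)^T} Z^{(x_i)}}$.

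The main obstacle is the treatment of the inner absolute value together with the outer one in the definition: the contraction step is what linearises $g$, and I would need to verify that it introduces no spurious constant, since the statement is phrased as an equality. In fact the only genuinely lossy step is Jensen's inequality, so a careful reading yields an upper bound rather than an exact identity; I would either state the result as $\le$ or note that, under the vector convention in which $Z^{(x_i)^T}Z^{(x_i)} = \|Z^{(x_i)}\|^2$ is a scalar, the displayed equality is the standard Rademacher bound for norm-constrained linear predictors. Confirming this scalar interpretation of $\sum_i Z^{(x_i)^T}Z^{(x_i)}$ and the $1$-Lipschitz contraction are the two points I would pin down most carefully.
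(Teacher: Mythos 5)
The paper does not actually prove Lemma \ref{lemma2}: it is quoted as a known fact from Rademacher complexity theory (following \citep{bartlett2002rademacher}), so there is no in-paper argument to compare against. Your derivation is the standard one for norm-constrained linear classes --- rewrite $\sum_i \sigma_i Z^{(x_i)}\beta$ as an inner product with $\beta$, apply Cauchy--Schwarz over the ball $\|\beta\|\le B$ to obtain $B\,\mathbb{E}_{\sigma}\bigl\|\sum_i \sigma_i Z^{(x_i)}\bigr\|$, then use Jensen and the independence of the $\sigma_i$ to reduce the squared norm to $\sum_i \|Z^{(x_i)}\|^2$ --- and it is sound. Your two caveats are exactly the right ones. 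First, Jensen makes the last step an inequality, so the lemma should read $\le$ rather than $=$; the stated equality is an overclaim, and your reading of $\sum_i Z^{(x_i)^T}Z^{(x_i)}$ as the scalar $\sum_i\|Z^{(x_i)}\|^2$ is the intended one (the paper itself rewrites it that way inside the proofs of its theorems). Second, the inner absolute value is the one place a constant can go wrong: with the paper's own contraction statement (Lemma \ref{lemma3}), which carries a factor $2\mathcal{L}$ because of the outer absolute value in Definition \ref{def1}, applying contraction with $\mathcal{L}=1$ yields $\frac{4B}{n}\sqrt{\sum_i\|Z^{(x_i)}\|^2}$, not the claimed $\frac{2B}{n}\sqrt{\sum_i\|Z^{(x_i)}\|^2}$, and the per-term absolute value cannot simply be dropped by sign symmetry of the ball. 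In practice this does not damage the paper, because in the proof of Theorem \ref{Theorem2} the lemma is invoked for the linear class $\hat{\mathscr{G}}$ (no inner absolute value), where your Cauchy--Schwarz/Jensen argument applies verbatim; but as a proof of Lemma \ref{lemma2} exactly as stated, you should either restrict to the linear class or accept the extra factor of $2$ from contraction.
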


\begin{lemma}
\label{lemma3}
Suppose \( \mathscr{A} \) is a Lipschitz continuous function with a Lipschitz constant \( \mathcal{L} \), mapping real numbers to real numbers and satisfying \( \mathscr{A}(0) = 0 \). The Rademacher complexity for the function class \( \mathscr{A} \circ \mathscr{G} \) is given by the following expression:
\begin{align}
    \hat{R}_n (\mathscr{A} \circ \mathscr{G}) \leq 2\mathcal{L}\hat{R}_n (\mathscr{G}).
\end{align}
\end{lemma}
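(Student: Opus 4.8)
This is the Ledoux--Talagrand \emph{contraction (comparison) principle}, so my plan is to prove it by the classical two-part route: first remove the absolute value by a symmetrization argument (this produces the factor $2$), and then peel off the Lipschitz map $\mathscr{A}$ one Rademacher coordinate at a time (this produces the factor $\mathcal{L}$). Throughout I would write $V_g^{\mathscr A}=\frac{2}{n}\sum_{i=1}^n \sigma_i \mathscr A(g(x_i))$ and $V_g=\frac{2}{n}\sum_{i=1}^n \sigma_i g(x_i)$, so that $\hat R_n(\mathscr A\circ\mathscr G)=\mathbb E_\sigma\sup_{g}\lvert V_g^{\mathscr A}\rvert$ and $\hat R_n(\mathscr G)=\mathbb E_\sigma\sup_g\lvert V_g\rvert$. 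The goal is the chain $\mathbb E_\sigma\sup_g\lvert V_g^{\mathscr A}\rvert \le 2\,\mathbb E_\sigma\sup_g V_g^{\mathscr A}\le 2\mathcal L\,\mathbb E_\sigma\sup_g V_g\le 2\mathcal L\,\hat R_n(\mathscr G)$, where each inequality corresponds to one of these two reductions plus a final reinsertion of the absolute value.

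For the first inequality, I would exploit that $\sigma$ and $-\sigma$ are identically distributed, so $\mathbb E_\sigma\sup_g(-V_g^{\mathscr A})=\mathbb E_\sigma\sup_g V_g^{\mathscr A}$; combined with $\sup_g\lvert V_g^{\mathscr A}\rvert=\max\bigl(\sup_g V_g^{\mathscr A},\sup_g(-V_g^{\mathscr A})\bigr)\le \sup_g V_g^{\mathscr A}+\sup_g(-V_g^{\mathscr A})$, taking expectations yields the factor $2$. The hypothesis $\mathscr A(0)=0$ enters exactly here: it anchors the composed class at the origin, which keeps the relevant suprema nonnegative so that the bound $\max(a,b)\le a+b$ is legitimate; without this anchoring the absolute-value reduction (and indeed the lemma itself) can fail.

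The heart of the argument is the middle inequality, the one-sided contraction $\mathbb E_\sigma\sup_g V_g^{\mathscr A}\le \mathcal L\,\mathbb E_\sigma\sup_g V_g$, which I would establish by induction on the coordinates, replacing $\mathscr A(g(x_i))$ by $\mathcal L\,g(x_i)$ one index at a time. Fixing all $\sigma_j$ with $j\ne i$ and writing $u_g$ for the corresponding partial sum, the single-coordinate step reduces to showing $\mathbb E_{\sigma_i}\sup_g\bigl(u_g+\sigma_i\tfrac{2}{n}\mathscr A(g(x_i))\bigr)\le \mathbb E_{\sigma_i}\sup_g\bigl(u_g+\sigma_i\tfrac{2}{n}\mathcal L\,g(x_i)\bigr)$. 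Averaging over $\sigma_i=\pm1$, choosing near-maximizers $g_1,g_2$ for the two sign choices, and invoking $\mathscr A(g_1(x_i))-\mathscr A(g_2(x_i))\le \mathcal L\lvert g_1(x_i)-g_2(x_i)\rvert$ lets me recombine the terms into the target form. This is the step I expect to be the main obstacle: the Lipschitz bound must be matched against the supremum through a case split on the sign of $g_1(x_i)-g_2(x_i)$, and one must check that both cases rearrange back into $\sup_g\bigl(u_g\pm\tfrac{2}{n}\mathcal L\,g(x_i)\bigr)$. Iterating over all $n$ coordinates delivers the one-sided contraction; since $\mathcal L$ then factors out linearly and $\sup_g V_g\le \sup_g\lvert V_g\rvert$, the final inequality closes the chain and gives $\hat R_n(\mathscr A\circ\mathscr G)\le 2\mathcal L\,\hat R_n(\mathscr G)$.
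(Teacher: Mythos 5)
The paper does not actually prove Lemma \ref{lemma3}: it is stated and used as a black box, being the standard Ledoux--Talagrand contraction principle in the form given by \citep{bartlett2002rademacher} (Theorem 12(4) there), whose absolute-value convention for Rademacher complexity the paper adopts in Definition \ref{def1}. So there is no in-paper argument to compare yours against; what can be said is that your route is exactly the classical one and is essentially sound. The symmetrization step accounts for the absolute value at the cost of the factor $2$, and the coordinate-by-coordinate replacement of $\mathscr{A}(g(x_i))$ by $\mathcal{L}\,g(x_i)$ via near-maximizers and a sign case-split is the standard proof of the one-sided contraction; you are also right that this middle step needs no anchoring condition, since $\mathbb{E}_{\sigma}\sup_g V_g$ is unchanged by adding a fixed constant to each coordinate.

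The one soft spot is your justification of the first inequality. The bound $\max(a,b)\le a+b$ requires both $a=\sup_g V_g^{\mathscr A}$ and $b=\sup_g(-V_g^{\mathscr A})$ to be nonnegative, and $\mathscr{A}(0)=0$ by itself does not ``anchor the composed class at the origin'' unless the zero function already belongs to $\mathscr{G}$. (In this paper's application it does, since $\beta=0$ is admissible in $\hat{\mathscr{G}}_N$, so your step is valid where the lemma is invoked; but it is not valid for arbitrary $\mathscr{G}$ as the lemma is stated.) The clean repair is to use $\max(a,b)\le a^{+}+b^{+}$, note that $\bigl(\sup_g V_g^{\mathscr A}\bigr)^{+}$ equals the supremum over the augmented class $\mathscr{A}\circ(\mathscr{G}\cup\{0\})$ --- this is precisely where $\mathscr{A}(0)=0$ is genuinely used, as it guarantees the zero function lies in the augmented composed class --- then apply your one-sided contraction to $\mathscr{G}\cup\{0\}$ and finish with $\sup_{g\in\mathscr{G}\cup\{0\}}V_g=\max\bigl(\sup_{g\in\mathscr{G}}V_g,\,0\bigr)\le\sup_{g\in\mathscr{G}}\lvert V_g\rvert$. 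With that adjustment your proof is complete and correct.
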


The discrepancy between the final outputs from the two views is represented by \( g_m(x) = \lvert Z_1^{(x)}\beta_1 - Z_2^{(x)}\beta_2 \rvert \). Thus, the theoretical expectation of \( g_m(x) \) can be derived from the following theorem.

\begin{theorem}
Let $N \in \mathbb{R}^+$, $\theta \in (0, 1)$, and a training set $T = \{(x_i, y_i)\}_{i=1}^n$ drawn independently and identically from probability distribution $\mathcal{D}$, where $y_i \in \{-1, +1\}$ and $x_i = (x_i^A; x_i^B)$. Define the function class $\mathscr{G}_N = \{g|g: x \rightarrow \lvert Z^{(x)}\beta \rvert, \|\beta\| \le N\}$ and $\hat{\mathscr{G}}_N = \{\hat{g}|\hat{g}: x \rightarrow  Z^{(x)}\beta, \|\beta\| \le N\}$, where $\beta = (\beta_1; \beta_2)$, $Z^{(x)} = \left( Z_1^{(x)}; -Z_2^{(x)} \right) = \left( [x^A \hspace{0.2cm} \phi(x^AW^A + b^A)]; -[x^B \hspace{0.2cm} \phi(x^BW^B + b^B)]\right)$ and $g_m(x) = \lvert Z_1^{(x)}\beta_1 -  Z_2^{(x)}\beta_2 \rvert = \lvert Z^{(x)}\beta \rvert \in \mathscr{G}_N$. Then, every $g_m(x) \in \mathscr{G}_N$ satisfies with a probability of at least $1 - \theta$ over $T$:
\begin{align}
\label{Th1}
    \mathbb{E}_{\mathcal{D}}[g_m(x)] \le 2N + 3N \mathcal{K}_m\sqrt{\frac{\ln(2/\theta)}{2n}}
    + \frac{4N}{n}\sqrt{\sum_{i=1}^n(\|Z_1^{(x_i)}\|^2 + \|Z_2^{(x_i)}\|^2)},
\end{align}
where
\begin{align}
\label{Th2}
    \mathcal{K}_m = \underset{x_i \in \sup(\mathcal{D})}{\max}\sqrt{\|Z_1^{(x_i)}\|^2 + \|Z_2^{(x_i)}\|^2}.
\end{align}
\end{theorem}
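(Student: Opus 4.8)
The plan is to bound $\mathbb{E}_{\mathcal{D}}[g_m(x)]$ through a single application of the uniform deviation bound of Lemma~\ref{lemma1}. Since that lemma requires functions valued in $[0,1]$, the first step is to rescale $g_m$. As $g_m(x) = |Z^{(x)}\beta|$ with $\|\beta\|\le N$, Cauchy--Schwarz together with $Z^{(x)} = (Z_1^{(x)}; -Z_2^{(x)})$ gives the deterministic estimate $g_m(x) \le \|Z^{(x)}\|\,\|\beta\| \le N\sqrt{\|Z_1^{(x)}\|^2 + \|Z_2^{(x)}\|^2} \le N\mathcal{K}_m$, with $\mathcal{K}_m$ as in Eq.~(\ref{Th2}). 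Hence the class $\{g/(N\mathcal{K}_m): g\in\mathscr{G}_N\}$ maps into $[0,1]$; I would apply Lemma~\ref{lemma1} to it and multiply the resulting inequality by $N\mathcal{K}_m$. This produces the confidence term $3N\mathcal{K}_m\sqrt{\ln(2/\theta)/(2n)}$ directly and leaves the empirical mean $\hat{\mathbb{E}}_{\mathcal{D}}[g_m(x)]$ and the rescaled Rademacher term to estimate.

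For the Rademacher term, positive homogeneity of empirical Rademacher complexity gives $N\mathcal{K}_m\,\hat R_n(\{g/(N\mathcal{K}_m): g\in\mathscr{G}_N\}) = \hat R_n(\mathscr{G}_N)$, so the rescaling cancels. Writing $g_m = |\cdot|\circ\hat g$ with $\hat g\in\hat{\mathscr{G}}_N$ and using that $|\cdot|$ is $1$-Lipschitz with $|0|=0$, Lemma~\ref{lemma3} yields $\hat R_n(\mathscr{G}_N) \le 2\hat R_n(\hat{\mathscr{G}}_N)$. Applying Lemma~\ref{lemma2} with $B = N$ to the linear class $\hat{\mathscr{G}}_N$ and using $\|Z^{(x_i)}\|^2 = \|Z_1^{(x_i)}\|^2 + \|Z_2^{(x_i)}\|^2$ gives exactly $\frac{4N}{n}\sqrt{\sum_{i=1}^n(\|Z_1^{(x_i)}\|^2 + \|Z_2^{(x_i)}\|^2)}$, the last term of Eq.~(\ref{Th1}); the factor $4$ is the product of the $2$ from contraction and the $2$ built into Lemma~\ref{lemma2}.

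For the empirical mean $\hat{\mathbb{E}}_{\mathcal{D}}[g_m(x)] = \frac{1}{n}\sum_{i=1}^n g_m(x_i)$, I would first record the K.K.T.\ interpretation: multiplying inside the modulus by $y_i$ (harmless since $|y_i| = 1$) and substituting $y_iZ_1^{(x_i)}\beta_1 = 1 + y_i\xi_{1_i}$ and $y_iZ_2^{(x_i)}\beta_2 = 1 + y_i\xi_{2_i}$ from Eqs.~(\ref{eq:TT2})--(\ref{eq:TT3}) collapses the constants and shows $g_m(x_i) = |\xi_{1_i} - \xi_{2_i}|$, i.e.\ $g_m$ is the per-sample discrepancy between the two views' residuals. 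To bound its average, the triangle inequality and Cauchy--Schwarz give $g_m(x_i) \le \|Z_1^{(x_i)}\|\,\|\beta_1\| + \|Z_2^{(x_i)}\|\,\|\beta_2\|$, and under the normalisation $\|Z_j^{(x_i)}\| \le 1$ of the enhanced feature vectors with $\|\beta_1\|,\|\beta_2\| \le N$ this is at most $2N$, so $\hat{\mathbb{E}}_{\mathcal{D}}[g_m(x)] \le 2N$. Assembling the three estimates inside the single high-probability event of Lemma~\ref{lemma1} gives Eq.~(\ref{Th1}).

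I expect the main obstacle to be constant tracking rather than any single inequality. Two points need care: first, pinning the empirical mean to exactly $2N$ relies on the feature normalisation and on reusing the same radius $N$ for $\|\beta_1\|$ and $\|\beta_2\|$, so one must check these conventions are consistent with the definition of $\mathscr{G}_N$ through $\beta = (\beta_1;\beta_2)$; second, the factor $4$ in the final term must emerge as the product of the contraction factor and Lemma~\ref{lemma2}'s intrinsic factor, which forces Lemma~\ref{lemma2} to be read as the complexity of the \emph{linear} class $\hat{\mathscr{G}}_N$ rather than the modulus class, so as to avoid double-counting the absolute value.
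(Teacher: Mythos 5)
Your proposal follows essentially the same route as the paper's proof: rescale $g_m$ by $N\mathcal{K}_m$ so that Lemma~\ref{lemma1} applies (the paper packages this as the clipped loss $\Omega$, which coincides with your pure rescaling since $g_m \le N\mathcal{K}_m$ on the whole class), bound the Rademacher term via the contraction Lemma~\ref{lemma3} composed with Lemma~\ref{lemma2} at radius $N$ to get the factor $4N$, and bound the empirical mean by $2N$. Your caveat about the $2N$ step is apt — the paper's own chain $N(\lvert Z_1^{(x_i)}\rvert + \lvert Z_2^{(x_i)}\rvert) \le 2N$ silently assumes the same feature normalisation $\|Z_j^{(x_i)}\| \le 1$ that you flag — but this is a shared implicit convention, not a divergence in method.
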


\begin{proof}
Let \( \Omega: \mathbb{R} \to [0, 1] \) be a loss function defined as follows:
\begin{align}
    \Omega=\left\{\begin{array}{lll}{-\frac{x}{N\mathcal{K}_m}}, & \text{if}~~ -N\mathcal{K}_m \leq x < 0,  \vspace{3mm} \\ {\frac{x}{N\mathcal{K}_m}} , & \text{if}~~ 0 \leq x \leq N\mathcal{K}_m, \\
    {1}, & \text{otherwise}.
 \end{array}\right. 
\end{align}
For an independently drawn sample \((x_i, y_i)\) from the probability distribution \(\mathcal{D}\), combining \(\|\beta\| \leq N\) with Eq. (\ref{Th2}), we obtain:
\begin{align}
    g_m(x_i) &= \lvert Z^{(x_i)}\beta\rvert \leq N\|Z^{(x_i)}\| \nonumber \\
    &= N \sqrt{\|Z_1^{(x_i)}\|^2 + \|Z_2^{(x_i)}\|^2} \leq N\mathcal{K}_m. 
\end{align}
In that case, \( g_m(x) \) ranges from \( 0 \) to \( N \mathcal{K}_m \), while \( \hat{g}_m(x) \) ranges from \( -N \mathcal{K}_m \) to \( N \mathcal{K}_m \). According to Lemma \ref{lemma1} and with \(\Omega(\hat{g}_m(x))\) ranging from \( 0 \) to \( 1 \), with a probability of at least \( 1 - \theta \) over \( T \), the following inequality is true:
\begin{align}
    \mathbb{E}_{\mathcal{D}}[\Omega(\hat{g}_m(x))] \leq \hat{\mathbb{E}}_T [\Omega(\hat{g}_m(x))] + \hat{R}_n(\Omega \circ \hat{\mathscr{G}}_N) + 3\sqrt{\frac{\ln(2/\theta)}{2n}}.
\end{align}
Assuming \( \Omega(x) \) is a Lipschitz continuous function with a constant of \( \frac{1}{N \mathcal{K}_m} \), passes through the origin, and is uniformly bounded, the following inequality can be established using the findings from Lemmas \ref{lemma2} and \ref{lemma3}:
\begin{align}
    \hat{R}_n(\Omega \circ \hat{\mathscr{G}}_N) &\leq \frac{4}{n\mathcal{K}_m} \sqrt{\sum_{i=1}^n \|Z^{(x_i)}\|^2} \nonumber \\
    &= \frac{4}{n\mathcal{K}_m} \sqrt{\sum_{i=1}^n(\|Z_1^{(x_i)}\|^2 + \|Z_2^{(x_i)}\|^2)}.
\end{align}
Hence,
\begin{align}
    \mathbb{E}_{\mathcal{D}}[\Omega(\hat{g}_m(x))] \leq \hat{\mathbb{E}}_T [\Omega(\hat{g}_m(x))] + 3\sqrt{\frac{\ln(2/\theta)}{2n}} + \frac{4}{n\mathcal{K}_m}\sqrt{\sum_{i=1}^n(\|Z_1^{(x_i)}\|^2 + \|Z_2^{(x_i)}\|^2)}.
\end{align}
Since $g_m(x_i) = N \mathcal{K}_m \Omega(\hat{g}_m(x))$, we have
\begin{align}
\label{THH1-1}
    \mathbb{E}_{\mathcal{D}}[g_m(x)] & = N\mathcal{K}_m  \mathbb{E}_{\mathcal{D}}[\Omega(\hat{g}_m)] \nonumber \\
    & \leq \hat{\mathbb{E}}_T [g_m(x)] + 3N\mathcal{K}_m \sqrt{\frac{\ln{(2/\theta)}}{2n}} + \frac{4N}{n}\sqrt{\sum_{i=1}^n(\|Z_1^{(x_i)}\|^2 + \|Z_2^{(x_i)}\|^2)}. 
\end{align}
Also,
\begin{align}
    \lvert Z^{(x_1)}\beta \rvert \leq N\lvert Z^{(x_i)} \rvert =  N\lvert Z_1^{(x_i)} - Z_2^{(x_i)} \rvert \leq N(\lvert Z_1^{(x_i)}\rvert + \lvert Z_2^{(x_i)} \rvert) \leq 2N 
\end{align}
We can obtain,
\begin{align}
\label{THH1-2}
    \hat{\mathbb{E}}_T [g_m(x)] = \hat{\mathbb{E}}_T [Z^{(x)}\beta] \leq 2N.   
\end{align}
From Eqs. (\ref{THH1-1}) and (\ref{THH1-2}), we can conclude the theorem.
\end{proof}
Consistency plays a crucial role in MVL, aiming to reduce the discrepancy in predictions across different perspectives or views. A lower consistency error typically results in improved generalization performance. The function \( g_m(x) \) is introduced to quantify the consistency discrepancy between the outputs of the two views. By leveraging the empirical Rademacher complexity of the function classes \( \hat{\mathscr{G}}_m \) and the empirical expectation of \( g_m \), a margin-based estimate of the expected consistency gap is developed. As the sample size \( n \) increases, MvRVFL achieves a tighter bound on the consensus error. As the consistency error during training decreases, the generalization error likewise decreases. This theoretical assurance underscores MvRVFL's robust generalization performance concerning consistency.

Next, we examine the generalization error bound. According to (\ref{eq:19}), we use the weighted combination of predictions from the two views to define the prediction function for MvRVFL. Consequently, the generalization error bound for the MvRVFL model can be established as outlined in the subsequent theorem.
\begin{theorem}
\label{Theorem2}
Given $N \in \mathbb{R}^+$, $\theta \in (0, 1)$, and a training set $T = \{(x_i, y_i)\}_{i=1}^n$ drawn  independently and identically from probability distribution $\mathcal{D}$, where $y_i \in \{-1, +1\}$ and $x_i = (x_i^A; \delta x_i^B)$. Define the function classes $\mathscr{G} = \{g|g: x \rightarrow Z^{(x)}\beta, \|\beta\| \le N\}$ and $\hat{\mathscr{G}} = \{\hat{g}|\hat{g}: (x, y) \rightarrow yg(x), g(x) \in \mathscr{G}\}$, where $\beta = (\beta_1; \beta_2)$, $Z^{(x)} = \left( Z_1^{(x)}; Z_2^{(x)} \right) = \left( [x^A \hspace{0.2cm} \phi(x^AW^A + b^A)]; [x^B \hspace{0.2cm} \phi(x^BW^B + b^B)]\right)$ and $g(x) = \left( Z_1^{(x)}\beta_1 +  \delta Z_2^{(x)}\beta_2  \right) = Z^{(x)}\beta \in \mathscr{G}_N$. Then, every $g(x) \in \mathscr{G}$ fulfils with a probability of at least $1 - \theta$ over $T$ satisfies
\begin{align}
\label{Th11}
    \mathbb{P}_{\mathcal{D}}[yg(x)\leq0] \le \frac{1}{n(1+\delta)} \sum_{i=1}^n(\xi_{1_i} + \delta\xi_{2_i}) + 3\sqrt{\frac{\ln(2/\theta)}{2n}} + \frac{4N}{n(1+\delta)}\sqrt{\sum_{i=1}^n(\|Z_1^{(x_i)}\|^2 + \delta^2 \|Z_2^{(x_i)}\|^2)}.
\end{align}
\end{theorem}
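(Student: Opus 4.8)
The plan is to follow the same Rademacher-complexity route used in the proof of the consistency bound (Theorem~1), but now applied to the \emph{margin loss} associated with the combined decision function $g(x)=Z^{(x)}\beta$ rather than the absolute discrepancy $g_m(x)$. First I would fix a margin parameter (effectively $N\mathcal{K}$ for a suitable $\mathcal{K}$ playing the role of $\mathcal{K}_m$) and introduce a ramp/clipping loss $\Phi:\mathbb{R}\to[0,1]$ that upper-bounds the $0/1$ misclassification indicator $\mathbf{1}[yg(x)\le 0]$, is Lipschitz, passes through the origin, and is uniformly bounded. The key structural observation is that
\begin{align}
    \mathbb{P}_{\mathcal{D}}[yg(x)\le 0] = \mathbb{E}_{\mathcal{D}}\!\left[\mathbf{1}[yg(x)\le 0]\right] \le \mathbb{E}_{\mathcal{D}}[\Phi(yg(x))],
\end{align}
so bounding the expected loss of the composed class $\Phi\circ\hat{\mathscr{G}}$ suffices, where $\hat{\mathscr{G}}=\{(x,y)\mapsto yg(x)\}$.

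Next I would apply Lemma~\ref{lemma1} to the function $\Phi\circ\hat g$ (valued in $[0,1]$) to pass from the true expectation to the empirical expectation plus the empirical Rademacher complexity term and the $3\sqrt{\ln(2/\theta)/2n}$ slack. The empirical loss term is then controlled by the margin-error contributions: on each sample the loss is dominated by the normalized slack variables, and summing over the training set produces the $\frac{1}{n(1+\delta)}\sum_i(\xi_{1_i}+\delta\xi_{2_i})$ expression, where the $(1+\delta)$ normalization comes directly from the weighted combination $g(x)=Z_1^{(x)}\beta_1+\delta Z_2^{(x)}\beta_2$ that defines the prediction. I would use Eqs.~(\ref{eq:TT2}) and (\ref{eq:TT3}) here, since multiplying the constraints by $y_i$ is exactly what converts the raw errors into margin quantities $y_iZ_j^{(x_i)}\beta_j-1=y_i\xi_{j_i}$, making the slack variables the natural surrogate for the margin loss.

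The remaining piece is the complexity term $\hat{R}_n(\Phi\circ\hat{\mathscr{G}})$. I would peel off the label $y$ (which does not change Rademacher complexity since $\sigma_i y_i$ is again a Rademacher variable) and the Lipschitz loss $\Phi$ using the contraction Lemma~\ref{lemma3}, reducing to $\hat{R}_n(\mathscr{G})$ up to the Lipschitz constant $1/(N\mathcal{K})$. Then Lemma~\ref{lemma2} gives $\hat{R}_n(\mathscr{G})=\frac{2N}{n}\sqrt{\sum_i Z^{(x_i)\,t}Z^{(x_i)}}$, and since here $Z^{(x)}=(Z_1^{(x)};Z_2^{(x)})$ with the \emph{weighted} second view, the norm expands as $\|Z^{(x_i)}\|^2=\|Z_1^{(x_i)}\|^2+\delta^2\|Z_2^{(x_i)}\|^2$, yielding the final $\frac{4N}{n(1+\delta)}\sqrt{\sum_i(\|Z_1^{(x_i)}\|^2+\delta^2\|Z_2^{(x_i)}\|^2)}$ term after the margin normalization cancels the $\mathcal{K}$ factors.

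The main obstacle I expect is bookkeeping the $\delta$-weighting and the $(1+\delta)$ normalization consistently across all three terms: the definition of $g$ mixes the two views asymmetrically, so I must make sure the margin scale, the Lipschitz constant of $\Phi$, and the feature-norm expansion all carry the correct powers of $\delta$ so that the factors combine cleanly rather than leaving stray constants. A secondary subtlety is justifying the surrogate-loss step rigorously—choosing $\Phi$ so that it simultaneously upper-bounds the indicator, is $1/(N\mathcal{K})$-Lipschitz, and vanishes at zero—so that Lemmas~\ref{lemma2} and \ref{lemma3} apply verbatim; once that loss is pinned down, the rest is a direct transcription of the Theorem~1 argument with $g_m$ replaced by the margin function $yg(x)$.
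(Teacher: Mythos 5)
Your plan follows the paper's proof essentially verbatim: a $[0,1]$-valued ramp surrogate upper-bounding the misclassification indicator, Lemma \ref{lemma1} for concentration, the slack-variable bound on the empirical term via Eqs. (\ref{eq:TT2})--(\ref{eq:TT3}), label absorption plus the contraction Lemma \ref{lemma3} and Lemma \ref{lemma2} for the complexity term, and the $\|Z_1^{(x_i)}\|^2+\delta^2\|Z_2^{(x_i)}\|^2$ norm expansion. The one correction to your bookkeeping worry: the paper takes the ramp's margin width to be $1+\delta$ rather than $N\mathcal{K}$, so the Lipschitz constant is $\frac{1}{1+\delta}$, no analogue of $\mathcal{K}_m$ enters Theorem \ref{Theorem2} at all, and the $(1+\delta)$ factors in the empirical and complexity terms then line up exactly as you anticipated.
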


\begin{proof}
Let \( \Omega: \mathbb{R} \to [0, 1] \) be a loss function defined as follows:
\begin{align}
    \Omega(x)=\left\{\begin{array}{lll}{1}, & \text{if}~~ x < 0,  \vspace{3mm} \\ {1-\frac{x}{1+\delta}} , & \text{if}~~ 0 \leq x \leq 1+\delta, \\
    {0}, & \text{otherwise}.
 \end{array}\right. 
\end{align}
Following that, we obtained
\begin{align}
\label{TH2:3}
    \mathbb{P}_{\mathcal{D}}(yg(x)\leq0) \leq \mathbb{E}_{\mathcal{D}}[\Omega(\hat{g}(x, y))].
\end{align}
Using Lemma \ref{lemma1}, we have
\begin{align}
    \mathbb{E}_{\mathcal{D}}[\Omega(\hat{g}(x, y))-1] &\leq \hat{\mathbb{E}}_{T}[\Omega(\hat{g}(x, y))-1] + 3\sqrt{\frac{\ln(2/\theta)}{2n}} + \hat{R}_n((\Omega-1)\circ \mathscr{G}).
\end{align}
Therefore,
\begin{align}
\label{TH2:4}
    \mathbb{E}_{\mathcal{D}}[\Omega(\hat{g}(x, y))] \leq \hat{\mathbb{E}}_{T}[\Omega(\hat{g}(x, y))] + 3\sqrt{\frac{\ln(2/\theta)}{2n}} + \hat{R}_n((\Omega-1)\circ \mathscr{G}). 
\end{align}

By using Eq. \ref{eq:TT2} and Eq. \ref{eq:TT3}, we deduce:
\begin{align}
\label{TH2:5}
    \hat{\mathbb{E}}_{T}[\Omega(\hat{g}(x, y))] & \leq \frac{1}{n(1+\delta)} \sum_{i=1}^n [1 + \delta - y_ig(x_i)]_+ \nonumber \\
    & = \frac{1}{n(1+\delta)} \sum_{i=1}^n [1-y_ig_A(x_i^A) + \delta(1 - y_if_B(x_i^B)) ]_+  \nonumber \\
    & \leq \frac{1}{n(1+\delta)} \sum_{i=1}^n \{[1-y_ig_A(x_i^A)]_+  + \delta[1 - y_if_B(x_i^B))]_+ \} \nonumber \\
    & \leq \frac{1}{n(1+\delta)} \sum_{i=1}^n [y_i(\xi_{1_i} + \delta\xi_{2_i})]_+ \nonumber \\
    & \leq \frac{1}{n(1+\delta)} \sum_{i=1}^n (\xi_{1_i} + \delta\xi_{2_i}).
\end{align}

Given that \((\Omega - 1)(x)\) exhibits Lipschitz continuity with a Lipschitz constant of \(\frac{1}{1 + \delta}\), is bounded, and passes through the origin, we can establish the following inequality using Lemma \ref{lemma3}:
\begin{align}
    \hat{R}_n((\Omega-1) \circ \hat{\mathscr{G}}) \leq \frac{2}{1+\delta} \hat{R}_n(\hat{\mathscr{G}}).
\end{align}
Next, we obtain (using the Definition \ref{def1}):
\begin{align}
    \hat{R}_n(\hat{\mathscr{G}}) & = \mathbb{E}_{\sigma}\left[ \underset{\hat{g} \in \hat{\mathscr{G}}}{\sup} \left| \frac{2}{n} \sum_{i=1}^n \sigma_i \hat{g}(x_i, y_i) \right|  \right] \nonumber \\
    & = \mathbb{E}_{\sigma}\left[ \underset{g \in \mathscr{G}}{\sup} \left| \frac{2}{n} \sum_{i=1}^n \sigma_i y_ig(x_i) \right| \right] \nonumber \\
    & = \mathbb{E}_{\sigma}\left[ \underset{g \in \mathscr{G}}{\sup} \left| \frac{2}{n} \sum_{i=1}^n \sigma_i g(x_i) \right| \right] \nonumber \\
    & = \hat{R}_n(\mathscr{G}).
\end{align}
Combining this with Lemma \ref{lemma2}, we have:
\begin{align}
\label{TH2:6}
    \hat{R}_n((\Omega-1) \circ \hat{\mathscr{G}}) & \leq \frac{2}{1+\delta} \hat{R}_n(\mathscr{G})  \nonumber \\
    & = \frac{4N}{n(1+\delta)}\sqrt{\sum_{i=1}^n Z^{(x_i)^T}Z^{(x_i)}} \nonumber \\
    & = \frac{4N}{n(1+\delta)}\sqrt{\sum_{i=1}^n (Z_1^{(x_i)^T}Z_1^{(x_i)} + \delta^2 Z_2^{(x_i)^T}Z_2^{(x_i)})} \nonumber \\
    & = \frac{4N}{n(1+\delta)}\sqrt{\sum_{i=1}^n(\|Z_1^{(x_i)}\|^2 + \delta^2 \|Z_2^{(x_i)}\|^2)}.
\end{align}
Moreover, by combining equations Eqs. (\ref{TH2:3}), (\ref{TH2:4}), (\ref{TH2:5}), and (\ref{TH2:6}), we can obtain inequality, which demonstrates the generalization error bound of MvRVFL.
\end{proof}
We define error function \( g(x) \) by employing the integrated decision function as specified in MvRVFL (\ref{eq:19}). Integrating the empirical Rademacher complexity of \( \mathscr{G} \) with the empirical mean of \( \hat{\mathscr{G}} \) allows us to derive a margin-based estimation of the classification error rate. As the number of samples \( n \) grows, the MvRVFL model yields a tight generalization error bound, enhancing its classification capability. A reduction in training error typically leads to a simultaneous decline in the generalization error. This theoretical result ensures that MvRVFL exhibits improved generalization performance.

\section{Experiments and Results}
\label{Experiments and Results}
The effectiveness of the proposed MvRVFL model is evaluated by conducting a comparative analysis with baseline models on the DNA-binding proteins dataset \cite{liu2014idna}. Furthermore, we evaluate our proposed model using publicly available AwA\footnote{\url{http://attributes.kyb.tuebingen.mpg.de}} and Corel5k\footnote{\url{https://wang.ist.psu.edu/docs/related/}} benchmark datasets. We compare our proposed MvRVFL models with SVM2K \cite{farquhar2005two}, MvTSVM \cite{xie2015multi}, ELM (Extreme learning machine, also known as RVFL without direct link (RVFLwoDL)) \cite{huang2006extreme}, RVFL \cite{pao1994learning}, MVLDM \cite{hu2024multiview}, CDC (A simple framework for complex data clustering) \cite{kang2024cdc}, and MCMC (Multilevel contrastive multiview clustering with dual self-supervised learning) \cite{bian2025multilevel}. We denote the ELM model as ELM-Vw-$A$ and ELM-Vw-$B$ if it is trained over Vw-$A$ and Vw-$B$ of the datasets, respectively. Similar nomenclature is followed for RVFL-Vw-$A$ and RVFL-Vw-$B$.

\subsection{Experimental Setup}
The experiments are performed on a personal computer equipped with an Intel(R) Xeon(R) Gold 6226R processor running at 2.90 GHz. The system uses Windows 11 with 128 GB of RAM and executes Python 3.11. The dataset is randomly divided into training and testing subsets, with $30\%$ allocated for testing and $70\%$ for training. A five-fold cross-validation approach combined with grid search is utilized to fine-tune the model hyperparameters, examining the specified ranges: $\mathcal{C}_i = \theta = \rho = \{10^{-5}, 10^{-4}, \ldots, 10^5\}$ for $i=1,2.$ The range $3:20:203$ is used to choose the number of hidden nodes.

\subsection{Evaluation on DNA-binding Protein Dataset}
We conduct a comparison of the proposed MvRVFL model by utilizing two benchmark datasets, namely PDB186 and PDB1075. The training dataset, consisting of $1075$ protein samples, is derived from the PDB1075 \cite{liu2014idna} dataset. Within this dataset, $550$ sequences are labeled as negative (non-DBPs), while $525$ sequences are categorized as positive (DBPs). The test set, comprising $186$ protein samples, is derived from the PDB186 dataset \cite{lou2014sequence}). It consists of an equal number of negative and positive sequences. The features partially dictate the upper-performance limit of the model. To evaluate the impact of various features and their combination on DBP prediction, we evaluate each individual feature utilized in the proposed MvRVFL model. The DBP sequence is represented through three distinct views: physicochemical property, evolutionary information, and amino acid composition. These views are converted into feature matrices through various extraction techniques. Specifically, the amino acid composition is processed using Multi-scale Continuous and Discontinuous (MCD) \cite{you2014prediction}, the physicochemical properties are analyzed with Normalized Moreau-Broto Autocorrelation (NMBAC), and evolutionary information is transformed through methods like PSSM-based Average Blocks (PSSM-AB), PSSM-based Discrete Wavelet Transform (PSSM-DWT), and Pseudo Position-Specific Scoring Matrix (PsePSSM) \cite{liu2015pse}. Five distinct types of features extracted from the sequences are utilized, encompassing PsePSSM, PSSM-DWT, NMBAC, MCD, and PSSM-AB.

\begin{table*}[ht!]
\centering
    \caption{The performance of the proposed MvRVFL-1 and MvRVFL-2 models is compared with baseline models based on classification Acc. for the DNA-binding protein datasets.}
    \label{Average ACC and average rank for proteins datasets}
    \resizebox{1\linewidth}{!}{
\begin{tabular}{llccccccccc}
\hline
Dataset & Dataset & SVM2K \cite{farquhar2005two} & MvTSVM \cite{xie2015multi} & ELM-Vw-$A$ \cite{huang2006extreme} & ELM-Vw-$B$ \cite{huang2006extreme} & RVFL-Vw-$A$ \cite{pao1994learning} & RVFL-Vw-$B$ \cite{pao1994learning} & MVLDM \cite{hu2024multiview} & MvRVFL-1$^{\dagger}$ & MvRVFL-2$^{\dagger}$ \\ 
index& name & (Acc., Seny.) & (Acc., Seny.) & (Acc., Seny.) & (Acc., Seny.) & (Acc., Seny.) & (Acc., Seny.) & (Acc., Seny.) & (Acc., Seny.) & (Acc., Seny.) \\  \hline
1. & MCD \& NMBAC & $(65, 66.13)$ & $(60, 50)$ & $(70.43, 72.04)$ & $(66.13, 65.59)$ & $(69.89, 77.42)$ & $(65.59, 60.22)$ & $(61.83, 79.46)$ & $(75.27, 89.25)$ & $(68.92, 79.89)$ \\
2. & MCD \& PSSM-AB & $(60.89, 68.13)$ & $(65.45, 64.85)$ & $(61.83, 58.06)$ & $(68.82, 78.49)$ & $(74.52, 75.27)$ & $(73.66, 83.87)$ & $(62.9, 80)$ & $(75.81, 74.19)$ & $(74.04, 80.65)$ \\
3. & MCD \& PSSM-DWT & $(69.56, 67.34)$ & $(66.87, 74.23)$ & $(65.59, 63.44)$ & $(68.28, 79.57)$ & $(70.43, 78.49)$ & $(69.89, 87.1)$ & $(65.59, 81.25)$ & $(69.42, 91.4)$ & $(66.92, 83.23)$ \\
4. & MCD \& PsePSSM & $(74.89, 67.51)$ & $(69.85, 60)$ & $(70.97, 77.42)$ & $(68.28, 78.49)$ & $(79.52, 72.04)$ & $(68.28, 75.27)$ & $(59.68, 83.26)$ & $(80.11, 78.49)$ & $(76.45, 75.48)$ \\
5. & NMBAC \& PSSM-DWT & $(72.72, 67.34)$ & $(75.85, 60)$ & $(64.52, 64.52)$ & $(68.28, 79.57)$ & $(74.52, 61.29)$ & $(68.82, 76.34)$ & $(66.67, 74.52)$ & $(76.88, 81.06)$ & $(76.98, 80)$ \\
6. & NMBAC \& PSSM-AB & $(72.45, 70.76)$ & $(69.62, 72.49)$ & $(67.2, 64.52)$ & $(70.97, 76.34)$ & $(76.13, 64.52)$ & $(70.97, 80.65)$ & $(66.67, 67.74)$ & $(78.49, 81.98)$ & $(75.85, 78.25)$ \\
7. & NMBAC \& PsePSSM & $(84.51, 65.61)$ & $(75, 62.38)$ & $(64.52, 62.37)$ & $(66.67, 78.49)$ & $(76.13, 65.59)$ & $(70.43, 80.65)$ & $(61.29, 59.14)$ & $(80.8, 85.05)$ & $(80.54, 76.13)$ \\
8. & PSSM-AB \& PSSM-DWT & $(58.23, 67.38)$ & $(68.74, 63.25)$ & $(74.19, 87.1)$ & $(68.82, 82.8)$ & $(71.51, 81.72)$ & $(66.13, 78.49)$ & $(73.66, 79.57)$ & $(76.34, 79.82)$ & $(72.89, 77.2)$ \\
9. & PsePSSM \& PSSM-AB & $(66.84, 65.96)$ & $(68.23, 72.58)$ & $(74.19, 81.72)$ & $(66.67, 76.34)$ & $(70.43, 82.8)$ & $(74.74, 77.42)$ & $(69.89, 77.42)$ & $(75.65, 83.49)$ & $(74.54, 80.43)$ \\
10. & PsePSSM \& PSSM-DWT & $(59.44, 67.8)$ & $(70.76, 63.8)$ & $(67.2, 78.49)$ & $(68.28, 76.34)$ & $(71.82, 83.87)$ & $(72.04, 81.72)$ & $(50, 72.89)$ & $(74.41, 86.38)$ & $(72.91, 84.73)$ \\ \hline
& Average Acc. & $68.45$ & $69.04$ & $68.06$ & $68.12$ & $73.49$ & $70.06$ & $63.82$ & $\textbf{76.32}$ & $74$ \\ \hline
\multicolumn{11}{l}{\begin{tabular}[c]{@{}l@{}} Seny. denote the Sensitivity. \end{tabular}} \\
\multicolumn{11}{l}{$^{\dagger}$ represents the proposed models.}
\end{tabular}}
\end{table*}
\subsubsection{Comparison of the proposed model with the existing state-of-the-art models (non-DNA binding prediction models)}
Table \ref{Average ACC and average rank for proteins datasets} provides the results (in terms of Accuracy and Sensitivity) among the baseline and proposed MvRVFL models for DBP prediction. The MvRVFL-1 and MvRVFL-2 models ranked first and second, respectively, with average accuracies of 76.32\% and 74\%. In contrast, the average Acc. of the baseline SVM2K, MvTSVM, ELM-Vw-$A$, ELM-Vw-$B$, RVFL-Vw-$A$, RVFL-Vw-$B$, and MVLDM models are $68.45\%$, $69.04\%$, $68.06\%$,  $68.12\%$, $73.49\%$, $ 70.06\%$ and $63.82\%$, respectively. Compared to the third-top model, RVFL-Vw-$A$, the proposed models (MvRVFL-1 and MvRVFL-2) exhibit average Acc. of approximately $2.83\%$ and $0.51\%$ higher, respectively. For the MCD \& PsePSSM, NMBAC \& PsePSSM, PsePSSM \& PSSM-AB and PsePSSM \& PSSM-DWT cases, the proposed MvRVFL-1 model archives the Acc. of $80.11\%$, $80.80\%$, $75.65\%$, and $74.41\%$, respectively, emerging as the top performers. The results suggest the significance of PsePSSM as an important feature for predicting DBPs. Thus, the proposed MvRVFL consistently demonstrates exceptional performance, achieving high Acc. across diverse scenarios, solidifying its distinction among other models. From Table \ref{Average ACC and average rank for proteins datasets}, we can see that our proposed MvRVFL-1 model has Seny. $89.25$ which is the second highest among all the datasets. The MvRVFL-1 model is highly effective at correctly detecting true positive cases in the MCD \& NMBAC dataset, thus demonstrating its superior performance in identifying relevant instances compared to other models. Also, the Seny. for the MCD \& PSSM-DWT dataset reaches a peak value of $91.40$, indicating that our proposed model excels in predicting MCD features, achieving the best performance among all compared models. Furthermore, the MvRVFL-2 model employs a majority voting mechanism, combining predictions from multiple classifiers to enhance its efficiency and robustness. This approach leads to more reliable and accurate predictions, underscoring the superior performance of the MvRVFL-2 model in predicting MCD features compared to baseline models.

\begin{figure*}
\begin{minipage}{.31\linewidth}
\centering
\subfloat[PsePSSM \& PSSM-AB]{\label{fig:7a}\includegraphics[scale=0.26]{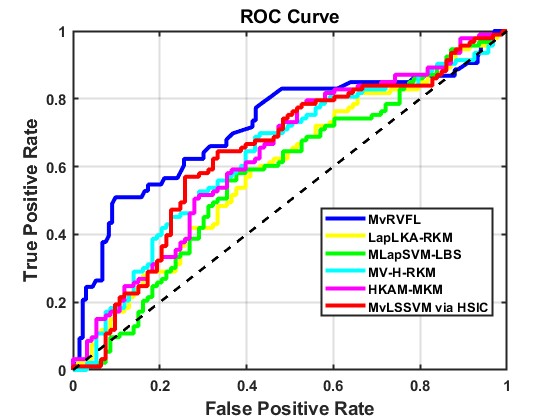}}
\end{minipage}
\begin{minipage}{.31\linewidth}
\centering
\subfloat[NMBAC \& PSSM-DWT]{\label{fig:7b}\includegraphics[scale=0.26]{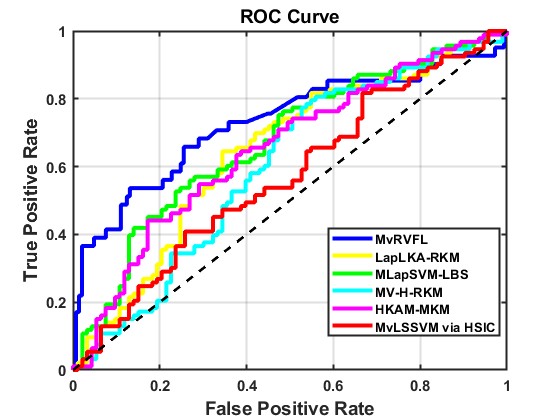}}
\end{minipage}
\begin{minipage}{.31\linewidth}
\centering
\subfloat[MCD \& PsePSSM]{\label{fig:7c}\includegraphics[scale=0.26]{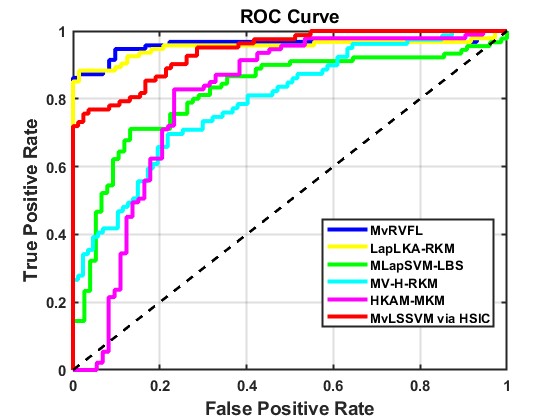}}
\end{minipage}
\caption{ROC curves comparing the performance of the proposed MvRVFL model with various DNA-binding protein prediction models.}
\label{ROC Cuve}
\end{figure*}

\subsubsection{Comparison of the proposed model with the existing DNA binding prediction models}
We compare our proposed MvRVFL model with existing DNA binding protein prediction models, including MvLSSVM via HSIC \cite{zhao2022identification}, HKAM-MKM \cite{zhao2022hkam}, MV-H-RKM \cite{guan2022mv}, MLapSVM-LBS \cite{sun2022mlapsvm}, and LapLKA-RKM \cite{qian2023identification}. Table \ref{Average ACC and average rank for proteins datasets with oh} demonstrates that all five features (MCD, NMBAC, PSSM-DWT, PsePSSM, and PSSM-AB) are beneficial for predicting DBPs using MvRVFL. Among these, MCD\_vs\_PsePSSM achieves the best result (Acc. = $80.11\%$). This suggests that PsePSSM is a crucial feature for predicting DBPs. As illustrated in Table \ref{Average ACC and average rank for proteins datasets with oh}, the proposed MvRVFL-1 and MvRVFL-2 models achieve average accuracies of 76.32\% and 74\%, respectively, both outperforming the baseline models. The proposed MvRVFL-1 and MvRVFL-2 exhibit exceptional generalization performance, marked by consistently higher Acc., indicating a high level of confidence in their learning process. It is found that the MvRVFL model is comparable to the baseline models in most cases. By incorporating the coupling term, MvRVFL can integrate information from both views. This approach allows for a larger error variable in one view if it is compensated by the other view by minimizing the product of the error variables. The MvRVFL-1 model demonstrates strong performance in all these areas, showing its robustness and reliability in comparison with existing models. 

Figure \ref{ROC Cuve} displays the ROC curve, highlighting the enhanced performance of the proposed MvRVFL model over baseline methods on DNA-binding protein datasets. The curve offers a thorough assessment of the model's classification performance by plotting sensitivity (true positive rate) versus the false positive rate across different decision thresholds. The MvRVFL model achieves a significantly higher area under the ROC curve (AUC), indicating a better balance between sensitivity and specificity. An elevated AUC indicates that the MvRVFL model possesses a stronger capability to differentiate positive cases from negative ones, resulting in improved prediction accuracy. The increase in AUC demonstrates the model's improved ability to minimize false negatives while accurately identifying true positives, leading to more reliable detection.

The superior performance of the MvRVFL model can be attributed to the inclusion of crucial features, particularly PsePSSM. PsePSSM captures essential evolutionary information from the protein sequences, providing a more comprehensive representation that significantly contributes to the model's predictive accuracy. By leveraging PsePSSM, the model benefits from detailed sequence order information, which enhances its ability to correctly classify DBPs.

These results demonstrate the MvRVFL model's superiority over baseline models in classification tests, demonstrating its superiority and effectiveness. The importance of PsePSSM as a feature is evident, as it plays a pivotal role in improving the model's overall accuracy and reliability in identifying true positives and reducing false negatives.

\begin{figure}[ht!]
\centering
{\label{cong2}\includegraphics[scale=0.45]{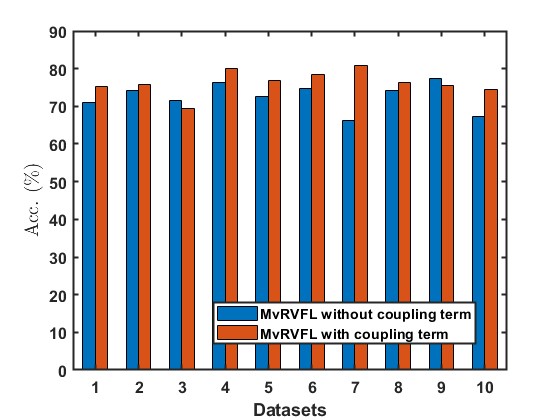}}
\caption{Ablation study of the coupling term for MvRVFL: The x-axis represents the datasets indexed in Table \ref{Average ACC and average rank for proteins datasets with oh}, and the y-axis represents the Acc. ($\%$).}
\label{Ablation study}
\end{figure}

\subsubsection{Ablation study on DNA binding datasets}
To validate our claim that the coupling term acts as a bridge among multiple views, enhancing the coordination between different features and thereby improving the training efficiency of the proposed MvRVFL model, we conducted an ablation study. In this study, we set the coupling term to zero (\(\rho = 0\)) in the optimization (\ref{eq:6}) to assess its true impact. The results, depicted in Fig. \ref{Ablation study}, demonstrate that the MvRVFL model consistently outperforms the model without coupling term across most datasets, confirming the effectiveness of the coupling term.

This result underscores the importance of the coupling term \(\rho\) in the model's design. It enables the MvRVFL model to seamlessly combine information from both the image and caption views. By reducing the product of the error terms from both views, the model can tolerate a higher error in one view, as long as it is compensated by a lower error in the other view. This balancing approach improves the model's overall robustness and reliability.

The ablation study clearly demonstrates that incorporating the coupling term enables the MvRVFL model to better manage the complementary information from different views, resulting in improved performance and more reliable predictions.

\begin{table*}[ht!]
\centering
    \caption{Performance comparison of the proposed MvRVFL-1 and MvRVFL-2 along with DNA binding protein prediction models.}
    \label{Average ACC and average rank for proteins datasets with oh}
    \resizebox{1\linewidth}{!}{
\begin{tabular}{llccccccc}
\hline
Dataset & Dataset  & MvLSSVM via HSIC \cite{zhao2022identification} & HKAM-MKM \cite{zhao2022hkam} & MV-H-RKM \cite{guan2022mv} & MLapSVM-LBS \cite{sun2022mlapsvm} & LapLKA-RKM \cite{qian2023identification} & MvRVFL-1$^{\dagger}$ & MvRVFL-2$^{\dagger}$ \\
index & name & (Acc., Seny.) & (Acc., Seny.) & (Acc., Seny.) & (Acc., Seny.) & (Acc., Seny.) & (Acc., Seny.) & (Acc., Seny.)  \\ \hline
1. & MCD\_vs\_NMBAC & $(74.47, 77.78)$ & $(72.96, 73.04)$ & $(65.7, 74.06)$ & $(61.7, 81.03)$ & $(68.85, 86.54)$ & $(75.27, 89.25)$ & $(68.92, 79.89)$ \\
2. & MCD\_vs\_PSSM-AB & $(74.58, 82.35)$ & $(71.72, 71.79)$ & $(69.46, 72.82)$ & $(74.82, 73.85)$ & $(63.64, 77.89)$ & $(75.81, 74.19)$ & $(74.04, 80.65)$ \\
3. & MCD\_vs\_PSSM-DWT & $(66.57, 89.44)$ & $(67.74, 86.92)$ & $(60, 84.58)$ & $(65.69, 89.34)$ & $(66.21, 81.84)$ & $(69.42, 91.4)$ & $(66.92, 83.23)$ \\
4. & MCD\_vs\_PsePSSM & $(66.99, 72.14)$ & $(73.11, 74.8)$ & $(79.46, 74.63)$ & $(76.47, 76.56)$ & $(78.66, 78.41)$ & $(80.11, 78.49)$ & $(76.45, 75.48)$ \\
5. & NMBAC\_PSSM-DWT & $(75.34, 73.61)$ & $(76.34, 78.52)$ & $(67.03, 74.89)$ & $(72.88, 80)$ & $(74.13, 79.59)$ & $(76.88, 81.06)$ & $(76.98, 80)$ \\
6. & NMBAC\_vs\_PSSM-AB & $(76.47, 80.96)$ & $(71.39, 78.23)$ & $(77.78, 74.7)$ & $(76.57, 73.65)$ & $(73.75, 75.38)$ & $(78.49, 81.98)$ & $(75.85, 78.25)$ \\
7. & NMBAC\_vs\_PsePSSM & $(74.57, 74.19)$ & $(76.32, 82.27)$ & $(77.27, 79.35)$ & $(76.99, 80.33)$ & $(79.35, 81.6)$ & $(80.8, 85.05)$ & $(80.54, 76.13)$ \\
8. & PSSM-AB\_vs\_PSSM-DWT & $(77.53, 74.55)$ & $(90.86, 67.77)$ & $(91.3, 67.83)$ & $(55.34, 76.12)$ & $(62.12, 76.62)$ & $(76.34, 79.82)$ & $(72.89, 77.2)$ \\
9. & PsePSSM\_vs\_PSSM-AB & $(74.4, 84.29)$ & $(74.37, 76.6)$ & $(73.48, 79.44)$ & $(76.47, 82.59)$ & $(74.74, 80.97)$ & $(75.65, 83.49)$ & $(74.54, 80.43)$ \\
10. & PsePSSM\_vs\_PSSM-DWT & $(75.82, 76.21)$ & $(75.84, 72.26)$ & $(73.04, 75.65)$ & $(72.31, 74.78)$ & $(71.93, 82.47)$ & $(74.41, 86.38)$ & $(72.91, 84.73)$ \\ \hline 
& Average Acc. & $73.67$ & $75.07$ & $73.45$ & $70.92$ & $71.34$ & $\textbf{76.32}$ & $74$  \\ \hline
\multicolumn{9}{l}{\begin{tabular}[c]{@{}l@{}} Seny. denote the Sensitivity. \end{tabular}} \\
\multicolumn{9}{l}{$^{\dagger}$ represents the proposed models.}
\end{tabular}}
\end{table*}

\subsubsection{Sensitivity of \texorpdfstring{$\mathcal{C}_1$}{C1} and \texorpdfstring{$\rho$}{rho} combinations}
In this subsection, we evaluate the impact of the regularization parameters $\mathcal{C}_1$ and $\rho$ of MvRVFL. In Fig. \ref{effect of parameter c and rho}, the values of $\mathcal{C}_1$ and $\rho$ are varied from $10^{-5}$ to $10^{5}$, keeping the other parameters is fixed at their optimal values and the corresponding Acc. values are recorded. The performance of MvRVFL is depicted under different parameter settings $\mathcal{C}_1$ and $\rho$. The parameter $\rho$ serves as a coupling term designed to minimize the product of error variables between Vw-$A$ and Vw-$B$. When $\mathcal{C}_1$ lies between $10^{-1}$ and $10^{5}$ with the same value of $\rho$, there is a corresponding improvement observed in the Acc. values. These findings suggest that when evaluating parameters $\mathcal{C}_1$ and $\rho$, the model's performance is primarily influenced by $\mathcal{C}_1$ rather than $\rho$. Therefore, precise adjustment of the hyperparameters in the proposed MvRVFL model is essential for achieving optimal generalization performance.

\subsubsection{Influence of the number of hidden nodes \texorpdfstring{$h_l$}{hl}}
To thoroughly understand the robustness of the proposed MvRVFL model, it is essential to examine how it responds to variations in the number of hidden nodes \(h_l\).
The influence of the hyperparameter $h_l$ is depicted in Fig. \ref{effect of parameter act}. Fig. \ref{fig:5a} and \ref{fig:5c} show that the performance peaks at $h_l=23$ and then gradually declines as $h_l$ increases further. Therefore, to achieve the best performance from the MvRVFL, we recommend using $h_l=23$. The performance peaks at $h_l=23$ and $h_l=83$ and then gradually declines as $h_l$ increases further depicted in Fig. \ref{fig:5d}. From Fig. \ref{fig:5b}, the performance consistently improves with an increase in the number of hidden nodes until reaching a plateau. Optimal performance is typically achieved with higher values of $h_l$. We recommend fine-tuning the hyperparameters to attain the best performance from the proposed models for specific tasks.

\begin{figure*}
\begin{minipage}{.246\linewidth}
\centering
\subfloat[MCD \& NMBAC]{\includegraphics[scale=0.20]{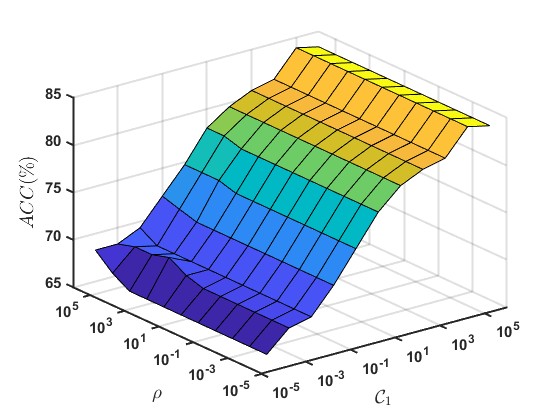}}
\end{minipage}
\begin{minipage}{.246\linewidth}
\centering
\subfloat[MCD \& PSSM-DWT]{\includegraphics[scale=0.20]{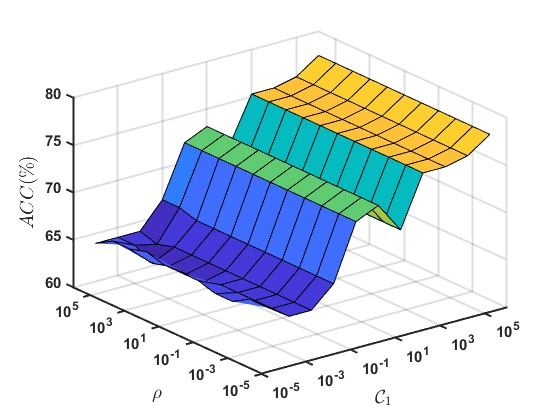}}
\end{minipage}
\begin{minipage}{.246\linewidth}
\centering
\subfloat[NMBAC \& PSSM-DWT]{\includegraphics[scale=0.20]{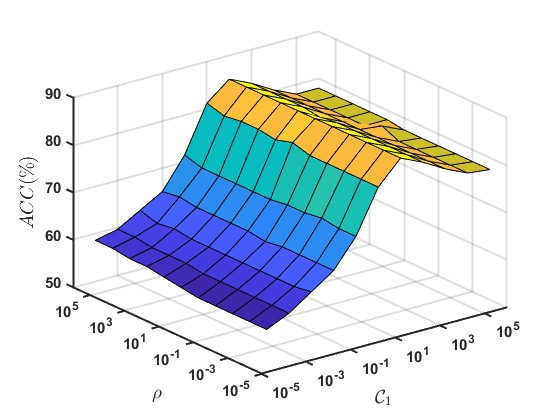}}
\end{minipage}
\begin{minipage}{.246\linewidth}
\centering
\subfloat[NMBAC \& PSSM-AB]{\includegraphics[scale=0.20]{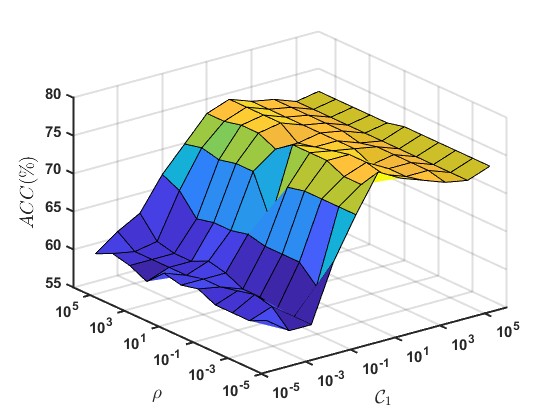}}
\end{minipage}
\caption{The impact of the parameters $\mathcal{C}_1$ and $\rho$ on the performance of the proposed MvRVFL model.}
\label{effect of parameter c and rho}
\end{figure*}

\begin{figure*}
\begin{minipage}{.246\linewidth}
\centering
\subfloat[MCD \& PsePSSM]{\label{fig:5a}\includegraphics[scale=0.20]{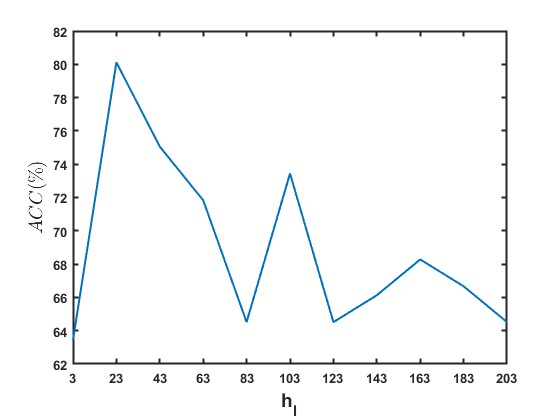}}
\end{minipage}
\begin{minipage}{.246\linewidth}
\centering
\subfloat[MCD \& PSSM-AB]{\label{fig:5b}\includegraphics[scale=0.20]{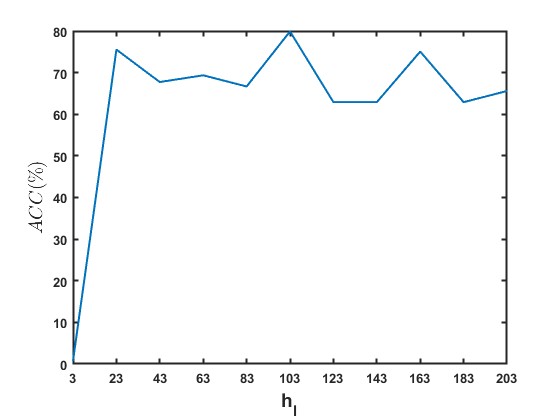}}
\end{minipage}
\begin{minipage}{.246\linewidth}
\centering
\subfloat[MCD \& PSSM-DWT]{\label{fig:5c}\includegraphics[scale=0.20]{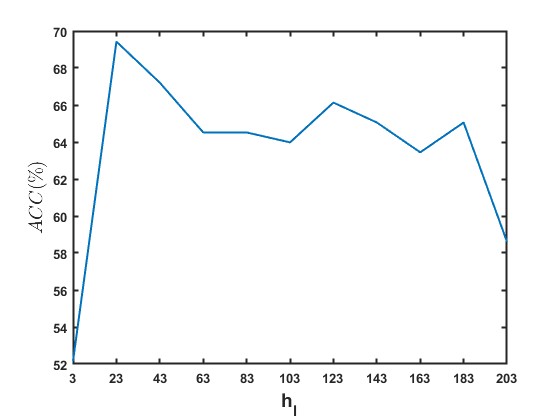}}
\end{minipage}
\begin{minipage}{.246\linewidth}
\centering
\subfloat[PsePSSM \& PSSM-DWT]{\label{fig:5d}\includegraphics[scale=0.20]{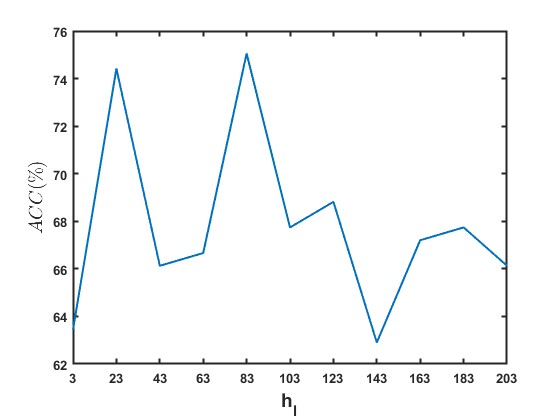}}
\end{minipage}
\caption{The impact of the parameter $h_l$ on the performance of the proposed MvRVFL model.}
\label{effect of parameter act}
\end{figure*}

\subsection{Evaluation on UCI and KEEL Datasets}
\label{Evaluation on UCI and KEEL Datasets}
This section provides an in-depth analysis, featuring a comparison of the proposed MvRVFL model with baseline models on 27 benchmark datasets from UCI \cite{dua2017uci} and KEEL \cite{derrac2015keel}. Since the UCI and KEEL datasets lack inherent multiview characteristics, we generate an additional view by applying principal component analysis (PCA), retaining 95\% of the variance. The resulting reduced feature set is denoted as Vw-$B$, while the original dataset is referred to as Vw-$A$ \cite{wang2023safe}. The performance of the proposed MvRVFL model along with the baseline models is evaluated using Acc. metrics along with the corresponding optimal hyperparameters, as depicted in Table S.1 in the Supplementary material. Table \ref{Average ACC and average rank for UCI and KEEL datasets} shows the average Acc. for the proposed MvRVFL-1 and MvRVFL-2 models along with the baseline SVM2K, MvTSVM, ELM-Vw-$A$, ELM-Vw-$B$, RVFL-Vw-$A$, RVFL-Vw-$B$, MVLDM, CDC, and MCMC models are $85.15\%$, $83.18\%$, $76.65\%$, $67.24\%$, $82.66\%$, $81.7\%$, $83.14\%$, $81.26\%$, $79.9\%$, $81.10\%$, and $81.98\%$, respectively. The proposed MvRVFL-1 achieved the top position, and MvRVFL-2 ranked second in terms of average Acc. This demonstrates that the proposed MvRVFL-1 and MvRVFL-2 models exhibit a significant level of confidence in their predictive capabilities. Average Acc. can be deceptive, as it may mask a model's strong performance on one dataset by compensating for weaker results on another. To overcome the drawbacks of using average accuracy and determine the relevance of the results, we employed a suite of statistical tests recommended by \citet{demvsar2006statistical}. These tests are tailor-made for comparing classifiers across multiple datasets, especially when the conditions required for parametric tests are not satisfied. We utilized the following tests: ranking test, Friedman test, Nemenyi post hoc test, and win-tie loss sign test. Through the use of statistical tests, our goal is to thoroughly assess the performance of the models, allowing us to make well-rounded and impartial conclusions about their effectiveness. In the ranking scheme, each model receives a rank according to its performance on individual datasets, allowing for an evaluation of its overall performance. Higher ranks are attributed to the worst-performing models, while lower ranks are assigned to the best-performing models. To assess the performance of $q$ models across $N$ datasets, the rank of the $j^{th}$ model on the $i^{th}$ dataset can be denoted as $\mathfrak{R}_{j}^i$. Then the $j^{th}$ model's average rank is given by $\mathfrak{R}_j = \frac{1}{N} \sum_{i=1}^{N}\mathfrak{R}_j^i.$ The rank of the proposed MvRVFL-1 and MvRVFL-2 models along with the baseline SVM2K, MvTSVM, ELM-Vw-$A$, ELM-Vw-$B$, RVFL-Vw-$A$, RVFL-Vw-$B$, MVLDM, CDC, and MCMC models are $2.15$, $3.61$, $7.72$, $10.5$, $ 5.54$, $ 6.26$, $ 4.94$, $6.06$, $6.52$, $6.70$, and $6.00$, respectively. The average ranks of the models are presented in Table \ref{Average ACC and average rank for UCI and KEEL datasets}. The MvRVFL-1 model achieved an average rank of $2.15$, which is the lowest among all the models. While the proposed MvRVFL-2 attained the second position with an average rank of $3.61$. Since a lower rank indicates superior performance, the MvRVFL-1 and MvRVFL-2 models were identified as the highest-performing models. The Friedman test \cite{friedman1937use} compares whether significant differences exist among the models by comparing their average ranks. The Friedman test, a nonparametric statistical analysis, is utilized to compare the effectiveness of multiple models across diverse datasets. Under the null hypothesis, the models' average rank is equal, implying that they give equal performance. The Friedman test relies on the chi-squared distribution (\(\chi_F^2\)) with \((q - 1)\) degrees of freedom, and is computed using the following expression:
\[
\chi^2_F = \frac{12N}{q(q+1)}\left[\sum_{j}\mathfrak{R}_j^2 - \frac{q(q+1)^2}{4}\right].
\]
To further analyze the statistical significance, the corresponding \(F_F\) statistic is calculated as:
\[
F_F = \frac{(N - 1)\chi_F^2}{N(q - 1) - \chi_F^2},
\]
where the degrees of freedom for the F-distribution are \((q - 1)\) and \((N - 1)(q - 1)\). For the given values \(N = 27\) and \(q = 11\), we obtain \(\chi_F^2 = 112.68\) and \(F_F = 18.62\). From the $F$-distribution table, the value of $F_F(10, 260) = 1.87$ at a significance level of $5\%$. As $F_F > 1.87$, the null hypothesis is rejected. Hence, notable discrepancies are evident among the models. As a result, we assess the pairwise comparisons across the models using the Nemenyi post hoc test \cite{demvsar2006statistical}. C.D. = $q_\alpha \times \sqrt{\frac{q(q+1)}{6N}}$ is the critical difference (C.D.). Here, \(q_{\alpha}\) represents the critical value for the two-tailed Nemenyi test that is obtained from the distribution table. From the statistical F-distribution table, where $q_{\alpha} = 3.219$, the C.D. is computed as $2.9056$ at a $5\%$ significance level. The average rank disparities between the MvRVFL-1 and MvRVFL-2 models and the baseline SVM2K, MvTSVM, ELM-Vw-$A$, ELM-Vw-$B$, RVFL-Vw-$A$, RVFL-Vw-$B$, MVLDM, CDC, and MCMC models are $(5.57, 4.11)$, $(8.35, 6.89)$, $(3.39, 1.93)$, $(4.11, 2.65)$, $(2.79, 1.33)$, $(3.91, 2.45)$, $(4.37, 2.91)$, $(4.55, 3.09)$, and $(3.85, 2.39)$. The Nemenyi post hoc test confirms that the proposed MvRVFL-1 model demonstrates statistically significant superiority over SVM2K, MvTSVM, ELM-Vw-$A$, ELM-Vw-$B$, RVFL-Vw-$B$, MVLDM, CDC, and MCMC. The MvRVFL-2 model shows a statistically significant difference when compared to both the SVM2K, MvTSVM, and CDC models.

\begin{table*}[htp]
\centering
    \caption{Average Acc. and rank of the proposed MvRVFL-1 and MvRVFL-2 and the baseline models based on classification Acc. for UCI, KEEL, AwA, and Corel5K datasets.}
    \label{Average ACC and average rank for UCI and KEEL datasets}
   \resizebox{1\linewidth}{!}{
\begin{tabular}{llccccccccccc}
\hline
& Dataset & SVM2K \cite{farquhar2005two} & MvTSVM \cite{xie2015multi} & ELM-Vw-$A$ \cite{huang2006extreme} & ELM-Vw-$B$ \cite{huang2006extreme} & RVFL-Vw-$A$ \cite{pao1994learning}  & RVFL-Vw-$B$ \cite{pao1994learning}  & MVLDM \cite{hu2024multiview} & CDC \cite{kang2024cdc} & MCMC \cite{bian2025multilevel} & MvRVFL-1$^{\dagger}$ & MvRVFL-2$^{\dagger}$ \\ \hline
{UCI and KEEL}& Average Acc. & $76.65$ & $67.24$ & $82.66$ & $81.7$ & $83.14$ & $81.26$ & $79.9$ & $81.1$ & $81.98$ & \textbf{$85.15$} & $83.18$ \\ 
& Average Rank & $7.72$ & $10.5$ & $5.54$ & $6.26$ & $4.94$ & $6.06$ & $6.52$ & $6.7$ & $6$ & $2.15$ & $3.61$ \\ \hline
AwA & Average Acc. & $77.46$ & $64.31$ & $71.74$ & $75.99$ & $72.87$ & $77.46$ & $73.33$ & $77.66$ & $77.69$ & \textbf{$82.5$} & $77.97$ \\ 
& Average Rank & $5.1$ & $9.51$ & $8.36$ & $6.08$ & $7.59$ & $5.26$ & $7.1$ & $5.41$ & $5.34$ & $1.77$ & $4.49$ \\ \hline
Corel5k & Average Acc. & $74.87$ & $49.93$ & $74.98$ & $74.83$ & $76.33$ & $75.43$ & $69.87$ & $73.98$ & $73.47$ & \textbf{$78.01$} & $75.63$ \\  
& Average Rank & $5.57$ & $10.93$ & $5.4$ & $5.44$ & $4.47$ & $5.48$ & $7.55$ & $6.49$ & $6.82$ & $3.17$ & $4.68$ \\ \hline
\multicolumn{9}{l}{$^{\dagger}$ denotes the proposed models.}
\end{tabular}}
\end{table*}

\begin{table*}[htp]
    \centering
    \caption{The pairwise win-tie-loss comparison of the proposed MvRVFL-1 and MvRVFL-2 models, in comparison to baseline models, across the UCI and KEEL datasets.}
   \label{Pairwise Win-tie using linear kernel}
    \resizebox{1\textwidth}{!}{
    \begin{tabular}{lcccccccccc}
\hline
 & SVM2K \cite{farquhar2005two} & MvTSVM \cite{xie2015multi} & ELM-Vw-$A$ \cite{huang2006extreme} & ELM-Vw-$B$ \cite{huang2006extreme} & RVFL-Vw-$A$ \cite{pao1994learning} & RVFL-Vw-$B$ \cite{pao1994learning} & MVLDM \cite{hu2024multiview} & CDC \cite{kang2024cdc} & MCMC \cite{bian2025multilevel} & MvRVFL-1$^{\dagger}$ \\ 
\hline
MvTSVM \cite{xie2015multi} & {[}$2, 2, 23${]} &  &  &  &  &  &  &  &  &  \\
ELM-Vw-$A$ \cite{huang2006extreme} & {[}$18, 1, 8${]} & {[}$27, 0, 0${]} &  &  &  &  &  &  &  &  \\
ELM-Vw-$B$  \cite{huang2006extreme} & {[}$18, 2, 7${]} & {[}$26, 0, 1${]} & {[}$9, 4, 14${]} &  &  &  &  &  &  &  \\ 
RVFL-Vw-$A$ \cite{pao1994learning} & {[}$21, 1, 5${]} & {[}$27, 0, 0${]} & {[}$11, 10, 6${]} & {[}$15, 4, 8${]} &  &  &  &  &  &  \\
RVFL-Vw-$B$ \cite{pao1994learning} & {[}$18, 1, 8${]} & {[}$25, 1, 1${]} & {[}$9, 5, 13${]} & {[}$11, 9, 7${]} & {[}$7, 7, 13${]} &  &  &  &  \\ 
MVLDM \cite{hu2024multiview} & {[}$16, 1, 10${]} & {[}$24, 0, 3${]} & {[}$11, 0, 16${]} & {[}$12, 0, 15${]} & {[}$9, 0, 18${]} & {[}$13, 0, 14${]} &  &  &  &  \\
CDC \cite{kang2024cdc} &  {[}$19, 0, 8${]} & {[}$25, 0, 2${]} & {[}$10, 1, 16${]} & {[}$11, 0, 16${]} & {[}$10, 1, 16${]} & {[}$11, 0, 16${]} & {[}$13, 0, 14${]} &  &  &  \\
MCMC \cite{bian2025multilevel} &  {[}$18, 0, 9${]} & {[}$25, 0, 2${]} & {[}$12, 0, 15${]} & {[}$16, 0, 11${]} & {[}$9, 0, 18${]} & {[}$15, 0, 12${]} & {[}$15, 0, 12${]} & {[}$17, 0, 10${]} &  &  \\
MvRVFL-1$^{\dagger}$ & {[}$24, 2, 1${]} & {[}$27, 0, 0${]} & {[}$22, 1, 4${]} & {[}$24, 2, 1${]} & {[}$20, 4, 3${]} & {[}$24, 2, 1${]} & {[}$24, 0, 3${]} & {[}$26, 0, 1${]} & {[}$25, 0, 2${]} &  \\
MvRVFL-2$^{\dagger}$ & {[}$22, 1, 4${]} & {[}$25, 2, 0${]} & {[}$19, 1, 7${]} & {[}$20, 1, 6${]} & {[}$18, 2, 7${]} & {[}$18, 2, 7${]} & {[}$20, 1, 6${]} & {[}$22, 0, 5${]} & {[}$21, 0, 6${]} & {[}$8, 3, 16${]} \\ \hline
\multicolumn{9}{l}{$^{\dagger}$ represents the proposed models.}
\end{tabular}}
\end{table*}

We evaluate the models using the pairwise win-tie-loss sign test. This test implies that the two models perform similarly and are predicted to win in $N/2$ datasets, where $N$ is the total number of datasets, in accordance with the null hypothesis. A model is considered significantly better if it wins on approximately \(\frac{N}{2} + 1.96\frac{\sqrt{N}}{2}\) datasets. If the number of ties between the two models is even, they are split evenly between both models. If there is an odd number of ties, one tie is discarded, and the remaining ties are distributed equally among the classifiers. In this scenario, with \(N = 27\), there is a statistically significant difference between the two models if one of them receives at least 18.59 wins. Table \ref{Pairwise Win-tie using linear kernel} compares the performance of MvRVFL-1 and MvRVFL-2 with that of the baseline models. In Table \ref{Pairwise Win-tie using linear kernel}, the entry $[x, y, z]$ indicates the number of instances where the model in the row loses $z$ times, ties $y$ times, and wins $x$ times when evaluated against the model in the corresponding column. The proposed MvRVFL-2 model attains a statistically significant difference from the baseline models. The results show that the proposed MvRVFL-1 and MvRVFL-2 models significantly outperform the baseline models.

\subsection{Ablation Study on Fusion Strategies using UCI Datasets}
To rigorously evaluate the necessity and effectiveness of combining early and late fusion within the proposed MvRVFL framework, we conducted ablation studies on five benchmark UCI datasets: aus, breast\_cancer\_wisc, brwisconsin, cmc, and Ripley. Three distinct configurations were examined to isolate the impact of each fusion mechanism. In the Early Fusion Only variant, features from all available views were concatenated into a single composite feature vector prior to inputting them into the MvRVFL model. This configuration allows the network to jointly learn inter-view dependencies at the feature representation level. In contrast, the Late Fusion Only variant processes each view independently through a dedicated RVFL subnetwork. The individual prediction scores from these subnetworks are then aggregated at the decision level using a weighted averaging scheme, enhancing robustness against view-specific noise. Finally, the Combined Early + Late Fusion (MvRVFL) configuration integrates both mechanisms: multi-view features are first concatenated for early fusion and then processed by separate view-specific subnetworks, whose outputs are subsequently combined through late fusion. This dual strategy ensures that both cross-view correlations and view-specific discriminative information are preserved during training.

\begin{table}[ht]
\centering
\caption{Ablation study results (Acc. \%) of different fusion strategies on UCI datasets.}
\label{tab:ablation}
\begin{tabular}{lccc}
\hline
Dataset & Early Fusion Only & Late Fusion Only & Combined (MvRVFL) \\ \hline
aus                & 85.32 & 86.10 & \textbf{87.50} \\
breast\_cancer\_wisc         & 97.42 & 97.86 & \textbf{98.57} \\
brwisconsin               & 94.72 & 95.13 & \textbf{96.10} \\
cmc                      & 72.45 & \textbf{73.02} & 72.17 \\
Ripley                          & 88.11 & 88.45 & \textbf{89.87} \\ \hline
\end{tabular}
\end{table}

The ablation study results in Table~\ref{tab:ablation} highlight the role of fusion strategies in the proposed MvRVFL framework. On four out of the five UCI datasets, the Combined (MvRVFL) variant achieves the highest accuracy, outperforming both the early fusion and late fusion variants. This demonstrates the strength of integrating both fusion mechanisms, as early fusion captures inter-view correlations at the feature level, while late fusion enhances robustness by aggregating view-specific predictions. An exception is observed on the cmc dataset, where the Late Fusion Only strategy (73.02\%) slightly outperforms the combined variant (72.17\%). This suggests that in datasets with high variability and potential inter-view noise, late fusion may provide an advantage by focusing on more stable view-specific outputs. Overall, the combined strategy proves to be the most effective across the majority of datasets. These findings highlight that integrating early and late fusion offers a clear advantage in leveraging complementary multi-view information and enhances the robustness of the model across datasets with diverse characteristics.

\subsection{Experimental Scalability Results on UCI Datasets}
\begin{table}[ht]
\centering
\caption{Comparison of training time (in seconds) and memory usage (in MB) across different models. The best results are highlighted in bold.}
\label{tab:scalability_results}
\begin{tabular}{lcccccccc}
\hline
\multirow{2}{*}{Dataset} & \multicolumn{2}{c}{MVLDM \cite{hu2024multiview}} & \multicolumn{2}{c}{CDC \cite{kang2024cdc}} & \multicolumn{2}{c}{MCMC \cite{bian2025multilevel}} & \multicolumn{2}{c}{Proposed MvRVFL} \\ \cline{2-9}
 & Time & Mem. & Time & Mem. & Time & Mem. & Time & Mem. \\ \hline
aus       & 2.31 & 185 & 3.05 & 210 & 2.78 & 192 & \textbf{1.12} & \textbf{95} \\
breast\_cancer\_wisc   & 1.87 & 165 & 2.76 & 198 & 2.15 & 170 & \textbf{0.94} & \textbf{82} \\
brwisconsin             & 1.95 & 172 & 2.83 & 205 & 2.34 & 181 & \textbf{1.01} & \textbf{88} \\
cmc                    & 2.74 & 220 & 3.54 & 240 & 3.05 & 227 & \textbf{1.36} & \textbf{110} \\
Ripley                 & 2.12 & 178 & 2.97 & 201 & 2.43 & 184 & \textbf{1.08} & \textbf{92} \\ \hline
\end{tabular}
\end{table}
Table~\ref{tab:scalability_results} clearly demonstrates the computational advantages of the proposed MvRVFL framework. On all datasets, MvRVFL consistently achieves the lowest training time and memory usage, in some cases reducing runtime by nearly half compared to MVLDM and CDC. For example, on the cmc dataset, MvRVFL requires only 1.36 seconds and 110 MB of memory, compared to 3.54 seconds and 240 MB for CDC. Similarly, on the brwisconsin dataset, the proposed model is more than twice as fast as MVLDM while consuming nearly 50\% less memory. These results confirm that the closed-form output layer solution in MvRVFL effectively eliminates the computational overhead of iterative optimization methods used in CDC and MCMC. Moreover, the reduced memory footprint highlights its suitability for moderately large multi-view datasets. 
Overall, the proposed MvRVFL not only provides superior classification Acc. (as shown in Section \ref{Evaluation on UCI and KEEL Datasets}) but also scales efficiently in terms of time and memory, making it a practical choice for real-world multi-view learning tasks.

\subsection{Evaluation on AwA and Corel5k Dataset}
We evaluate the performance of the proposed MvRVFL model in comparison with baseline approaches on the AwA dataset. This dataset contains 30,475 images categorized into 50 animal classes, each described using six distinct pre-computed feature sets. For the purpose of our experiments, we focus on a subset comprising ten target classes, raccoon, leopard, Persian cat, chimpanzee, rat, giant panda, humpback whale, pig, hippopotamus, and seal, resulting in a total of 6,180 images. Among the available features, the 252-dimensional Histogram of Oriented Gradients (HOG) representation is labeled as Vw-$B$, while the 2000-dimensional $L_1$-normalized Speeded-Up Robust Features (SURF) are identified as Vw-$A$.
 For every pair of classes, we use the one-against-one approach to train $45$ binary classifiers. The Corel5k dataset contains $50$ categories, with each category including $100$ images that cover a range of semantic topics, such as bus, dinosaur, beach, and others. The $512$-dimensional GIST features are denoted as Vw-$A$, while the $100$-dimensional DenseHue features are denoted as Vw-$B$. In the experiments, we adopt a one-versus-rest approach for each category, training $50$ binary classifiers for the task. We randomly choose $100$ images from the other classes and include $100$ images from the target class in each binary dataset.

The effectiveness of the proposed MvRVFL models is evaluated against baseline methods by measuring classification accuracy under their respective optimal hyperparameter settings. These results are detailed in Tables S.2 and S.3 for the AwA and Corel5K datasets in the supplementary material, with the average accuracy summarized in Table \ref{Average ACC and average rank for UCI and KEEL datasets}. We can infer the following conclusions: Firstly, MvRVFL models achieve the highest mean accuracy, the lowest average ranking, and the most victories, demonstrating their exceptional performance. Secondly, although the MvRVFL model's performance is slightly inferior in some instances, it remains competitive, with results closely approaching the best outcomes. Thirdly, across most datasets, MvRVFL demonstrates higher accuracies compared to SVM-2K. This highlights the capability of MvRVFL models to effectively utilize two views by adhering to the coupling term that minimizes the product of the error variables for both views, leading to enhanced classification performance. In the supplementary material, we perform several sensitivity analyses of different aspects of the proposed models. This involves examining how the parameters $\mathcal{C}_1$ and $\mathcal{C}_2$ affect the proposed models in subsection S.I.A. We conduct experiments with varying numbers of training samples on the AwA dataset, as discussed in subsection S.I.B. Finally, we conduct the sensitivity of $\theta$ and $\rho$ discussed in subsection S.I.C.

\section{Conclusion and Future Work}
\label{Conclusion and Future Work}
This work proposes a novel multiview random vector functional link (MvRVFL) framework designed for the prediction of DBPs. The MvRVFL models proposed here effectively capture complex feature representations from the hidden layers of multiple views and function as a weighting network. It assigns weights to the features from all hidden layers, as well as to the original features obtained through direct connections. The coupling of different views in the MvRVFL models is achieved by incorporating the coupling term in the primal formulation of the model. 
The exceptional performance of MvRVFL in DBP prediction, when compared to baseline models, can largely be credited to the integration of features derived from protein sequences, such as PsePSSM, PSSM-DWT, NMBAC, MCD, and PSSM-AB. 
Furthermore, experiments were carried out on the UCI, KEEL, AwA, and Corel5K datasets. Statistical analyses confirm that the proposed MvRVFL models surpass baseline models in generalization performance. In future work, we plan to adapt the proposed model for addressing class-imbalance issues in multi-view settings (involving more than two views). Furthermore, we plan to enhance the methods for feature representation and create predictive models that more effectively integrate various features for improved synergy.

\subsubsection*{Acknowledgments}
This research is funded by the Science and Engineering Research Board (SERB) under the Mathematical Research Impact-Centric Support (MATRICS) scheme, Grant No. MTR/2021/000787. M. Sajid also expresses gratitude to the Council of Scientific and Industrial Research (CSIR), New Delhi, for awarding a fellowship through grants 09/1022(13847)/2022-EMR-I.

\bibliography{refs.bib}
\bibliographystyle{plainnat}

\appendix

\section{Background}
\label{Background}
In this section, we discuss the background of DNA-binding proteins, artificial neural networks, and multi-view learning in detail.

We begin by examining the significance and role of DNA-binding proteins, which are essential in numerous biological functions through their interactions with DNA sequences. Next, we explore the core concepts of artificial neural networks (ANNs), a class of machine learning models designed to mimic the functioning of the human brain, and examine their architecture, learning mechanisms, and applications. Finally, we cover the concept of multi-view learning, a technique that integrates multiple sets of features or perspectives to improve the performance of the predictive model.
\subsection{DNA binding protein}
A DNA-binding protein (DBP) is a protein that directly engages with DNA via its specific binding domain. These proteins are among the most prevalent intracellular proteins and are vital for regulating genomic functions, including transcription, DNA replication, and repair processes \cite{zimmermann2020evaluation}. The significant involvement of DBPs in various cellular activities highlights their critical importance. With the ongoing discovery of new proteins in the post-genomic era, identifying DNA-binding proteins (DBPs) within the vast array of newly uncovered sequences has become a key focus. How can DBPs be accurately recognized amidst the large volume of novel proteins? Experimental detection methods, which are resource-intensive, are not as cost-effective as computational approaches. Consequently, there has been a surge in the creation of various computational models in recent years aimed at predicting DNA-binding proteins (DBPs). \citet{lin2011idna} presented iDNA-Prot, a model that extracts features using pseudo amino acid composition (PseACC) \cite{fu2019full} and employs a random forest (RF) classifier for prediction. Subsequently, Liu et al. developed three successive predictors: PseDNA-Pro \cite{liu2015psedna}, iDNAPro-PseAAC \cite{liu2015dna}, and iDNA-Prot|dis \cite{liu2014idna}. These predictors utilized features extracted through different algorithms, integrating them into a single feature set that was then used as input for an SVM model to make predictions. Similarly, StackPDB predicts DNA-binding proteins through a three-step process: feature extraction, feature selection, and model construction are the key stages. StackPDB extracts features from protein sequences based on amino acid composition and evolutionary information. The position-specific scoring matrix (PSSM), produced by the PSI-BLAST tool \cite{altschul1997gapped}, is used to encapsulate evolutionary information. In the StackPDB approach, features such as PsePSSM, PSSM-TPC, EDT, and RPT are extracted from the PSSM. These features are then processed using extreme gradient boosting combined with recursive feature elimination to identify the most relevant features. The optimal feature set is subsequently fed into a stacked ensemble classifier composed of XGBoost, LightGBM, and SVM. Prior research \cite{ding2017identification, ding2019protein} has shown that protein sequences can be depicted through various methods, such as amino acid composition and the position-specific scoring matrix (PSSM). Fusion techniques, which combine multiple representations, are increasingly used in DBP identification to enhance model performance.

Multiple kernel learning (MKL) is a commonly applied early fusion method that seeks to fine-tune the weights of different kernels. By linearly combining multiple base kernels with their corresponding weights, MKL constructs an optimal kernel. The CKA-MKL method \cite{qian2021sequence} seeks to enhance the cosine similarity between the ideal and optimal kernels. Furthermore, a Laplacian term is added to the objective function to mitigate extreme scenarios. However, CKA-MKL primarily focuses on global kernel alignment and does not consider the differences in local samples. To address this, HKAM-MKL \cite{zhao2022hkam} improves upon CKA-MKL by optimizing both global and local kernel alignment scores. Both HKAM-MKL and CKA-MKL employ SVM classifiers. In contrast, HSIC-MKL \cite{qian2022identification} aims to enhance the independence between samples and their corresponding labels within the Reproducing Kernel Hilbert Space (RKHS). Unlike CKA-MKL, which focuses on global alignment, and HKAM-MKL, which integrates both global and local considerations, HSIC-MKL emphasizes sample-label independence. In DNA-binding protein prediction, HKAM-MKL has shown superior performance compared to CKA-MKL. MLapSVM-LBS \cite{sun2022mlapsvm}, a different MKL method, integrates multiple types of information during the training phase using a local behavior similarity graph as a regularization term. Given its non-convex objective function, an alternating algorithm is used. MLapSVM-LBS stands out by offering flexibility in combining various information sources during training while allowing different views to be modeled independently.

\subsection{Artificial neural networks}
Artificial neural networks (ANNs) are machine learning systems designed to emulate the structure and operations of the human brain's neural network. These networks consist of interconnected units, or neurons, which perform mathematical computations to process and relay information. ANNs are engineered to identify patterns and correlations within data, leveraging this acquired knowledge to make predictions. ANNs have showcased efficacy across diverse domains, including brain age prediction \cite{tanveer2023deep}, fault diagnosis of drilling pumps \cite{guo2024parallel}, detection of sickle cell disease \cite{goswami2024detection}, rainfall forecasting \cite{luk2001application}, diagnosis of Alzheimer’s disease \cite{tanveer2024ensemble, ganaie2024graph}, and so on. 
Conventional ANNs rely on gradient descent (GD) based iterative methods, which present several inherent challenges in parameter calculation. These include a tendency to converge to local rather than global optima, heightened sensitivity to the choice of learning rate and initial parameters, and a sluggish convergence rate.

To circumvent the limitations of GD-based neural networks, randomized neural network (RNN) \cite{schmidt1992feed} is proposed. Only the output layer's parameters are determined during training using a closed-form solution in RNNs, while other network parameters remain constant \cite{suganthan2021origins}. The parameters of the hidden layer of the random vector functional link (RVFL) neural network \cite{pao1994learning, malik2023random} are randomly initialised and do not change throughout the training process, making it a shallow feed-forward model. The direct connections between the input and output layers make RVFL differ from other RNNs. The direct connections serve as an implicit regularization mechanism \cite{zhang2016comprehensive, zhang2020new}, improving the model's capacity to learn. To optimize output parameters, RVFL uses approaches like pseudo-inverse or least squares to obtain a closed-form solution, leading to a more efficient learning process with a reduced number of parameters to fine-tune. For a comprehensive understanding, readers can refer to the detailed review on RVFL \cite{malik2023random}.

\subsection{Multiview learning}
Multiview learning (MVL) is a key area of research that significantly enhances generalization performance in various tasks by combining multiple feature sets, each providing complementary insights \cite{zhao2017multi, xu2017re, tang2020cgd}. MVL emerges in response to the common occurrence of diverse types of data in practical scenarios. Consider an image, which can be characterized by its color or texture properties, and an individual, whose identity can be identified through facial features or fingerprints. In real-world situations, samples from different viewpoints may belong to distinct spaces or display considerable differences in their distributions because of the large variations between perspectives. Traditional approaches tackle such data using a cascade method \cite{xu2017re}, transforming multiview data into a unified single-view format by merging various feature spaces into a common one. However, this method overlooks the unique statistical characteristics of each view and is hindered by the challenges of high dimensionality. Multiview learning (MVL) techniques \cite{tang2019cross, huang2020partially} are applied to a variety of tasks, such as transfer learning \cite{zhao2017consistent}, clustering \cite{wen2018incomplete, wen2020adaptive}, dimensionality reduction \cite{wang2013new, hu2019multi}, and classification \cite{sun2018multiview}. The SVM-2K model, a dual-view SVM approach integrating support vector machines with the distance minimization form of kernel canonical correlation analysis (KCCA), was initially proposed by \cite{farquhar2005two}. Using the consensus principle, this method makes use of two points of view. Multiview twin SVM (MvTSVM) \cite{xie2015multi} represents the initial endeavor to integrate a best-fitting hyperplane classifier with MVL. Recently, various Models of MvTSVM have been introduced, including multiview restricted kernel machine (MVRKM) \cite{houthuys2021tensor}, multiview learning with twin parametric margin SVM (MvTPMSVM) \cite{quadir2024multiview}, and multiview large margin distribution machine (MVLDM) \cite{hu2024multiview}.

\section{Random Vector Functional Link (RVFL) Network}
\label{Random Vector Functional Link (RVFL) Network}
The RVFL \cite{pao1994learning} is a type of feed-forward neural network with three layers: the input layer, a hidden layer, and an output layer. The links between the input and hidden layers, along with the hidden layer biases, are randomly set at the start and stay fixed throughout the training process. The original features of input samples are also directly connected to the output layer. Analytical methods such as the Moore-Penrose inverse or the least squares approach are used to determine the output layer weights. The structure of the RVFL model is illustrated in Fig. \ref{Geometrical structure of RVFL model}.
\begin{figure}[ht!]
    \centering
    \includegraphics[width=0.8\textwidth,height=7cm]{RVFL.png}
    \caption{Geometrical structure of RVFL model}
    \label{Geometrical structure of RVFL model}
\end{figure}

Let $T=\{(x_i, y_i), i=1, 2, \dots, n\}$ denote the training dataset, where $y_i \in \{+1, -1\}$ indicates the label of $x_i \in \mathbb{R}^{1 \times m}$. Let $X=(x_1^t, x_2^t, \ldots, x_n^t)^t \in \mathbb{R}^{n \times m}$ and $Y=(y_1^t, y_2^t, \ldots, y_n^t)^t \in \mathbb{R}^{n \times 2}$ be the collection of all input and target vectors, respectively. Randomly initializing weights and biases on the input matrix yields the hidden layer matrix \( H_1 \), followed by a nonlinear activation function \( \phi \). It is expressed as:  
\begin{align}
    H_1  = \phi(XW_1 + b_1) \in \mathbb{R}^{n \times h_l},
\end{align}
where $W_1 \in \mathbb{R}^{m \times h_l}$ represents the weights vector which is initialized randomly, and drawn from a uniform distribution spanning $[-1, 1]$, and $b_1 \in \mathbb{R}^{n \times h_l}$ is the bias matrix. Thus, $H_1$ is given as:
\begin{align}
   H_1= \begin{bmatrix}
      \phi(x_1w_1+b^{(1)}) & \ldots & \phi(x_1w_{h_l}+b^{(h_l)}) \\
     \vdots & \vdots & \vdots \\
      \phi(x_nw_n+b^{(1)}) & \ldots & \phi(x_nw_{h_l}+b^{(h_l)})
    \end{bmatrix},
\end{align}
here $w_k \in \mathbb{R}^{m \times 1}$ represents the $k^{th}$ column vector of the weights matrix of $W_1$, $x_i \in \mathbb{R}^{1 \times m}$ denotes the $i^{th}$ sample of matrix $X$ and $b^{(j)}$ signifies the bias term of the $j^{th}$ hidden node. The weights of the output layer are determined through the following matrix equation:
\begin{align}
\label{eq:3}
    \begin{bmatrix}
            X & H_1
        \end{bmatrix}W_2 = \hat{Y}.
\end{align}
Here, \( W_2 \in \mathbb{R}^{(m + h_l) \times 2} \) denotes the weight matrix that connects both the input and the concatenated hidden nodes to the output layer, while \( \hat{Y} \) denotes the predicted output. The optimization problem derived from Eq. (\ref{eq:3}) can be formulated as:

\begin{align}
\label{eq:4}
    (W_2)_{min} = \underset{W_2}{\arg\min} \frac{\mathcal{C}}{2}\|H_2W_2 - Y\|^2 + \frac{1}{2}\|W_2\|^2,
\end{align}
where $H_2 = \begin{bmatrix}
            X & H_1
        \end{bmatrix}$. The optimal solution of Eq. (\ref{eq:4}) is defined as follows:
\begin{align}
    (W_2)_{min}=\left\{\begin{array}{ll}\left({H_2}^t {H_2}+\frac{1}{\mathcal{C}} I\right)^{-1} {H_2}^{t} {Y}, & (m+h_l) \leq n,\vspace{3mm} \\ 
{H_2}^t\left({H_2} {H_2}^t+\frac{1}{\mathcal{C}} I\right)^{-1} {Y}, & n<(m+h_l), \end{array}\right.
\end{align}
where $\mathcal{C}>0$ is a tunable parameter and $I$ represents the identity matrix of conformal dimensions.

\clearpage
\section*{Supplementary Material}

\renewcommand{\thesection}{S.I}
\section{Sensitivity Analysis}
\label{Sensitivity Apendix}
In this section, we perform the sensitivity analysis of several key hyperparameters of the proposed MvRVFL model. These analyses covered various factors, including hyperparameters $\mathcal{C}_1$ and $\mathcal{C}_2$ discussed in subsection \ref{Effect of parameter C1 and C2}. Performance with different numbers of training samples on the AwA dataset is discussed in subsection \ref{Performance with different number of training samples on AwA dataset}. Finally, we conduct the sensitivity of $\theta$ and $\rho$ discussed in subsection \ref{Sensitivity of theta and rho}.  

\subsection{Effect of parameter \texorpdfstring{$\mathcal{C}_1$}{C1} and \texorpdfstring{$\mathcal{C}_2$}{C2} on the performance of the proposed MvRVFL model on AwA dataset}
\label{Effect of parameter C1 and C2}
The performance of the proposed MvRVFL model is assessed by adjusting the values of $\mathcal{C}_1$ and $\mathcal{C}_2$. This thorough analysis helps us pinpoint the configuration that enhances predictive accuracy and improves the model's robustness against new data samples. Fig. \ref{effect of parameter C1 and C2} illustrates significant variations in the model's accuracy across different values of $\mathcal{C}_1$ and $\mathcal{C}_2$, underscoring the model's sensitivity to these specific hyperparameters.

According to the findings presented in Fig. \ref{effect of parameter C1 and C2}, optimal performance of the proposed model is observed within the $\mathcal{C}_1$ and $\mathcal{C}_2$ ranges of $10^{-4}$ to $10^{4}$. These findings indicate that both $\mathcal{C}_1$ and $\mathcal{C}_2$ significantly impact the model's performance. Hence, it is advisable to meticulously select the hyperparameters $\mathcal{C}_1$ and $\mathcal{C}_2$ in the MvRVFL model to achieve superior generalization performance.

\renewcommand{\thefigure}{S.1}
\begin{figure*}[htp]
\begin{minipage}{.246\linewidth}
\centering
\subfloat[Chimpanzee vs Giant panda]{\label{fig:2a}\includegraphics[scale=0.20]{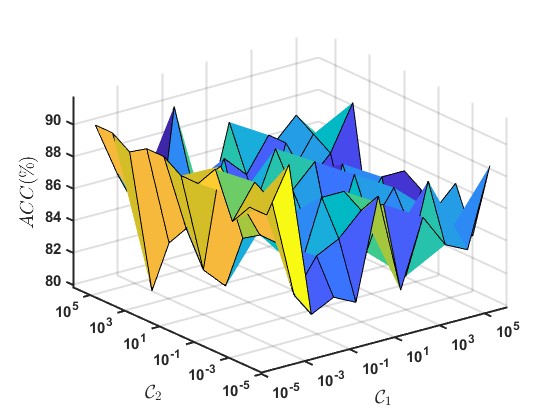}}
\end{minipage}
\begin{minipage}{.246\linewidth}
\centering
\subfloat[Giant panda vs Leopard]{\label{fig:2b}\includegraphics[scale=0.20]{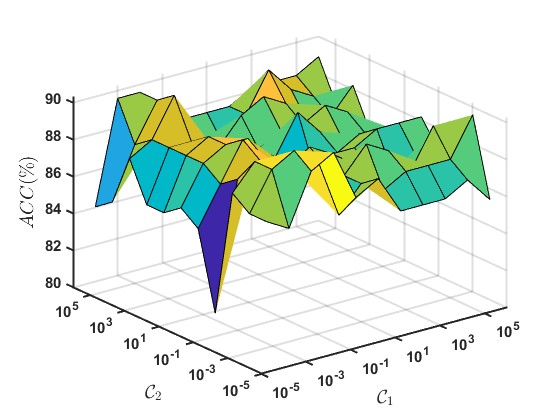}}
\end{minipage}
\begin{minipage}{.246\linewidth}
\centering
\subfloat[Persian cat vs Rat
]{\label{fig:2c}\includegraphics[scale=0.20]{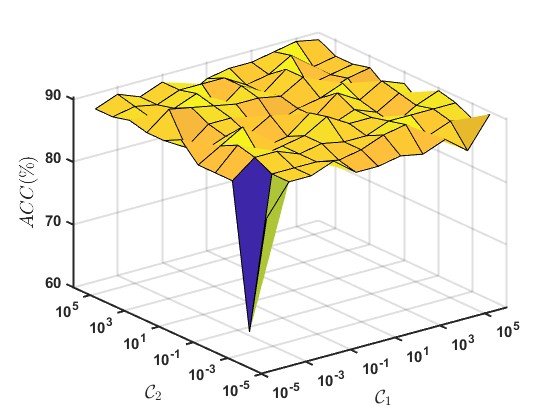}}
\end{minipage}
\begin{minipage}{.246\linewidth}
\centering
\subfloat[Giant panda vs Hippopotamus]{\label{fig:2d}\includegraphics[scale=0.20]{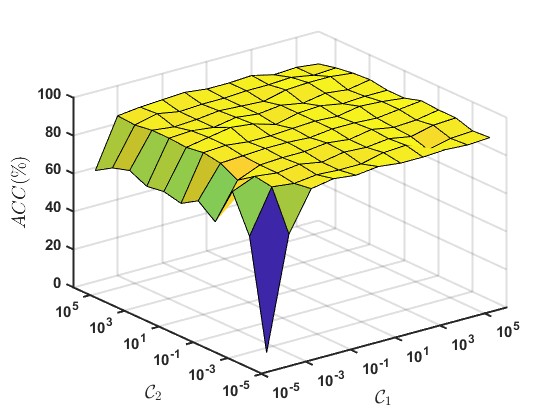}}
\end{minipage}
\caption{Effect of parameter $\mathcal{C}_1$ and $\mathcal{C}_2$ on the performance of the proposed MvRVFL model on AwA dataset.}
\label{effect of parameter C1 and C2}
\end{figure*}

\subsection{Performance with different number of training samples on AwA dataset}
\label{Performance with different number of training samples on AwA dataset}
We assess how the proposed MvRVFL model's performance varies with different numbers of training samples. Fig. \ref{Performance with different number of training samples} shows how the Acc. changes as the number of training samples ranges from $86$ to $336$. The x-axis displays the number of training samples, and the y-axis shows the corresponding Acc. values. It is observed that the Acc. value generally increases with the rise in the number of training samples. This is because an increase in training samples provides more data for the model to learn from, leading to improved accuracy in the classification results.

\renewcommand{\thefigure}{S.2}
\begin{figure*}[htp]
\begin{minipage}{.246\linewidth}
\centering
\subfloat[Chimpanzee vs \\ Giant panda]{\label{fig:3a}\includegraphics[scale=0.24]{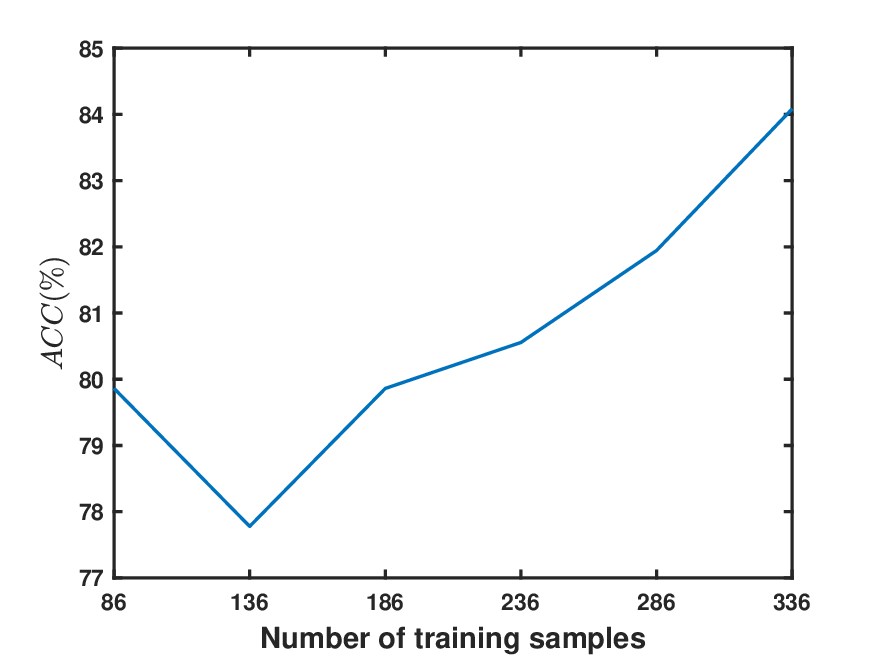}}
\end{minipage}
\begin{minipage}{.246\linewidth}
\centering
\subfloat[Chimpanzee panda vs \\ Hippopotamus]{\label{fig:3b}\includegraphics[scale=0.24]{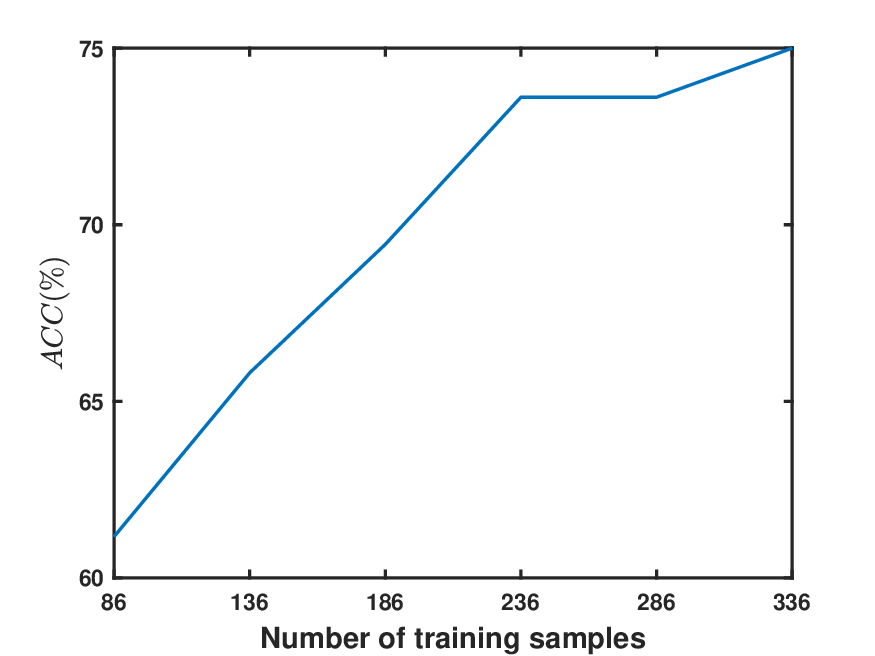}}
\end{minipage}
\begin{minipage}{.246\linewidth}
\centering
\subfloat[Chimpanzee vs \\ Seal
]{\label{fig:3c}\includegraphics[scale=0.24]{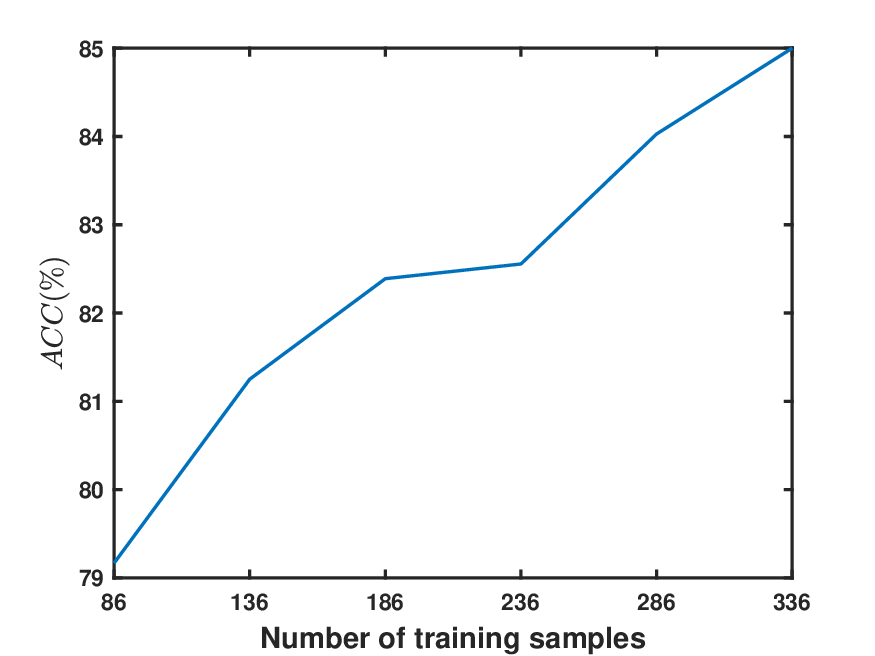}}
\end{minipage}
\begin{minipage}{.246\linewidth}
\centering
\subfloat[Giant panda vs \\ Leopard]{\label{fig:3d}\includegraphics[scale=0.24]{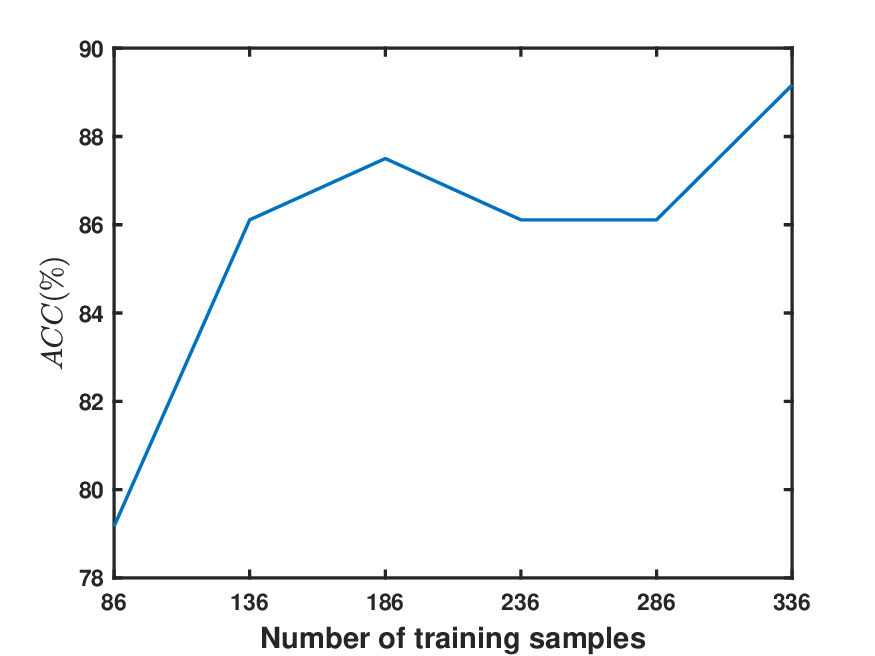}}
\end{minipage}
\caption{Performance with different number of training samples of the proposed MvRVFL model on AwA dataset.}
\label{Performance with different number of training samples}
\end{figure*}

\subsection{Sensitivity of \texorpdfstring{$\theta$}{theta} and \texorpdfstring{$\rho$}{rho} combinations}
\label{Sensitivity of theta and rho}
We explore the sensitivity of the MvRVFL model to various values of parameters $\theta$ and $\rho$. The parameter $\theta$ regulates the gap between two views, while $\rho$ is linked to the coupling term $\xi_1^t\xi_2$. We vary the values of $\theta$ and $\rho$ from $10^{-5}$ to $10^{5}$ and record the corresponding Acc. results. With other parameters held constant at their optimal settings, Fig. \ref{Performance variation with different values of parameters} illustrates how the performance of MvRVFL changes as the values of parameters $\theta$ and $\rho$ are varied.

From the perspective of hyperparameters, $\theta$ and $\rho$, the parameter $\theta$ regulates the gap between view $A$ and view $B$. With the same value of $\rho$, when $\theta > 10^{-1}$, it indicates that view $B$ plays a more significant role than view $A$ in learning the overall model. Otherwise, view $A$ is more important. For instance, on the 1000 and 10000 sub-datasets, the Acc. reaches its highest value when the parameter $\theta$ is relatively large (e.g., $10^{3}$ or $10^{5}$), suggesting that view $B$ holds greater importance than view $A$. Furthermore, on the 103000, and 143000 sub-datasets, the optimal performance is achieved when the parameter $\theta$ is small (e.g., $10^{-5}$ or $10^{-3}$), indicating that view $A$ holds more significance than view $B$.

\renewcommand{\thefigure}{S.3}
\begin{figure*}[htp]
\begin{minipage}{.246\linewidth}
\centering
\subfloat[1000]{\label{fig:4a}\includegraphics[scale=0.22]{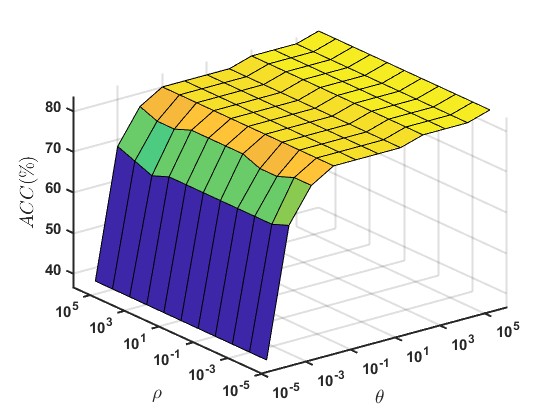}}
\end{minipage}
\begin{minipage}{.246\linewidth}
\centering
\subfloat[10000]{\label{fig:4b}\includegraphics[scale=0.22]{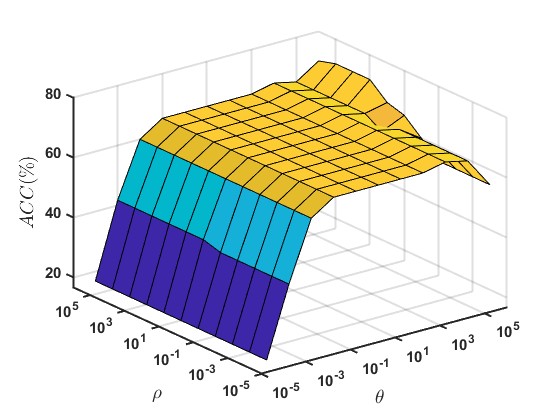}}
\end{minipage}
\begin{minipage}{.246\linewidth}
\centering
\subfloat[103000]{\label{fig:4c}\includegraphics[scale=0.22]{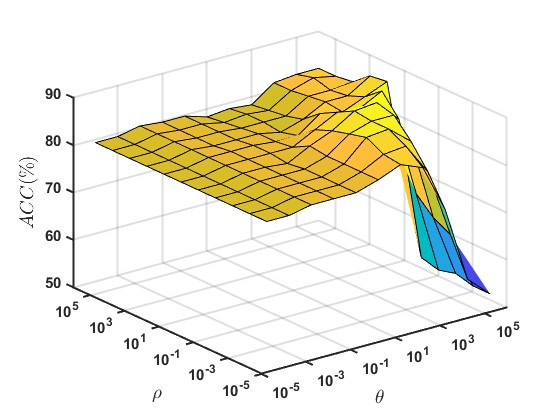}}
\end{minipage}
\begin{minipage}{.246\linewidth}
\centering
\subfloat[143000]{\label{fig:4d}\includegraphics[scale=0.22]{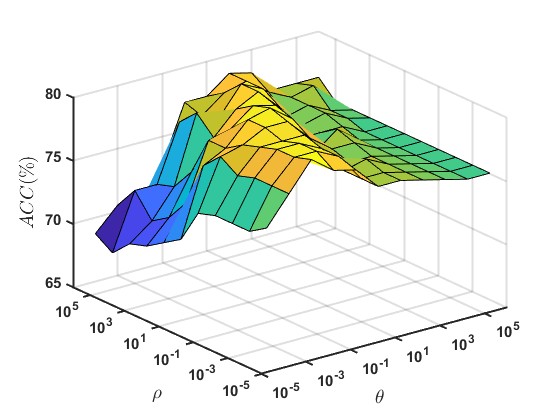}}
\end{minipage}
\caption{Performance variation with different values of parameters $\theta$ and $\rho$ of the proposed MvRVFL model on Corel5k dataset.}
\label{Performance variation with different values of parameters}
\end{figure*}

\renewcommand{\thesection}{S.II}
\section{Classification accuracy tables of the proposed MvRVFL models along with baseline models on UCI, KEEL, AwA and Corel5k datasets}
In this section, we present the performance of the proposed MvRVFL models, along with the baseline models.

\renewcommand{\thetable}{S.I}
\begin{table*}[htp]
\centering
    \caption{Performance comparison of the proposed MvRVFL along with the baseline models based on classification Acc. for UCI and KEEL datasets.}
    \label{Classification performance in Linear Case.}
    \resizebox{1.1\linewidth}{!}{
\begin{tabular}{lccccccccccc}
\hline
Dataset & SVM2K \cite{farquhar2005two} & MvTSVM \cite{xie2015multi}   & ELM-Vw-$A$ \cite{huang2006extreme} & ELM-Vw-$B$ \cite{huang2006extreme} & RVFL-Vw-$A$ \cite{pao1994learning} & RVFL-Vw-$B$ \cite{pao1994learning} & MVLDM \cite{hu2024multiview} & CDC \cite{kang2024cdc} & MCMC \cite{bian2025multilevel} & MvRVFL-1$^{\dagger}$ & MvRVFL-2$^{\dagger}$ \\ 
& $(C_1)$ & $(C_1, C_2, D)$ & $(C, N)$ & $(C, N)$ & $(C, N)$ & $(C, N)$ & $(C_1, \nu_1, \nu_2, \sigma)$ & $(\alpha, \beta, m)$ & $(\lambda_1, \lambda_2, \lambda_3, \zeta)$ & $(C_1, C_2, \rho, N)$ & $(C_1, C_2, \rho, N)$ \\
\hline
aus & $87.02$ & $71.15$ & $85.98$ & $86.06$ & $85.98$ & $85.98$ & $71.98$ & $73.62$ & $85.65$ & $87.5$ & $87.98$ \\
 & $(0.001)$ & $(0.00001, 0.00001, 0.00001)$ & $(0.01, 123)$ & $(0.001, 163)$ & $(0.001, 163)$ & $(10, 23)$ & $(0.25, 0.25, 0.25, 0.25)$ & $(0.0001, 10, 100)$ & $(0.01, 1, 0.0001, 0.001)$ & $(1, 0.0001, 0.00001, 63)$ & $(100, 100000, 1, 43)$ \\
bank & $80.74$ & $71.86$ & $85.54$ & $85.39$ & $85.83$ & $85.61$ & $73.67$ & $87.19$ & $88.53$ & $90.05$ & $89.24$ \\
 & $(0.001)$ & $(0.00001, 0.00001, 0.00001)$ & $(0.00001, 3)$ & $(10000, 23)$ & $(1000, 163)$ & $(10, 123)$ & $(0.25, 0.25, 0.25, 4)$ & $(1, 10000, 30)$ & $(2.5, 0.1, 0.00001, 0.0001)$ & $(100, 0.1, 0.0001, 203)$ & $(100, 1000, 0.0001, 83)$ \\
breast\_cancer & $62.45$ & $55.58$ & $69.77$ & $65.12$ & $67.44$ & $66.28$ & $70$ & $73.15$ & $71.4$ & $70.93$ & $75.58$ \\
 & $(0.001)$ & $(0.00001, 0.00001, 0.00001)$ & $(1, 83)$ & $(0.01, 43)$ & $(0.0001, 3)$ & $(1, 83)$ & $(0.25, 0.5, 0.5, 4)$ & $(10, 1000, 50)$ & $(5, 1, 0.000001, 0.00001)$ & $(0.1, 0.0001, 0.00001, 43)$ & $(0.0001, 0.01, 0.0001, 43)$ \\
breast\_cancer\_wisc & $90.04$ & $81.43$ & $92.1$ & $92.1$ & $92.14$ & $95.1$ & $75$ & $93.65$ & $95.22$ & $98.57$ & $96.19$ \\
 & $(0.00001)$ & $(0.00001, 0.00001, 0.00001)$ & $(0.0001, 203)$ & $(0.01, 63)$ & $(0.00001, 23)$ & $(0.0001, 203)$ & $(0.5, 0.25, 0.25, 0.25)$ & $(1000, 10, 10)$ & $(0.1, 0.01, 0.001, 0.001)$ & $(0.00001, 0.1, 0.00001, 163)$ & $(100, 0.001, 10, 23)$ \\
breast\_cancer\_wisc\_diag & $95.49$ & $88.6$ & $92.42$ & $97.08$ & $95.83$ & $96.49$ & $93.15$ & $95.82$ & $94.44$ & $98.25$ & $97.08$ \\
 & $(0.001)$ & $(0.00001, 0.00001, 0.00001)$ & $(0.01, 163)$ & $(0.001, 163)$ & $(0.001, 163)$ & $(100, 3)$ & $(0.25, 0.25, 1, 4)$ & $(10000, 1, 100)$ & $(1, 2.5, 0.0001, 0.0001)$ & $(0.001, 0.0001, 0.0001, 123)$ & $(0.1, 10, 0.0001, 83)$ \\
breast\_cancer\_wisc\_prog & $58.33$ & $58.33$ & $68.33$ & $68.33$ & $73.33$ & $68.33$ & $71.17$ & $70.63$ & $72.09$ & $73.33$ & $73.33$ \\
 & $(0.001)$ & $(0.00001, 0.00001, 0.00001)$ & $(0.01, 203)$ & $(0.01, 83)$ & $(100000, 3)$ & $(0.01, 203)$ & $(0.25, 0.25, 4, 2)$ & $(0.0001, 0.0001, 70)$ & $(10, 5, 0.000001, 0.000001)$ & $(0.00001, 1000, 0.01, 23)$ & $(10000, 100, 100, 3)$ \\
brwisconsin & $97.56$ & $61.95$ & $97.07$ & $96.1$ & $97.07$ & $96.59$ & $95.59$ & $95.34$ & $92.74$ & $96.1$ & $97.56$ \\
 & $(0.00001)$ & $(0.00001, 0.00001, 0.00001)$ & $(0.0001, 143)$ & $(0.001, 63)$ & $(0.001, 63)$ & $(100000, 23)$ & $(0.25, 0.25, 2, 4)$ & $(1, 10, 100)$ & $(0.01, 10, 0.00001, 0.00001)$ & $(0.00001, 1, 0.001, 163)$ & $(0.0001, 1, 0.00001, 203)$ \\
bupa or liver-disorders & $54.8$ & $42.31$ & $63.46$ & $65.38$ & $63.46$ & $66.35$ & $55.34$ & $65.07$ & $66.14$ & $70.19$ & $64.42$ \\
 & $(0.00001)$ & $(0.00001, 0.00001, 0.00001)$ & $(1000, 23)$ & $(0.1, 163)$ & $(0.1, 163)$ & $(0.1, 163)$ & $(0.25, 0.25, 0.25, 4)$ & $(10, 0.0001, 30)$ & $(2.5, 2.5, 0.0001, 0.001)$ & $(10, 0.01, 0.001, 23)$ & $(0.1, 1, 0.0001, 83)$ \\
checkerboard\_Data & $87.02$ & $43.75$ & $86.98$ & $86.06$ & $86.98$ & $86.98$ & $84.06$ & $82.83$ & $85.03$ & $87.5$ & $87.98$ \\
 & $(0.00001)$ & $(0.00001, 0.00001, 0.00001)$ & $(0.01, 123)$ & $(0.001, 163)$ & $(0.001, 163)$ & $(10, 23)$ & $(0.25, 0.25, 0.25, 1)$ & $(1000, 10000, 50)$ & $(5, 0.1, 0.001, 0.0001)$ & $(1, 0.0001, 0.00001, 63)$ & $(100, 100000, 1, 43)$ \\
chess\_krvkp & $80.45$ & $82.35$ & $95.62$ & $93.33$ & $95.62$ & $94.68$ & $97.7$ & $90.46$ & $94.85$ & $96.77$ & $94.58$ \\
 & $(0.001)$ & $(0.00001, 0.00001, 0.00001)$ & $(100, 203)$ & $(100000, 203)$ & $(10000, 203)$ & $(100, 203)$ & $(0.25, 0.25, 0.25, 4)$ & $(10000, 10, 10)$ & $(0.1, 5, 0.0001, 0.000001)$ & $(0.1, 0.0001, 0.1, 203)$ & $(100000, 0.00001, 10, 203)$ \\
cleve & $80$ & $75.56$ & $80$ & $85.56$ & $81.11$ & $81.11$ & $84.27$ & $82.73$ & $80.59$ & $84.44$ & $81.11$ \\
 & $(0.1)$ & $(0.00001, 0.00001, 0.00001)$ & $(0.001, 103)$ & $(0.001, 123)$ & $(0.01, 3)$ & $(0.01, 23)$ & $(0.25, 0.25, 0.25, 1)$ & $(0.0001, 1000, 100)$ & $(1, 0.1, 0.000001, 0.00001)$ & $(0.00001, 1, 0.00001, 203)$ & $(0.0001, 0.1, 0.00001, 123)$ \\
cmc & $64.25$ & $55.88$ & $69.91$ & $70.14$ & $68.1$ & $71.72$ & $74.38$ & $69.87$ & $67.91$ & $72.17$ & $70.36$ \\
 & $(0.00001)$ & $(0.00001, 0.00001, 0.00001)$ & $(0.01, 143)$ & $(0.01, 183)$ & $(100, 63)$ & $(1000, 43)$ & $(0.25, 0.25, 0.25, 4)$ & $(1, 0.0001, 50)$ & $(10, 1, 0.00001, 0.001)$ & $(1000, 1, 0.001, 23)$ & $(10000, 0.00001, 0.0001, 23)$ \\
conn\_bench\_sonar\_mines\_rocks & $80.95$ & $46.03$ & $80.54$ & $74.6$ & $88.54$ & $73.02$ & $75.81$ & $78.84$ & $75.77$ & $80.95$ & $66.67$ \\
 & $(1000)$ & $(0.00001, 0.00001, 0.00001)$ & $(10, 3)$ & $(0.01, 203)$ & $(0.01, 203)$ & $(0.1, 183)$ & $(0.25, 0.25, 4, 0.5)$ & $(10, 10, 70)$ & $(0.01, 0.01, 0.000001, 0.0001)$ & $(10, 1000, 0.1, 183)$ & $(10, 1000, 10, 163)$ \\
cylinder\_bands & $68.18$ & $60.39$ & $76.62$ & $74.03$ & $72.08$ & $74.68$ & $71.9$ & $72.45$ & $75.18$ & $74.68$ & $78.57$ \\
 & $(0.00001)$ & $(0.00001, 0.00001, 0.00001)$ & $(0.001, 203)$ & $(100, 143)$ & $(0.001, 183)$ & $(1, 23)$ & $(0.25, 4, 0.25, 4)$ & $(1000, 0.0001, 30)$ & $(2.5, 10, 0.000001, 0.00001)$ & $(100, 0.01, 0.0001, 43)$ & $(10, 1000, 0.01, 43)$ \\
fertility & $75.25$ & $75$ & $90$ & $90$ & $90$ & $90$ & $86.67$ & $86$ & $92$ & $93.33$ & $86.67$ \\
 & $(0.01)$ & $(0.00001, 0.00001, 0.00001)$ & $(0.01, 83)$ & $(0.01, 63)$ & $(0.01, 63)$ & $(0.01, 83)$ & $(0.25, 0.25, 2, 2)$ & $(10000, 1000, 50)$ & $(5, 2.5, 0.00001, 0.0001)$ & $(0.01, 1, 0.01, 183)$ & $(0.0001, 10000, 1, 63)$ \\
hepatitis & $80.85$ & $78.72$ & $78.85$ & $80.85$ & $80.85$ & $78.72$ & $78.26$ & $75.82$ & $74.19$ & $85.11$ & $78.72$ \\
 & $(0.00001)$ & $(0.00001, 0.00001, 0.00001)$ & $(1, 23)$ & $(0.01, 183)$ & $(0.1, 123)$ & $(10, 3)$ & $(0.25, 0.25, 0.25, 4)$ & $(0.0001, 1, 10)$ & $(0.1, 1, 0.0001, 0.001)$ & $(0.01, 100, 0.00001, 203)$ & $(0.01, 100, 0.00001, 203)$ \\
hill\_valley & $60.98$ & $53.3$ & $68.05$ & $51.92$ & $68.78$ & $51.92$ & $56.2$ & $65.33$ & $67.82$ & $68.96$ & $75.89$ \\
 & $(0.00001)$ & $(0.00001, 0.00001, 0.00001)$ & $(10000, 103)$ & $(1, 103)$ & $(10, 103)$ & $(10000, 103)$ & $(0.25, 0.25, 0.5, 1)$ & $(1, 1000, 100)$ & $(1, 0.01, 0.001, 0.000001)$ & $(10, 0.1, 0.0001, 123)$ & $(100, 1, 100, 143)$ \\
mammographic & $80.27$ & $77.06$ & $80.28$ & $82.01$ & $80.28$ & $81.66$ & $83.33$ & $81.61$ & $83.26$ & $84.08$ & $83.74$ \\
 & $(0.01)$ & $(0.00001, 0.00001, 0.00001)$ & $(1, 83)$ & $(100000, 63)$ & $(100000, 23)$ & $(100, 23)$ & $(0.25, 0.25, 1, 4)$ & $(10, 1, 100)$ & $(10, 0.1, 0.000001, 0.0001)$ & $(1000, 1, 0.00001, 23)$ & $(1000, 0.00001, 0.0001, 23)$ \\
monks\_3 & $80.24$ & $76.11$ & $95.21$ & $95.41$ & $95.21$ & $95.81$ & $96.39$ & $95.21$ & $92.18$ & $96.41$ & $96.41$ \\
 & $(0.01)$ & $(0.00001, 0.00001, 0.00001)$ & $(0.1, 143)$ & $(0.1, 163)$ & $(0.1, 163)$ & $(0.1, 123)$ & $(0.25, 0.25, 2, 4)$ & $(1000, 1, 70)$ & $(0.01, 0.1, 0.00001, 0.001)$ & $(0.1, 0.00001, 0.001, 163)$ & $(0.1, 1, 0.00001, 183)$ \\
new-thyroid1 & $78.46$ & $82.31$ & $98.46$ & $96.92$ & $98.46$ & $96.92$ & $95.31$ & $98.16$ & $98.67$ & $100$ & $100$ \\
 & $(0.1)$ & $(0.00001, 0.00001, 0.00001)$ & $(10, 103)$ & $(1, 43)$ & $(1, 23)$ & $(10, 103)$ & $(0.25, 2, 0.5, 0.25)$ & $(10000, 0.0001, 10)$ & $(2.5, 0.01, 0.0001, 0.000001)$ & $(0.001, 0.0001, 0.00001, 183)$ & $(1, 0.001, 0.01, 123)$ \\
oocytes\_merluccius\_nucleus\_4d & $74.27$ & $64.82$ & $82.41$ & $81.11$ & $83.71$ & $80.78$ & $75.16$ & $80.54$ & $81.38$ & $82.08$ & $81.76$ \\
 & $(0.00001)$ & $(0.00001, 0.00001, 0.00001)$ & $(0.1, 83)$ & $(1, 143)$ & $(1, 143)$ & $(0.1, 83)$ & $(0.5, 4, 0.5, 4)$ & $(0.0001, 10000, 30)$ & $(5, 5, 0.000001, 0.00001)$ & $(10, 100, 0.1, 143)$ & $(1, 10000, 0.001, 203)$ \\
oocytes\_trisopterus\_nucleus\_2f & $78.83$ & $58.39$ & $86.13$ & $78.83$ & $85.04$ & $78.83$ & $82.05$ & $82.41$ & $83.88$ & $85.04$ & $86.5$ \\
 & $(0.0001)$ & $(0.00001, 0.00001, 0.00001)$ & $(0.01, 103)$ & $(0.1, 143)$ & $(100000, 43)$ & $(0.01, 103)$ & $(0.25, 0.25, 1, 4)$ & $(1, 1, 10)$ & $(0.1, 2.5, 0.001, 0.0001)$ & $(100000, 0.0001, 100, 83)$ & $(0.1, 100, 0.1, 203)$ \\
parkinsons & $71.19$ & $71.19$ & $81.36$ & $77.97$ & $89.83$ & $77.97$ & $93.1$ & $87.4$ & $88.21$ & $89.83$ & $93.22$ \\
 & $(0.01)$ & $(0.00001, 0.00001, 0.00001)$ & $(0.1, 163)$ & $(1, 63)$ & $(0.1, 203)$ & $(0.1, 163)$ & $(0.25, 0.25, 1, 2)$ & $(10, 1000, 100)$ & $(1, 1, 0.0001, 0.000001)$ & $(1000, 100000, 0.1, 163)$ & $(1, 1000, 0.01, 203)$ \\
pima & $76.19$ & $73.33$ & $74.89$ & $74.03$ & $74.46$ & $74.03$ & $69.13$ & $74.9$ & $72.91$ & $76.19$ & $74.89$ \\
 & $(0.01)$ & $(0.00001, 0.00001, 0.00001)$ & $(0.1, 63)$ & $(0.01, 143)$ & $(0.01, 143)$ & $(0.1, 63)$ & $(0.25, 2, 2, 4)$ & $(1000, 10, 50)$ & $(10, 2.5, 0.0001, 0.001)$ & $(0.001, 1, 0.01, 3)$ & $(10000, 0.01, 0.001, 3)$ \\
pittsburg\_bridges\_T\_OR\_D & $70.85$ & $65.85$ & $89.32$ & $93.55$ & $80.65$ & $80.65$ & $90$ & $82.52$ & $80.98$ & $90.32$ & $74.19$ \\
 & $(100)$ & $(0.00001, 0.00001, 0.00001)$ & $(0.01, 83)$ & $(0.01, 43)$ & $(1000, 3)$ & $(0.01, 3)$ & $(0.25, 0.25, 1, 4)$ & $(10000, 1, 70)$ & $(0.01, 5, 0.000001, 0.00001)$ & $(0.00001, 1, 0.01, 163)$ & $(10000, 10, 0.0001, 3)$ \\
planning & $65.85$ & $63.64$ & $76.36$ & $76.36$ & $76.36$ & $76.36$ & $68.52$ & $62.87$ & $65.65$ & $76.36$ & $63.64$ \\
 & $(10)$ & $(0.00001, 0.00001, 0.00001)$ & $(0.01, 23)$ & $(0.00001, 3)$ & $(0.00001, 23)$ & $(0.01, 23)$ & $(0.25, 0.25, 1, 1)$ & $(1, 0.0001, 100)$ & $(2.5, 10, 0.000001, 0.00001)$ & $(0.01, 0.00001, 0.0001, 23)$ & $(0.0001, 0.01, 100000, 43)$ \\
ripley & $89.07$ & $80.67$ & $86.13$ & $87.53$ & $87.53$ & $87.53$ & $89.07$ & $85.17$ & $86.84$ & $89.87$ & $89.6$ \\
 & $(100000)$ & $(0.00001, 0.00001, 0.00001)$ & $(10000, 103)$ & $(100, 203)$ & $(10000, 123)$ & $(10000, 103)$ & $(0.25, 0.25, 0.25, 4)$ & $(10, 1, 10)$ & $(5, 10, 0.0001, 0.001)$ & $(1, 0.00001, 0.0001, 123)$ & $(10000, 0.01, 10, 83)$ \\ \hline
Average Acc. & $76.65$ & $67.24$ & $82.66$ & $81.7$ & $83.14$ & $81.26$ & $79.9$ & $81.1$ & $81.98$ & $85.15$ & $83.18$ \\ \hline
Average Rank & $7.72$ & $10.5$ & $5.54$ & $6.26$ & $4.94$ & $6.06$ & $6.52$ & $6.7$ & $6$ & $2.15$ & $3.61$ \\ \hline
\multicolumn{10}{l}{$^{\dagger}$ represents the proposed models.}
\end{tabular}}
\end{table*}

\renewcommand{\thetable}{S.II}
\begin{table*}[htp]
\centering
    \caption{Performance comparison of the proposed MvRVFL along with the baseline models based on classification Acc. for AwA datasets.}
    \label{Classification performance AwA in Linear Case.}
    \resizebox{1.1\linewidth}{!}{
\begin{tabular}{lccccccccccc}
\hline
Dataset & SVM2K \cite{farquhar2005two} & MvTSVM \cite{xie2015multi}  & ELM-Vw-$A$ \cite{huang2006extreme} & ELM-Vw-$B$ \cite{huang2006extreme} & RVFL-Vw-$A$ \cite{pao1994learning} & RVFL-Vw-$B$ \cite{pao1994learning} & MVLDM \cite{hu2024multiview} & CDC \cite{kang2024cdc} & MCMC \cite{bian2025multilevel} & MvRVFL-1$^{\dagger}$ & MvRVFL-2$^{\dagger}$ \\ 
 & $(C_1)$ & $(C_1, C_2, D)$ & $(C, N)$ & $(C, N)$ & $(C, N)$ & $(C, N)$ & $(C_1, \nu_1, \nu_2, \sigma)$ & $(\alpha, \beta, m)$ & $(\lambda_1, \lambda_2, \lambda_3, \zeta)$ & $(C_1, C_2, \rho, N)$ & $(C_1, C_2, \rho, N)$ \\ \hline
Chimpanzee vs Giant   panda & $84.03$ & $47.22$ & $71.53$ & $72.92$ & $71.53$ & $80.89$ & $72.22$ & $75.21$ & $76.67$ & $88.19$ & $84.72$ \\
 & $(0.00001)$ & $(0.00001, 0.00001, 0.00001)$ & $(0.0001, 123)$ & $(0.00001, 163)$ & $(0.001, 3)$ & $(0.001, 3)$ & $(1000, 0.0001, 0.01, 4)$ & $(1000, 0.0001, 50)$ & $(0.1, 5, 0.0001, 0.00001)$ & $(100, 100000, 0.01, 3)$ & $(0.001, 10, 0.0001, 3)$ \\
Chimpanzee vs Leopard & $80.11$ & $46.53$ & $63.89$ & $83.33$ & $72.92$ & $80.42$ & $68.75$ & $85.63$ & $81.04$ & $89.58$ & $84.72$ \\
 & $(0.00001)$ & $(0.00001, 0.00001, 0.00001)$ & $(1000, 43)$ & $(0.00001, 203)$ & $(0.01, 23)$ & $(0.0001, 3)$ & $(0.001, 0.00001, 0.00001, 4)$ & $(0.0001, 10, 30)$ & $(5, 0.01, 0.001, 0.000001)$ & $(0.001, 1, 0.00001, 3)$ & $(0.001, 1, 0.00001, 3)$ \\
Chimpanzee vs Persian cat & $70.86$ & $50$ & $79.86$ & $69.44$ & $79.17$ & $80.56$ & $86.11$ & $75.25$ & $71.25$ & $83.33$ & $74.31$ \\
 & $(0.0001)$ & $(0.00001, 0.00001, 0.00001)$ & $(0.0001, 183)$ & $(0.00001, 183)$ & $(0.0001, 183)$ & $(0.1, 3)$ & $(100, 0.0001, 0.00001, 4)$ & $(1, 10000, 100)$ & $(1, 2.5, 0.00001, 0.001)$ & $(0.001, 1, 0.00001, 3)$ & $(0.001, 1, 0.001, 23)$ \\
Chimpanzee vs Pig & $50.42$ & $51.39$ & $68.75$ & $81.25$ & $69.44$ & $79.17$ & $66.67$ & $80.21$ & $79.04$ & $61.81$ & $80.56$ \\
 & $(0.01)$ & $(0.00001, 0.00001, 0.00001)$ & $(0.00001, 163)$ & $(0.00001, 163)$ & $(0.00001, 163)$ & $(10, 3)$ & $(100000, 0.001, 0.001, 4)$ & $(10, 1, 70)$ & $(10, 0.1, 0.0001, 0.0001)$ & $(0.1, 10000, 100000, 23)$ & $(0.001, 1, 0.00001, 3)$ \\
Chimpanzee vs Hippopotamus & $70.94$ & $54.86$ & $71.53$ & $70.14$ & $72.92$ & $78.47$ & $78.47$ & $70.63$ & $72.33$ & $79.17$ & $75$ \\
 & $(0.00001)$ & $(0.00001, 0.00001, 0.00001)$ & $(0.00001, 143)$ & $(0.00001, 183)$ & $(0.00001, 143)$ & $(0.001, 3)$ & $(1000, 0.0001, 1, 4)$ & $(10000, 1000, 10)$ & $(0.01, 1, 0.000001, 0.00001)$ & $(0.0001, 0.1, 0.00001, 3)$ & $(0.001, 1, 0.00001, 3)$ \\
Chimpanzee vs Humpback whale & $92.36$ & $81.39$ & $86.81$ & $92.36$ & $88.89$ & $91.14$ & $81.25$ & $91.96$ & $90.79$ & $95.14$ & $93.06$ \\
 & $(0.1)$ & $(0.00001, 0.00001, 0.00001)$ & $(0.0001, 83)$ & $(0.00001, 183)$ & $(0.0001, 83)$ & $(0.01, 3)$ & $(10000, 0.001, 0.0001, 4)$ & $(0.0001, 0.0001, 100)$ & $(2.5, 10, 0.001, 0.0001)$ & $(0.001, 0.01, 0.00001, 23)$ & $(0.001, 10, 0.00001, 3)$ \\
Chimpanzee vs Raccoon & $80.33$ & $63.47$ & $69.44$ & $63.89$ & $73.61$ & $79.86$ & $72.22$ & $81.04$ & $80.83$ & $83.33$ & $82.64$ \\
 & $(0.001)$ & $(0.00001, 0.00001, 100000)$ & $(0.0001, 203)$ & $(0.00001, 123)$ & $(0.001, 3)$ & $(1000, 23)$ & $(10000, 0.00001, 0.01, 4)$ & $(1, 10, 50)$ & $(5, 5, 0.00001, 0.000001)$ & $(0.001, 1, 0.0001, 3)$ & $(0.001, 1, 0.00001, 3)$ \\
Chimpanzee vs Rat & $77.08$ & $52.78$ & $57.64$ & $75$ & $63.89$ & $71.94$ & $68.06$ & $82.08$ & $83.96$ & $81.25$ & $78.47$ \\
 & $(10)$ & $(0.00001, 0.00001, 0.00001)$ & $(100000, 43)$ & $(0.00001, 183)$ & $(0.001, 163)$ & $(0.001, 3)$ & $(100000, 0.0001, 0.00001, 4)$ & $(1000, 1, 30)$ & $(0.1, 0.1, 0.0001, 0.001)$ & $(0.1, 1, 0.01, 43)$ & $(0.00001, 0.0001, 0.00001, 63)$ \\
Chimpanzee vs Seal & $70.69$ & $53.47$ & $79.17$ & $76.39$ & $79.17$ & $79.81$ & $75.69$ & $78.54$ & $76.46$ & $83.33$ & $78.47$ \\
 & $(0.01)$ & $(0.00001, 0.00001, 0.00001)$ & $(0.0001, 183)$ & $(0.00001, 143)$ & $(0.001, 3)$ & $(0.0001, 3)$ & $(0.001, 0.001, 0.01, 0.25)$ & $(10, 10000, 70)$ & $(1, 0.01, 0.000001, 0.0001)$ & $(10, 10000, 0.001, 23)$ & $(0.01, 10, 0.0001, 3)$ \\
Giant panda vs Leopard & $80.19$ & $54.17$ & $61.11$ & $78.47$ & $72.92$ & $80.11$ & $61.81$ & $83.96$ & $85.79$ & $88.89$ & $83.33$ \\
 & $(0.01)$ & $(0.00001, 0.00001, 0.00001)$ & $(0.0001, 103)$ & $(0.00001, 203)$ & $(0.001, 3)$ & $(0.001, 3)$ & $(100000, 0.00001, 0.01, 0.25)$ & $(10000, 0.0001, 100)$ & $(10, 2.5, 0.001, 0.00001)$ & $(0.001, 1, 0.001, 3)$ & $(100, 0.01, 100, 63)$ \\
Giant panda vs Persian cat & $81.81$ & $52.08$ & $77.78$ & $76.39$ & $64.58$ & $80.42$ & $66.67$ & $80.96$ & $79.5$ & $85.42$ & $81.94$ \\
 & $(0.0001)$ & $(0.00001, 0.00001, 0.00001)$ & $(100, 103)$ & $(0.00001, 203)$ & $(0.001, 23)$ & $(100000, 3)$ & $(0.01, 0.00001, 0.01, 4)$ & $(0.0001, 1000, 50)$ & $(0.01, 0.1, 0.00001, 0.000001)$ & $(0.001, 0.1, 0.00001, 23)$ & $(1000, 0.01, 1000, 23)$ \\
Giant panda vs Pig & $80.56$ & $51.39$ & $63.89$ & $75$ & $65.97$ & $79.81$ & $65.97$ & $79.17$ & $82.21$ & $84.72$ & $83.33$ \\
 & $(0.0001)$ & $(0.00001, 0.00001, 0.00001)$ & $(1, 43)$ & $(0.00001, 203)$ & $(0.001, 3)$ & $(0.001, 3)$ & $(1000, 0.00001, 0.01, 0.25)$ & $(1, 0.0001, 10)$ & $(2.5, 1, 0.0001, 0.001)$ & $(1, 0.1, 0.1, 23)$ & $(0.00001, 0.01, 0.00001, 23)$ \\
Giant panda vs Hippopotamus & $77.78$ & $54.17$ & $74.31$ & $81.25$ & $68.06$ & $71.94$ & $74.31$ & $80.42$ & $81.75$ & $83.33$ & $75$ \\
 & $(0.01)$ & $(0.00001, 0.00001, 0.00001)$ & $(0.00001, 183)$ & $(0.00001, 183)$ & $(0.01, 3)$ & $(0.01, 23)$ & $(100000, 0.001, 100, 2)$ & $(10, 1000, 30)$ & $(5, 10, 0.000001, 0.0001)$ & $(0.01, 1, 0.00001, 23)$ & $(0.001, 0.00001, 0.00001, 43)$ \\
Giant panda vs Humpback whale & $93.06$ & $46.53$ & $93.06$ & $91.67$ & $93.06$ & $93.22$ & $93.75$ & $82.92$ & $90.42$ & $95.83$ & $94.44$ \\
 & $(0.00001)$ & $(0.00001, 0.00001, 0.00001)$ & $(0.0001, 203)$ & $(0.00001, 203)$ & $(0.0001, 203)$ & $(0.001, 3)$ & $(0.001, 0.00001, 0.01, 4)$ & $(1000, 10, 70)$ & $(0.1, 2.5, 0.001, 0.00001)$ & $(0.001, 0.0001, 0.0001, 3)$ & $(0.01, 100, 0.0001, 3)$ \\
Giant panda vs Raccoon & $80.19$ & $52.78$ & $68.06$ & $74.31$ & $68.75$ & $80.19$ & $64.58$ & $76$ & $74.58$ & $84.72$ & $78.47$ \\
 & $(10000)$ & $(0.00001, 0.00001, 0.00001)$ & $(0.0001, 203)$ & $(0.00001, 163)$ & $(0.00001, 203)$ & $(0.001, 3)$ & $(100000, 0.00001, 0.00001, 2)$ & $(10000, 1, 50)$ & $(1, 5, 0.00001, 0.000001)$ & $(0.001, 10, 0.0001, 3)$ & $(0.001, 10, 0.0001, 3)$ \\
Giant panda vs Rat & $83.33$ & $69.31$ & $66.67$ & $76.39$ & $68.06$ & $80.5$ & $70.14$ & $80.83$ & $80$ & $86.81$ & $81.25$ \\
 & $(1)$ & $(0.00001, 0.00001, 100000)$ & $(10, 103)$ & $(0.00001, 123)$ & $(0.001, 43)$ & $(0.001, 3)$ & $(10000, 0.0001, 0.01, 0.25)$ & $(0.0001, 1, 100)$ & $(10, 0.01, 0.0001, 0.001)$ & $(0.001, 1, 0.0001, 3)$ & $(0.001, 1, 0.00001, 3)$ \\
Giant panda vs Seal & $85.89$ & $56.94$ & $80.56$ & $77.08$ & $80.56$ & $80.19$ & $86.81$ & $83.33$ & $76.75$ & $91.67$ & $79.17$ \\
 & $(0.1)$ & $(0.00001, 0.00001, 0.00001)$ & $(0.0001, 203)$ & $(0.00001, 183)$ & $(0.001, 3)$ & $(0.0001, 3)$ & $(100000, 0.001, 0.01, 2)$ & $(1, 1000, 70)$ & $(0.01, 10, 0.000001, 0.00001)$ & $(0.01, 0.01, 0.00001, 3)$ & $(0.001, 0.001, 0.0001, 23)$ \\
Leopard vs Persian cat & $82.19$ & $79.31$ & $70.83$ & $84.72$ & $77.78$ & $88.19$ & $80.56$ & $88.33$ & $84.54$ & $90.97$ & $85.42$ \\
 & $(0.00001)$ & $(0.00001, 0.00001, 0.00001)$ & $(100, 63)$ & $(0.00001, 203)$ & $(0.001, 23)$ & $(0.001, 3)$ & $(0.00001, 0.00001, 100, 4)$ & $(10, 0.0001, 10)$ & $(2.5, 0.1, 0.001, 0.0001)$ & $(0.001, 1, 0.001, 3)$ & $(0.00001, 10, 0.00001, 163)$ \\
Leopard vs Pig & $75$ & $61.39$ & $61.11$ & $75$ & $66.67$ & $72.17$ & $68.75$ & $72.08$ & $75.21$ & $78.47$ & $75$ \\
 & $(0.01)$ & $(0.00001, 0.00001, 0.00001)$ & $(0.0001, 183)$ & $(0.00001, 183)$ & $(0.001, 3)$ & $(0.001, 3)$ & $(0.01, 0.001, 100, 4)$ & $(1000, 10000, 30)$ & $(5, 1, 0.00001, 0.001)$ & $(0.001, 1, 0.0001, 3)$ & $(0.1, 0.00001, 0.1, 23)$ \\
Leopard vs Hippopotamus & $78.17$ & $50.69$ & $73.61$ & $77.08$ & $74.31$ & $75.94$ & $75$ & $80.33$ & $79.79$ & $81.94$ & $76.39$ \\
 & $(10)$ & $(0.00001, 0.00001, 0.00001)$ & $(0.0001, 143)$ & $(0.00001, 143)$ & $(0.0001, 43)$ & $(0.0001, 3)$ & $(10000, 0.0001, 0.001, 4)$ & $(10000, 10, 100)$ & $(0.1, 0.01, 0.0001, 0.000001)$ & $(0.001, 0.00001, 0.00001, 3)$ & $(0.00001, 10, 0.00001, 183)$ \\
Leopard vs Humpback whale & $90.75$ & $79.31$ & $89.58$ & $91.67$ & $90.97$ & $90.83$ & $89.58$ & $89.17$ & $92.79$ & $93.75$ & $88.19$ \\
 & $(0.00001)$ & $(0.00001, 0.00001, 0.00001)$ & $(0.00001, 103)$ & $(0.00001, 183)$ & $(0.0001, 143)$ & $(0.001, 23)$ & $(100, 0.00001, 0.01, 4)$ & $(0.0001, 10000, 70)$ & $(1, 10, 0.001, 0.00001)$ & $(0.1, 100, 0.0001, 43)$ & $(0.0001, 0.001, 0.0001, 63)$ \\
Leopard vs Raccoon & $80.56$ & $55$ & $59.03$ & $57.64$ & $59.03$ & $69.25$ & $56.94$ & $81.67$ & $79.58$ & $84.03$ & $70.14$ \\
 & $(0.0001)$ & $(0.00001, 0.00001, 0.00001)$ & $(0.0001, 183)$ & $(0.0001, 183)$ & $(0.001, 3)$ & $(0.001, 3)$ & $(0.01, 0.00001, 0.00001, 0.25)$ & $(1, 1, 50)$ & $(10, 5, 0.00001, 0.0001)$ & $(0.001, 0.00001, 0.00001, 3)$ & $(0.0001, 0.1, 0.00001, 3)$ \\
Leopard vs Rat & $76.42$ & $68.61$ & $72.22$ & $76.39$ & $68.06$ & $79.86$ & $65.28$ & $78.58$ & $78.17$ & $79.17$ & $80.56$ \\
 & $(0.0001)$ & $(0.00001, 0.00001, 0.00001)$ & $(10000, 43)$ & $(0.00001, 183)$ & $(0.001, 3)$ & $(0.001, 3)$ & $(10000, 0.0001, 0.00001, 0.25)$ & $(10, 10, 100)$ & $(0.01, 2.5, 0.0001, 0.001)$ & $(100, 100000, 1, 3)$ & $(0.00001, 0.01, 0.00001, 23)$ \\
Leopard vs Seal & $80.42$ & $63.47$ & $75.69$ & $79.86$ & $75$ & $83.33$ & $81.25$ & $77.33$ & $78$ & $84.03$ & $78.47$ \\
 & $(10000)$ & $(0.00001, 0.00001, 0.00001)$ & $(0.0001, 123)$ & $(0.00001, 143)$ & $(0.0001, 123)$ & $(0.001, 43)$ & $(10000, 0.0001, 0.0001, 4)$ & $(1000, 0.0001, 10)$ & $(2.5, 0.01, 0.000001, 0.00001)$ & $(0.01, 100, 0.0001, 23)$ & $(0.0001, 0.0001, 0.00001, 3)$ \\
Persian cat vs Pig & $70$ & $69.31$ & $63.89$ & $67.36$ & $70.14$ & $74.31$ & $69.44$ & $72.5$ & $71.92$ & $74.31$ & $71.53$ \\
 & $(0.001)$ & $(0.00001, 0.00001, 0.00001)$ & $(0.0001, 183)$ & $(0.00001, 163)$ & $(0.001, 3)$ & $(10000, 3)$ & $(100, 0.00001, 0.01, 4)$ & $(10000, 1000, 30)$ & $(0.01, 1, 0.001, 0.001)$ & $(0.01, 10, 0.00001, 3)$ & $(0.01, 10, 0.001, 3)$ \\
Persian cat vs Hippopotamus & $76.81$ & $76.53$ & $75.69$ & $79.86$ & $77.08$ & $78.94$ & $75.69$ & $76.04$ & $79.58$ & $83.33$ & $79.86$ \\
 & $(0.01)$ & $(0.00001, 0.00001, 0.00001)$ & $(1, 63)$ & $(0.00001, 203)$ & $(0.01, 3)$ & $(1, 3)$ & $(100000, 0.001, 0.00001, 0.25)$ & $(0.0001, 10, 50)$ & $(2.5, 10, 0.0001, 0.0001)$ & $(0.0001, 0.00001, 0.00001, 3)$ & $(0.01, 0.00001, 0.001, 3)$ \\
Persian cat vs Humpback whale & $71.67$ & $71.39$ & $81.25$ & $88.19$ & $81.94$ & $81.75$ & $85.42$ & $82.5$ & $84.79$ & $94.44$ & $82.64$ \\
 & $(0.00001)$ & $(0.00001, 0.00001, 0.00001)$ & $(0.00001, 143)$ & $(0.00001, 203)$ & $(0.00001, 143)$ & $(1, 23)$ & $(0.01, 0.00001, 0.1, 4)$ & $(1, 10000, 100)$ & $(5, 0.1, 0.000001, 0.00001)$ & $(10000, 1, 10000, 23)$ & $(0.001, 0.1, 0.0001, 23)$ \\
Persian cat vs Raccoon & $82.64$ & $79.31$ & $73.61$ & $81.25$ & $69.44$ & $71.64$ & $65.97$ & $82.42$ & $83$ & $84.03$ & $73.61$ \\
 & $(0.00001)$ & $(0.00001, 0.00001, 0.00001)$ & $(100, 23)$ & $(0.00001, 163)$ & $(0.001, 23)$ & $(0.001, 3)$ & $(100000, 0.00001, 0.001, 2)$ & $(10, 1, 70)$ & $(0.1, 5, 0.00001, 0.001)$ & $(1, 10000, 0.01, 3)$ & $(0.00001, 100000, 0.00001, 3)$ \\
Persian cat vs Rat & $60.44$ & $64.17$ & $54.86$ & $56.25$ & $59.72$ & $60.67$ & $56.94$ & $60.92$ & $62.75$ & $61.11$ & $63.89$ \\
 & $(0.001)$ & $(0.00001, 0.00001, 0.00001)$ & $(0.001, 183)$ & $(100000, 103)$ & $(0.001, 3)$ & $(0.001, 3)$ & $(1000, 0.00001, 10, 0.5)$ & $(1000, 1, 100)$ & $(1, 0.1, 0.001, 0.0001)$ & $(0.0001, 0.0001, 0.00001, 63)$ & $(0.001, 1, 0.00001, 3)$ \\
Persian cat vs Seal & $80.42$ & $73.47$ & $72.22$ & $71.53$ & $65.97$ & $72.94$ & $83.33$ & $83.96$ & $84$ & $84.72$ & $76.39$ \\
 & $(1)$ & $(0.00001, 0.00001, 0.00001)$ & $(0.0001, 183)$ & $(0.00001, 83)$ & $(0.01, 63)$ & $(0.001, 3)$ & $(10000, 0.00001, 0.01, 4)$ & $(10000, 0.0001, 50)$ & $(10, 1, 0.000001, 0.00001)$ & $(0.01, 100, 0.001, 3)$ & $(0.01, 100, 0.00001, 3)$ \\
Pig vs Hippopotamus & $71.53$ & $65.83$ & $70.14$ & $65.97$ & $64.58$ & $67.36$ & $72.22$ & $64.79$ & $69.79$ & $71.53$ & $64.58$ \\
 & $(0.0001)$ & $(0.00001, 0.00001, 0.00001)$ & $(0.0001, 83)$ & $(0.00001, 203)$ & $(0.0001, 123)$ & $(0.01, 63)$ & $(1000, 0.00001, 0.01, 0.25)$ & $(0.0001, 0.0001, 30)$ & $(0.01, 2.5, 0.00001, 0.0001)$ & $(0.01, 100, 0.001, 3)$ & $(0.0001, 10, 0.00001, 163)$ \\
Pig vs Humpback whale & $80.19$ & $77.92$ & $82.64$ & $89.58$ & $82.64$ & $80.19$ & $88.89$ & $87.92$ & $82.96$ & $90.28$ & $84.72$ \\
 & $(0.01)$ & $(0.00001, 0.00001, 0.00001)$ & $(0.00001, 203)$ & $(0.00001, 203)$ & $(0.00001, 203)$ & $(0.001, 3)$ & $(100000, 0.001, 0.0001, 0.25)$ & $(1, 10, 10)$ & $(2.5, 0.01, 0.001, 0.001)$ & $(0.01, 10, 0.01, 3)$ & $(0.00001, 10, 0.00001, 143)$ \\
Pig vs Raccoon & $71.69$ & $69.31$ & $61.11$ & $72.92$ & $64.58$ & $72.22$ & $62.5$ & $70.83$ & $71.29$ & $80.56$ & $72.33$ \\
 & $(0.00001)$ & $(0.00001, 0.00001, 0.00001)$ & $(0.0001, 43)$ & $(0.00001, 183)$ & $(0.0001, 43)$ & $(10000, 3)$ & $(0.0001, 0.00001, 0.0001, 4)$ & $(10, 1000, 70)$ & $(5, 2.5, 0.0001, 0.000001)$ & $(0.001, 1, 0.00001, 3)$ & $(0.0001, 0.01, 0.0001, 23)$ \\
Pig vs Rat & $71.53$ & $68.61$ & $62.5$ & $59.72$ & $57.64$ & $68.06$ & $64.58$ & $63.96$ & $62.25$ & $64.58$ & $69.72$ \\
 & $(0.01)$ & $(0.00001, 0.00001, 0.00001)$ & $(1000, 63)$ & $(0.0001, 203)$ & $(0.001, 43)$ & $(0.001, 3)$ & $(1000, 0.00001, 0.01, 0.25)$ & $(1000, 10, 100)$ & $(0.1, 1, 0.000001, 0.00001)$ & $(100, 10000, 0.001, 43)$ & $(0.00001, 1, 0.00001, 143)$ \\
Pig vs Seal & $72.69$ & $65.56$ & $70.14$ & $68.06$ & $72.92$ & $74.31$ & $72.92$ & $72.08$ & $68.96$ & $78.47$ & $74.31$ \\
 & $(0.01)$ & $(0.00001, 0.00001, 0.00001)$ & $(0.00001, 163)$ & $(0.00001, 183)$ & $(0.001, 3)$ & $(0.0001, 3)$ & $(0.001, 0.001, 0.00001, 4)$ & $(10000, 1, 30)$ & $(1, 10, 0.00001, 0.001)$ & $(0.001, 1, 0.00001, 3)$ & $(0.001, 10, 0.0001, 3)$ \\
Hippopotamus vs Humpback whale & $82.03$ & $80.31$ & $77.78$ & $79.86$ & $81.25$ & $82.81$ & $79.86$ & $82$ & $81.63$ & $88.19$ & $83.33$ \\
 & $(0.01)$ & $(0.00001, 0.00001, 0.00001)$ & $(0.0001, 143)$ & $(0.00001, 143)$ & $(0.01, 3)$ & $(0.001, 23)$ & $(0.001, 0.0001, 10000, 0.25)$ & $(0.0001, 1000, 10)$ & $(10, 0.1, 0.0001, 0.0001)$ & $(0.01, 10, 0.00001, 43)$ & $(0.001, 10, 0.00001, 3)$ \\
Hippopotamus vs Raccoon & $78.47$ & $75.14$ & $72.22$ & $70.14$ & $78.47$ & $80.94$ & $75.69$ & $82.92$ & $81.71$ & $81.94$ & $77.78$ \\
 & $(0.01)$ & $(0.00001, 0.00001, 0.00001)$ & $(0.0001, 203)$ & $(0.00001, 183)$ & $(0.001, 3)$ & $(0.001, 23)$ & $(0.00001, 0.001, 1, 4)$ & $(1, 0.0001, 70)$ & $(0.01, 0.01, 0.0001, 0.000001)$ & $(0.01, 10, 0.00001, 23)$ & $(0.001, 1, 0.0001, 3)$ \\
Hippopotamus vs Rat & $75.33$ & $75.83$ & $65.28$ & $71.53$ & $70.14$ & $70.25$ & $64.58$ & $78.33$ & $73.75$ & $81.25$ & $75$ \\
 & $(100)$ & $(0.00001, 0.00001, 0.00001)$ & $(1000, 43)$ & $(0.00001, 183)$ & $(0.01, 23)$ & $(0.001, 3)$ & $(100000, 0.0001, 0.1, 4)$ & $(10, 0.0001, 50)$ & $(2.5, 5, 0.000001, 0.00001)$ & $(0.01, 10, 0.01, 3)$ & $(0.00001, 0.01, 0.00001, 43)$ \\
Hippopotamus vs Seal & $69.44$ & $49.31$ & $58.33$ & $65.28$ & $61.81$ & $70.83$ & $60.42$ & $54.58$ & $58.96$ & $59.03$ & $61.81$ \\
 & $(0.01)$ & $(0.00001, 0.00001, 0.00001)$ & $(0.1, 63)$ & $(0.00001, 183)$ & $(0.001, 43)$ & $(0.001, 3)$ & $(0.001, 0.01, 0.00001, 4)$ & $(1000, 10000, 100)$ & $(5, 0.01, 0.001, 0.0001)$ & $(1, 1, 1, 23)$ & $(0.0001, 0.1, 0.00001, 63)$ \\
Humpback whale vs Raccoon & $85.67$ & $80.69$ & $84.03$ & $90.97$ & $81.94$ & $82.36$ & $83.33$ & $82.63$ & $82.63$ & $90.28$ & $83.33$ \\
 & $(0.00001)$ & $(0.00001, 0.00001, 10000)$ & $(0.00001, 83)$ & $(0.00001, 123)$ & $(0.0001, 83)$ & $(0.001, 3)$ & $(10000, 0.0001, 0.01, 0.25)$ & $(10000, 10, 70)$ & $(0.1, 2.5, 0.0001, 0.001)$ & $(0.001, 0.001, 0.0001, 43)$ & $(0.0001, 0.0001, 0.00001, 3)$ \\
Humpback whale vs Rat & $82.28$ & $80.31$ & $81.94$ & $86.11$ & $81.94$ & $80.58$ & $77.78$ & $82.38$ & $82.5$ & $91.67$ & $84.72$ \\
 & $(0.01)$ & $(0.00001, 0.00001, 0.00001)$ & $(0.0001, 183)$ & $(0.00001, 203)$ & $(0.0001, 183)$ & $(100, 3)$ & $(100000, 0.001, 10, 0.25)$ & $(1, 1000, 50)$ & $(1, 0.01, 0.000001, 0.0001)$ & $(10, 10000, 0.1, 23)$ & $(0.00001, 0.001, 0.00001, 3)$ \\
Humpback whale vs Seal & $76.39$ & $72.08$ & $77.78$ & $73.61$ & $78.47$ & $79.17$ & $78.47$ & $70$ & $69.58$ & $78.47$ & $71.53$ \\
 & $(0.01)$ & $(0.00001, 0.00001, 0.00001)$ & $(0.0001, 143)$ & $(0.00001, 203)$ & $(0.0001, 143)$ & $(0.0001, 3)$ & $(100000, 0.0001, 0.001, 0.25)$ & $(0.0001, 1, 30)$ & $(10, 2.5, 0.00001, 0.000001)$ & $(0.001, 0.1, 0.0001, 3)$ & $(100, 0.01, 100, 23)$ \\
Raccoon vs Rat & $62.22$ & $61.89$ & $59.72$ & $70.83$ & $61.81$ & $60.22$ & $65.28$ & $65.87$ & $66.63$ & $68.75$ & $66.67$ \\
 & $(100)$ & $(0.00001, 0.00001, 0.00001)$ & $(100, 43)$ & $(0.00001, 163)$ & $(0.001, 163)$ & $(0.001, 3)$ & $(100000, 0.00001, 10, 4)$ & $(10, 10, 10)$ & $(0.01, 0.1, 0.000001, 0.00001)$ & $(0.1, 10, 0.1, 3)$ & $(0.0001, 0.01, 0.0001, 43)$ \\
Raccoon vs Seal & $90.28$ & $75.39$ & $78.47$ & $84.03$ & $82.64$ & $80.28$ & $75.69$ & $76.46$ & $78.75$ & $88.89$ & $79.86$ \\
 & $(100)$ & $(0.00001, 0.00001, 100000)$ & $(0.00001, 183)$ & $(0.00001, 183)$ & $(0.001, 63)$ & $(0.001, 3)$ & $(100000, 0.00001, 0.01, 4)$ & $(1000, 0.0001, 70)$ & $(2.5, 1, 0.001, 0.0001)$ & $(0.001, 0.00001, 0.0001, 23)$ & $(0.01, 100, 0.00001, 3)$ \\
Rat vs Seal & $70.86$ & $65.17$ & $68.75$ & $68.75$ & $68.75$ & $67.83$ & $69.87$ & $66.04$ & $67.08$ & $77.78$ & $68.06$ \\
 & $(0.001)$ & $(0.00001, 0.00001, 0.00001)$ & $(0.00001, 183)$ & $(0.00001, 143)$ & $(0.00001, 183)$ & $(0.01, 3)$ & $(100000, 0.001, 0.01, 0.25)$ & $(1, 1, 100)$ & $(0.1, 5, 0.00001, 0.001)$ & $(0.01, 10, 0.001, 3)$ & $(0.001, 0.00001, 0.00001, 23)$ \\ \hline
Average Acc. & $77.46$ & $64.31$ & $71.74$ & $75.99$ & $72.87$ & $77.46$ & $73.33$ & $77.66$ & $77.69$ & $82.5$ & $77.97$ \\ \hline
Average Rank & $5.1$ & $9.51$ & $8.36$ & $6.08$ & $7.59$ & $5.26$ & $7.1$ & $5.41$ & $5.34$ & $1.77$ & $4.49$ \\ \hline
\multicolumn{10}{l}{$^{\dagger}$ represents the proposed models.}
\end{tabular}}
\end{table*}

\renewcommand{\thetable}{S.III}
\begin{table*}[htp]
\centering
    \caption{Performance comparison of the proposed MvRVFL along with the baseline models based on classification Acc. for Corel5k datasets.}
    \label{Classification performance Corel5k in Linear Case.}
    \resizebox{1.1\linewidth}{!}{
\begin{tabular}{lccccccccccc}
\hline
Dataset & SVM2K \cite{farquhar2005two} & MvTSVM \cite{xie2015multi} & ELM-Vw-$A$ \cite{huang2006extreme} & ELM-Vw-$B$ \cite{huang2006extreme} & RVFL-Vw-$A$ \cite{pao1994learning} & RVFL-Vw-$B$ \cite{pao1994learning} & MVLDM \cite{hu2024multiview} & CDC \cite{kang2024cdc} & MCMC \cite{bian2025multilevel} & MvRVFL-1$^{\dagger}$ & MvRVFL-2$^{\dagger}$ \\ 
 & $(C_1)$ & $(C_1, C_2, D)$ & $(C, N)$ & $(C, N)$ & $(C, N)$ & $(C, N)$ & $(C_1, \nu_1, \nu_2, \sigma)$ & $(\alpha, \beta, m)$ & $(\lambda_1, \lambda_2, \lambda_3, \zeta)$ & $(C_1, C_2, \rho, N)$ & $(C_1, C_2, \rho, N)$ \\ \hline
1000 & $81.67$ & $(50)$ & $(83.33)$ & $(78.33)$ & $(83.33)$ & $(76.67)$ & $(73.33)$ & $82.17$ & $82$ & $(83.33)$ & $(85)$ \\
 & $(0.0625)$ & $(0.00001, 0.00001, 0.00001)$ & $(0.0001, 183)$ & $(0.00001, 43)$ & $(0.0001, 183)$ & $(0.00001, 3)$ & $(0.25, 0.25, 0.25, 4)$ & $(1, 10, 100)$ & $(0.01, 0.1, 0.000001, 0.000001)$ & $(0.001, 0.01, 0.00001, 23)$ & $(0.0001, 100000, 0.00001, 203)$ \\
10000 & $71.67$ & $(48.33)$ & $(56.67)$ & $(63.33)$ & $(71.67)$ & $(71.67)$ & $(55)$ & $70.87$ & $69.42$ & $(71.67)$ & $(70)$ \\
 & $(0.03125)$ & $(0.00001, 0.00001, 0.00001)$ & $(1000, 23)$ & $(1000, 23)$ & $(0.01, 3)$ & $(0.00001, 63)$ & $(0.25, 0.25, 0.25, 4)$ & $(1000, 0.0001, 50)$ & $(0.1, 0.01, 0.00001, 0.000001)$ & $(0.01, 1000, 0.01, 23)$ & $(0.001, 100, 0.0001, 143)$ \\
100000 & $71.67$ & $(55)$ & $(63.33)$ & $(73.33)$ & $(78.33)$ & $(75)$ & $(61.67)$ & $62.45$ & $63.25$ & $(65)$ & $(71.67)$ \\
 & $(0.0625)$ & $(0.00001, 0.00001, 0.00001)$ & $(100000, 183)$ & $(0.00001, 123)$ & $(0.001, 183)$ & $(0.00001, 123)$ & $(0.25, 0.25, 0.25, 4)$ & $(1000, 10, 100)$ & $(0.1, 1, 0.0001, 0.0001)$ & $(0.1, 10000, 0.01, 143)$ & $(1, 100000, 0.001, 143)$ \\
101000 & $66.67$ & $(58.33)$ & $(75)$ & $(73.33)$ & $(68.33)$ & $(76.67)$ & $(55)$ & $72.49$ & $71.47$ & $(81.67)$ & $(73.33)$ \\
 & $(0.0625)$ & $(0.00001, 0.00001, 0.00001)$ & $(0.001, 123)$ & $(0.00001, 143)$ & $(0.001, 3)$ & $(0.00001, 143)$ & $(0.25, 0.25, 0.25, 4)$ & $(10000, 10000, 10)$ & $(0.01, 1, 0.000001, 0.00001)$ & $(0.001, 1000, 0.00001, 103)$ & $(1, 100000, 0.001, 23)$ \\
102000 & $81.67$ & $(51.67)$ & $(81.67)$ & $(75)$ & $(81.67)$ & $(71.67)$ & $(83.33)$ & $75.04$ & $73.5$ & $(78.33)$ & $(76.67)$ \\
 & $(0.03125)$ & $(0.00001, 0.00001, 0.00001)$ & $(0.01, 63)$ & $(0.00001, 143)$ & $(0.01, 3)$ & $(100000, 3)$ & $(0.25, 0.25, 0.25, 4)$ & $(1, 10000, 70)$ & $(2.5, 0.01, 0.000001, 0.00001)$ & $(0.001, 10, 0.00001, 23)$ & $(0.001, 10, 0.00001, 43)$ \\
103000 & $73.33$ & $(45)$ & $(80)$ & $(75)$ & $(81.67)$ & $(73.33)$ & $(65)$ & $80.86$ & $79$ & $(81.67)$ & $(80)$ \\
 & $(0.03125)$ & $(0.00001, 0.00001, 0.00001)$ & $(0.001, 163)$ & $(0.0001, 43)$ & $(0.001, 163)$ & $(0.00001, 43)$ & $(0.25, 0.25, 0.25, 4)$ & $(1, 0.0001, 100)$ & $(5, 0.1, 0.001, 0.0001)$ & $(0.01, 1000, 0.001, 23)$ & $(0.1, 10000, 0.01, 23)$ \\
104000 & $71.67$ & $(46.67)$ & $(55)$ & $(65)$ & $(76.67)$ & $(76.67)$ & $(46.67)$ & $69.58$ & $68.5$ & $(71.67)$ & $(53.33)$ \\
 & $(32)$ & $(0.00001, 0.00001, 0.00001)$ & $(10, 123)$ & $(10000, 23)$ & $(0.0001, 203)$ & $(0.00001, 83)$ & $(0.25, 0.25, 0.25, 4)$ & $(0.0001, 0.0001, 100)$ & $(0.1, 1, 0.00001, 0.001)$ & $(100000, 0.00001, 0.001, 83)$ & $(1000, 100000, 0.00001, 23)$ \\
108000 & $80$ & $(48.33)$ & $(75.65)$ & $(76.67)$ & $(78.33)$ & $(76.67)$ & $(78.33)$ & $79.75$ & $78.5$ & $(83.33)$ & $(80)$ \\
 & $(8)$ & $(0.00001, 0.00001, 0.00001)$ & $(0.001, 63)$ & $(0.00001, 83)$ & $(0.1, 3)$ & $(0.00001, 3)$ & $(0.25, 0.25, 0.25, 4)$ & $(0.0001, 0.0001, 10)$ & $(1, 0.1, 0.0001, 0.000001)$ & $(0.001, 0.00001, 0.00001, 163)$ & $(0.0001, 10, 0.00001, 3)$ \\
109000 & $65$ & $(43.33)$ & $(68.33)$ & $(73.33)$ & $(68.33)$ & $(75)$ & $(73.33)$ & $57.28$ & $55.5$ & $(75)$ & $(58.33)$ \\
 & $(1)$ & $(0.00001, 0.00001, 0.00001)$ & $(0.0001, 203)$ & $(100, 63)$ & $(0.0001, 203)$ & $(0.0001, 43)$ & $(0.25, 0.25, 0.25, 0.25)$ & $(0.0001,0.0001,30)$ & $(10, 1, 0.001, 0.00001)$ & $(0.01, 10000, 0.00001, 203)$ & $(0.01, 100000, 0.0001, 83)$ \\
113000 & $66.67$ & $(45)$ & $(68.33)$ & $(63.33)$ & $(70)$ & $(63.33)$ & $(66.67)$ & $66.63$ & $65.5$ & $(70)$ & $(68.33)$ \\
 & $(0.03125)$ & $(0.00001, 0.00001, 0.00001)$ & $(0.001, 123)$ & $(1000, 203)$ & $(0.001, 123)$ & $(0.0001, 23)$ & $(0.25, 0.25, 0.25, 0.25)$ & $(0.0001,1,10)$ & $(0.01, 5, 0.00001, 0.0001)$ & $(0.01, 0.00001, 0.001, 3)$ & $(0.01, 0.001, 0.00001, 3)$ \\
118000 & $58.33$ & $(48.33)$ & $(68.33)$ & $(61.67)$ & $(68.33)$ & $(56.67)$ & $(58.33)$ & $70.18$ & $69.5$ & $(71.67)$ & $(53.33)$ \\
 & $(8)$ & $(0.00001, 0.00001, 0.00001)$ & $(0.001, 103)$ & $(100000, 43)$ & $(0.001, 103)$ & $(1000, 43)$ & $(0.25, 0.25, 0.25, 4)$ & $(0.0001,1,30)$ & $(2.5, 2.5, 0.0001, 0.001)$ & $(100000, 100000, 1, 63)$ & $(1, 100000, 0.0001, 143)$ \\
119000 & $71.33$ & $(53.33)$ & $(81.67)$ & $(83.33)$ & $(78.33)$ & $(80)$ & $(70)$ & $70.78$ & $69.87$ & $(78.33)$ & $(71.67)$ \\
 & $(16)$ & $(0.00001, 0.00001, 0.00001)$ & $(0.001, 143)$ & $(0.00001, 123)$ & $(0.01, 63)$ & $(0.00001, 83)$ & $(0.25, 0.25, 0.25, 0.25)$ & $(0.0001,10000,10)$ & $(5, 2.5, 0.000001, 0.00001)$ & $(1, 100000, 0.1, 103)$ & $(0.1, 100000, 0.01, 123)$ \\
12000 & $60$ & $(48.33)$ & $(63.33)$ & $(58.33)$ & $(63.33)$ & $(51.67)$ & $(61.67)$ & $60.55$ & $59.23$ & $(61.67)$ & $(56.67)$ \\
 & $(32)$ & $(0.00001, 0.00001, 0.00001)$ & $(0.0001, 83)$ & $(0.00001, 63)$ & $(0.0001, 83)$ & $(0.00001, 43)$ & $(0.25, 0.25, 0.25, 2)$ & $(0.0001,10000,30)$ & $(0.1, 10, 0.001, 0.0001)$ & $(0.0001, 1000, 0.00001, 143)$ & $(0.00001, 100, 0.00001, 203)$ \\
120000 & $73.33$ & $(48.33)$ & $(78.33)$ & $(80)$ & $(78.33)$ & $(85)$ & $(66.67)$ & $74.83$ & $74$ & $(75)$ & $(88.33)$ \\
 & $(0.03125)$ & $(0.00001, 0.00001, 0.00001)$ & $(0.01, 203)$ & $(0.00001, 203)$ & $(0.01, 203)$ & $(0.00001, 103)$ & $(0.25, 0.25, 0.25, 4)$ & $(1,0.0001,10)$ & $(1, 0.01, 0.00001, 0.001)$ & $(0.0001, 10, 100, 203)$ & $(0.1, 1000, 0.01, 23)$ \\
121000 & $63.33$ & $(55)$ & $(68.33)$ & $(75)$ & $(71.67)$ & $(71.67)$ & $(71.67)$ & $72.35$ & $71.5$ & $(73.33)$ & $(83.33)$ \\
 & $(2)$ & $(0.00001, 0.00001, 0.00001)$ & $(0.0001, 163)$ & $(0.00001, 103)$ & $(0.001, 143)$ & $(0.00001, 103)$ & $(0.25, 0.25, 0.25, 2)$ & $(1,0.0001,30)$ & $(10, 0.1, 0.0001, 0.000001)$ & $(0.01, 10000, 0.00001, 83)$ & $(0.001, 1000, 0.00001, 63)$ \\
122000 & $78.33$ & $(55)$ & $(70)$ & $(76.67)$ & $(70)$ & $(71.67)$ & $(61.67)$ & $77.85$ & $75$ & $(80)$ & $(78.33)$ \\
 & $(0.0625)$ & $(0.00001, 0.00001, 0.00001)$ & $(0.01, 163)$ & $(100, 23)$ & $(0.01, 163)$ & $(0.00001, 123)$ & $(0.25, 0.25, 0.25, 0.25)$ & $(1,1,10)$ & $(0.01, 0.01, 0.001, 0.00001)$ & $(0.0001, 100, 0.0001, 163)$ & $(0.01, 10000, 0.00001, 83)$ \\
13000 & $85$ & $(50)$ & $(83.33)$ & $(91.67)$ & $(85)$ & $(90)$ & $(78.33)$ & $86.95$ & $86$ & $(85)$ & $(86.67)$ \\
 & $(32)$ & $(0.00001, 0.00001, 0.00001)$ & $(0.001, 183)$ & $(0.00001, 103)$ & $(0.001, 183)$ & $(0.00001, 43)$ & $(0.25, 0.25, 0.25, 2)$ & $(1,1,30)$ & $(2.5, 5, 0.00001, 0.0001)$ & $(0.1, 1000, 0.00001, 3)$ & $(0.001, 100, 0.0001, 163)$ \\
130000 & $58.96$ & $(48.33)$ & $(58.33)$ & $(65)$ & $(58.33)$ & $(61.67)$ & $(60)$ & $60.51$ & $65.5$ & $(61.67)$ & $(66.67)$ \\
 & $(0.03125)$ & $(0.00001, 0.00001, 0.00001)$ & $(0.001, 103)$ & $(100, 23)$ & $(0.001, 103)$ & $(0.00001, 23)$ & $(0.25, 0.25, 0.25, 4)$ & $(1,10,10)$ & $(5, 0.01, 0.0001, 0.001)$ & $(0.00001, 1000, 0.0001, 103)$ & $(0.01, 100000, 0.0001, 163)$ \\
131000 & $78.33$ & $(63.33)$ & $(73.33)$ & $(76.67)$ & $(83.33)$ & $(76.67)$ & $(81.67)$ & $75.03$ & $77$ & $(80)$ & $(78.33)$ \\
 & $(0.03125)$ & $(0.00001, 0.00001, 0.00001)$ & $(0.001, 163)$ & $(0.00001, 83)$ & $(0.01, 23)$ & $(0.00001, 183)$ & $(0.25, 0.25, 0.25, 4)$ & $(1,10,30)$ & $(0.1, 2.5, 0.000001, 0.00001)$ & $(0.001, 1000, 0.00001, 123)$ & $(10, 100000, 0.1, 203)$ \\
140000 & $71.67$ & $(40)$ & $(58.33)$ & $(76.67)$ & $(56.67)$ & $(78.33)$ & $(45)$ & $66.44$ & $65.5$ & $(78.33)$ & $(66.67)$ \\
 & $(0.0625)$ & $(0.00001, 0.00001, 0.00001)$ & $(0.1, 183)$ & $(0.00001, 143)$ & $(0.1, 3)$ & $(100, 3)$ & $(0.25, 0.25, 0.25, 4)$ & $(1,1000,10)$ & $(1, 1, 0.001, 0.0001)$ & $(1000, 0.001, 10, 3)$ & $(10000, 100, 0.0001, 3)$ \\
142000 & $86.67$ & $(51.67)$ & $(85)$ & $(68.33)$ & $(91.67)$ & $(88.33)$ & $(78.33)$ & $90.44$ & $89.5$ & $(91.67)$ & $(90)$ \\
 & $(0.25)$ & $(0.00001, 0.00001, 0.00001)$ & $(0.001, 183)$ & $(10000, 23)$ & $(0.1, 3)$ & $(1, 3)$ & $(0.25, 0.25, 0.25, 4)$ & $(1,1000,30)$ & $(10, 2.5, 0.00001, 0.001)$ & $(0.01, 0.0001, 0.001, 3)$ & $(0.1, 10, 0.0001, 3)$ \\
143000 & $66.67$ & $(51.67)$ & $(63.33)$ & $(65)$ & $(63.33)$ & $(65)$ & $(71.67)$ & $59.7$ & $57$ & $(60)$ & $(60)$ \\
 & $(0.03125)$ & $(0.00001, 0.00001, 0.00001)$ & $(0.001, 163)$ & $(0.00001, 203)$ & $(0.001, 163)$ & $(0.00001, 203)$ & $(0.25, 0.25, 0.25, 0.25)$ & $(1,10000,10)$ & $(0.01, 0.1, 0.0001, 0.000001)$ & $(0.001, 0.1, 0.0001, 203)$ & $(10, 0.001, 0.001, 3)$ \\
144000 & $73.33$ & $(48.33)$ & $(68.33)$ & $(71.67)$ & $(68.33)$ & $(70)$ & $(66.67)$ & $72.5$ & $72.5$ & $(76.67)$ & $(73.33)$ \\
 & $(16)$ & $(0.00001, 0.00001, 0.00001)$ & $(0.01, 63)$ & $(0.00001, 203)$ & $(0.01, 63)$ & $(0.00001, 203)$ & $(0.25, 0.25, 0.25, 0.25)$ & $(1,10000,30)$ & $(2.5, 10, 0.001, 0.00001)$ & $(0.001, 1000, 0.00001, 143)$ & $(0.001, 1000, 0.00001, 43)$ \\
147000 & $58.33$ & $(55)$ & $(63.33)$ & $(73.33)$ & $(63.33)$ & $(73.33)$ & $(70)$ & $66.84$ & $64.5$ & $(71.67)$ & $(66.67)$ \\
 & $(0.125)$ & $(0.00001, 0.00001, 0.00001)$ & $(0.01, 203)$ & $(0.00001, 123)$ & $(0.01, 203)$ & $(0.00001, 123)$ & $(0.25, 0.25, 0.25, 4)$ & $(10,0.0001,10)$ & $(5, 5, 0.00001, 0.0001)$ & $(0.1, 1, 0.00001, 203)$ & $(0.01, 100000, 0.00001, 103)$ \\
148000 & $86.67$ & $(50)$ & $(90)$ & $(80)$ & $(88.33)$ & $(86.67)$ & $(83.33)$ & $81.83$ & $85.5$ & $(88.33)$ & $(83.33)$ \\
 & $(2)$ & $(0.00001, 0.00001, 0.00001)$ & $(0.001, 103)$ & $(0.00001, 83)$ & $(0.001, 23)$ & $(0.00001, 63)$ & $(0.25, 0.25, 0.25, 4)$ & $(10,0.0001,30)$ & $(0.1, 0.01, 0.0001, 0.001)$ & $(0.001, 100, 0.00001, 123)$ & $(0.001, 1000, 0.00001, 83)$ \\
152000 & $56.67$ & $(48.33)$ & $(65)$ & $(66.67)$ & $(66.67)$ & $(45)$ & $(51.67)$ & $65.37$ & $63$ & $(66.67)$ & $(66.67)$ \\
 & $(0.25)$ & $(0.00001, 0.00001, 0.00001)$ & $(0.0001, 143)$ & $(0.1, 23)$ & $(0.00001, 103)$ & $(1, 103)$ & $(0.25, 0.25, 0.25, 0.25)$ & $(10,1,10)$ & $(1, 2.5, 0.000001, 0.00001)$ & $(0.00001, 1000, 0.00001, 163)$ & $(0.001, 100000, 0.0001, 3)$ \\
153000 & $79.33$ & $(55)$ & $(76.67)$ & $(73.33)$ & $(76.67)$ & $(68.33)$ & $(80)$ & $79.58$ & $77$ & $(81.67)$ & $(80)$ \\
 & $(0.5)$ & $(0.00001, 0.00001, 0.00001)$ & $(0.001, 203)$ & $(0.00001, 43)$ & $(0.001, 203)$ & $(0.00001, 43)$ & $(0.25, 0.25, 0.25, 4)$ & $(10,1,30)$ & $(10, 0.01, 0.001, 0.0001)$ & $(0.01, 0.00001, 0.001, 83)$ & $(0.01, 100, 0.001, 103)$ \\
161000 & $86.67$ & $(46.67)$ & $(91.67)$ & $(91.67)$ & $(88.33)$ & $(90)$ & $(93.33)$ & $90.91$ & $91$ & $(93.33)$ & $(91.67)$ \\
 & $(0.5)$ & $(0.00001, 0.00001, 0.00001)$ & $(0.001, 183)$ & $(0.00001, 143)$ & $(0.1, 23)$ & $(0.00001, 43)$ & $(0.25, 0.25, 0.25, 2)$ & $(10,10,30)$ & $(0.01, 1, 0.00001, 0.001)$ & $(0.1, 0.01, 0.00001, 23)$ & $(0.01, 1000, 0.001, 143)$ \\
163000 & $80$ & $(41.67)$ & $(80)$ & $(78.33)$ & $(80)$ & $(85)$ & $(71.67)$ & $89.41$ & $86$ & $(90)$ & $(90)$ \\
 & $(4)$ & $(0.00001, 0.00001, 0.00001)$ & $(0.0001, 203)$ & $(1, 23)$ & $(0.001, 83)$ & $(0.00001, 3)$ & $(0.25, 0.25, 0.25, 0.5)$ & $(1, 1, 10)$ & $(2.5, 0.1, 0.0001, 0.000001)$ & $(0.01, 1000, 0.00001, 43)$ & $(0.1, 100000, 0.00001, 3)$ \\
17000 & $83.33$ & $(56.67)$ & $(86.67)$ & $(90)$ & $(86.67)$ & $(86.67)$ & $(83.33)$ & $82.36$ & $82.5$ & $(91.67)$ & $(83.33)$ \\
 & $(2)$ & $(0.00001, 0.00001, 0.00001)$ & $(0.001, 163)$ & $(0.00001, 143)$ & $(0.001, 163)$ & $(0.00001, 203)$ & $(0.25, 0.25, 0.25, 4)$ & $(1, 1000, 100)$ & $(5, 1, 0.001, 0.00001)$ & $(0.01, 1000, 0.001, 183)$ & $(0.01, 10000, 0.00001, 63)$ \\
171000 & $88.33$ & $(48.33)$ & $(76.67)$ & $(68.33)$ & $(76.67)$ & $(75)$ & $(63.33)$ & $69.63$ & $68.47$ & $(80)$ & $(70)$ \\
 & $(0.03125)$ & $(0.00001, 0.00001, 0.00001)$ & $(0.001, 123)$ & $(0.00001, 83)$ & $(0.001, 123)$ & $(0.00001, 3)$ & $(0.25, 0.25, 0.25, 0.25)$ & $(0.0001, 0.0001, 10)$ & $(0.1, 5, 0.00001, 0.0001)$ & $(0.0001, 100, 0.0001, 103)$ & $(0.001, 100, 0.0001, 103)$ \\
173000 & $85$ & $(61.67)$ & $(86.67)$ & $(71.67)$ & $(83.33)$ & $(78.33)$ & $(80)$ & $79.61$ & $78.5$ & $(86.67)$ & $(80)$ \\
 & $(0.125)$ & $(0.00001, 0.00001, 0.00001)$ & $(0.001, 203)$ & $(1000, 23)$ & $(0.001, 63)$ & $(1000, 3)$ & $(0.25, 0.25, 0.25, 0.25)$ & $(10,10,10)$ & $(1, 10, 0.0001, 0.001)$ & $(0.001, 10000, 0.0001, 183)$ & $(0.001, 10000, 0.00001, 123)$ \\
174000 & $81.67$ & $(45)$ & $(81.67)$ & $(85)$ & $(83.33)$ & $(88.33)$ & $(88.33)$ & $82.15$ & $80.5$ & $(93.33)$ & $(83.33)$ \\
 & $(0.03125)$ & $(0.00001, 0.00001, 0.00001)$ & $(0.01, 103)$ & $(0.001, 43)$ & $(0.01, 103)$ & $(0.00001, 23)$ & $(0.25, 0.25, 0.25, 4)$ & $(1, 10000, 10)$ & $(10, 5, 0.000001, 0.00001)$ & $(1, 100000, 0.001, 23)$ & $(0.1, 10000, 0.00001, 203)$ \\
182000 & $76.67$ & $(46.67)$ & $(78.33)$ & $(76.67)$ & $(80)$ & $(76.67)$ & $(70)$ & $74.15$ & $75$ & $(81.67)$ & $(75)$ \\
 & $(0.03125)$ & $(0.00001, 0.00001, 0.00001)$ & $(0.001, 183)$ & $(0.00001, 123)$ & $(0.001, 183)$ & $(0.00001, 123)$ & $(0.25, 0.25, 0.25, 0.25)$ & $(1, 0.0001, 10)$ & $(0.01, 2.5, 0.001, 0.0001)$ & $(0.01, 0.00001, 0.001, 3)$ & $(0.01, 100, 0.00001, 3)$ \\
183000 & $70$ & $(48.33)$ & $(80)$ & $(68.33)$ & $(80)$ & $(78.33)$ & $(68.33)$ & $77.63$ & $77$ & $(80)$ & $(78.33)$ \\
 & $(4)$ & $(0.00001, 0.00001, 0.00001)$ & $(0.0001, 203)$ & $(1000, 43)$ & $(0.0001, 203)$ & $(0.00001, 43)$ & $(0.25, 0.25, 0.25, 4)$ & $(0.0001, 1000, 10)$ & $(2.5, 0.01, 0.00001, 0.001)$ & $(0.01, 0.01, 0.001, 23)$ & $(0.001, 100, 0.00001, 143)$ \\
187000 & $83.33$ & $(43.33)$ & $(81.67)$ & $(80)$ & $(83.33)$ & $(85)$ & $(81.67)$ & $75.65$ & $74.56$ & $(78.33)$ & $(81.67)$ \\
 & $(0.03125)$ & $(0.00001, 0.00001, 0.00001)$ & $(0.0001, 103)$ & $(10000, 23)$ & $(0.001, 103)$ & $(0.00001, 3)$ & $(0.25, 0.25, 0.25, 0.25)$ & $(0.0001, 1, 50)$ & $(5, 0.1, 0.0001, 0.000001)$ & $(0.001, 100, 0.001, 123)$ & $(0.001, 1000, 0.0001, 123)$ \\
189000 & $76.67$ & $(58.33)$ & $(80)$ & $(81.67)$ & $(80)$ & $(75)$ & $(73.33)$ & $79.29$ & $82.5$ & $(81.67)$ & $(83.33)$ \\
 & $(0.25)$ & $(0.00001, 0.00001, 0.00001)$ & $(0.001, 203)$ & $(0.00001, 123)$ & $(0.001, 203)$ & $(0.00001, 163)$ & $(0.25, 0.25, 0.25, 4)$ & $(0.0001, 0.0001, 50)$ & $(0.1, 1, 0.001, 0.00001)$ & $(0.1, 0.01, 0.01, 23)$ & $(0.01, 100000, 0.00001, 183)$ \\
20000 & $65$ & $(40)$ & $(63.33)$ & $(73.33)$ & $(70)$ & $(63.33)$ & $(65)$ & $65.53$ & $63.5$ & $(70)$ & $(68.33)$ \\
 & $(0.03125)$ & $(0.00001, 0.00001, 0.00001)$ & $(0.001, 183)$ & $(100000, 23)$ & $(0.01, 3)$ & $(0.00001, 163)$ & $(0.25, 0.25, 0.25, 4)$ & $(0.0001, 10000, 30)$ & $(1, 0.1, 0.00001, 0.0001)$ & $(0.001, 10000, 0.00001, 103)$ & $(0.001, 100000, 0.0001, 183)$ \\
201000 & $86.67$ & $(58.33)$ & $(90)$ & $(88.33)$ & $(91.67)$ & $(86.67)$ & $(80)$ & $82.33$ & $88.5$ & $(88.33)$ & $(83.33)$ \\
 & $(1)$ & $(0.00001, 0.00001, 0.00001)$ & $(1, 23)$ & $(0.00001, 63)$ & $(0.001, 63)$ & $(0.00001, 23)$ & $(0.25, 0.25, 0.25, 4)$ & $(1, 10000, 50)$ & $(10, 1, 0.0001, 0.001)$ & $(0.01, 100, 0.01, 3)$ & $(0.01, 10000, 0.0001, 63)$ \\
21000 & $90$ & $(50)$ & $(93.33)$ & $(83.33)$ & $(88.33)$ & $(86.67)$ & $(86.67)$ & $85.92$ & $81$ & $(86.67)$ & $(90)$ \\
 & $(1)$ & $(0.00001, 0.00001, 0.00001)$ & $(0.001, 183)$ & $(0.00001, 43)$ & $(0.01, 163)$ & $(0.00001, 103)$ & $(0.25, 0.25, 0.25, 0.25)$ & $(1, 10, 30)$ & $(0.01, 10, 0.000001, 0.00001)$ & $(0.1, 10000, 0.00001, 203)$ & $(0.1, 100000, 0.01, 23)$ \\
22000 & $70$ & $(46.67)$ & $(70)$ & $(53.33)$ & $(73.33)$ & $(68.33)$ & $(68.33)$ & $72.64$ & $73$ & $(75)$ & $(75)$ \\
 & $(0.03125)$ & $(0.00001, 0.00001, 0.00001)$ & $(0.001, 143)$ & $(1, 23)$ & $(0.1, 23)$ & $(0.00001, 43)$ & $(0.25, 0.25, 0.25, 0.25)$ & $(0.0001, 10, 30)$ & $(2.5, 2.5, 0.001, 0.0001)$ & $(1, 1, 0.001, 3)$ & $(0.1, 0.0001, 0.0001, 3)$ \\
231000 & $65$ & $(55)$ & $(61.67)$ & $(60)$ & $(56.67)$ & $(60)$ & $(53.33)$ & $59.23$ & $58.47$ & $(60)$ & $(70)$ \\
 & $(0.0625)$ & $(0.00001, 0.00001, 0.00001)$ & $(0.1, 123)$ & $(0.00001, 203)$ & $(0.01, 83)$ & $(0.00001, 203)$ & $(0.25, 0.25, 0.25, 0.25)$ & $(1, 0.0001, 50)$ & $(5, 2.5, 0.00001, 0.001)$ & $(0.001, 10000, 1, 203)$ & $(1, 0.1, 0.00001, 23)$ \\
276000 & $78.67$ & $(56.67)$ & $(78.33)$ & $(76.67)$ & $(80)$ & $(76.67)$ & $(71.67)$ & $75.92$ & $77.5$ & $(80)$ & $(78.33)$ \\
 & $(0.25)$ & $(0.00001, 0.00001, 0.00001)$ & $(0.01, 63)$ & $(0.00001, 183)$ & $(0.01, 63)$ & $(0.00001, 183)$ & $(0.5, 0.5, 0.25, 4)$ & $(0.0001, 1, 10)$ & $(0.1, 0.01, 0.000001, 0.0001)$ & $(0.1, 100, 0.001, 43)$ & $(1, 100000, 0.01, 3)$ \\
296000 & $78.33$ & $(46.67)$ & $(85)$ & $(71.67)$ & $(85)$ & $(78.33)$ & $(68.33)$ & $65.76$ & $67.5$ & $(66.67)$ & $(75)$ \\
 & $(16)$ & $(0.00001, 0.00001, 0.00001)$ & $(0.001, 183)$ & $(0.00001, 103)$ & $(0.001, 183)$ & $(0.00001, 3)$ & $(0.25, 0.25, 0.25, 4)$ & $(1000, 0.0001, 10)$ & $(1, 5, 0.0001, 0.000001)$ & $(10, 100000, 10000, 183)$ & $(0.0001, 10000, 0.0001, 123)$ \\
33000 & $83.33$ & $(43.33)$ & $(86.67)$ & $(81.67)$ & $(88.33)$ & $(81.67)$ & $(68.33)$ & $84.61$ & $89.5$ & $(88.33)$ & $(85)$ \\
 & $(0.03125)$ & $(0.00001, 0.00001, 0.00001)$ & $(0.001, 143)$ & $(0.00001, 43)$ & $(0.01, 3)$ & $(0.00001, 43)$ & $(0.25, 0.25, 0.25, 0.25)$ & $(0.0001, 10000, 30)$ & $(10, 0.1, 0.001, 0.00001)$ & $(0.1, 0.0001, 0.0001, 3)$ & $(0.001, 1000, 0.00001, 23)$ \\
335000 & $78.33$ & $(43.33)$ & $(75)$ & $(78.33)$ & $(73.33)$ & $(75)$ & $(70)$ & $67.58$ & $67.25$ & $(68.33)$ & $(73.33)$ \\
 & $(8)$ & $(0.00001, 0.00001, 0.00001)$ & $(0.001, 203)$ & $(0.00001, 43)$ & $(0.01, 3)$ & $(0.00001, 43)$ & $(0.25, 0.25, 0.25, 4)$ & $(0.0001, 1000, 50)$ & $(0.01, 0.01, 0.00001, 0.001)$ & $(0.01, 10000, 0.00001, 203)$ & $(0.01, 10000, 0.001, 83)$ \\
34000 & $80.33$ & $(53.33)$ & $(76.67)$ & $(86.67)$ & $(83.33)$ & $(86.67)$ & $(73.33)$ & $79.42$ & $78.5$ & $(80)$ & $(81.67)$ \\
 & $(0.25)$ & $(0.00001, 0.00001, 0.00001)$ & $(0.001, 83)$ & $(0.00001, 163)$ & $(0.001, 203)$ & $(0.00001, 163)$ & $(0.25, 0.25, 0.25, 0.5)$ & $(0.0001, 0.0001, 100)$ & $(2.5, 5, 0.0001, 0.000001)$ & $(0.1, 100000, 0.0001, 3)$ & $(10000, 1, 1000, 3)$ \\
384000 & $85$ & $(55)$ & $(78.33)$ & $(86.67)$ & $(78.33)$ & $(85)$ & $(78.33)$ & $75.47$ & $73.5$ & $(86.67)$ & $(76.67)$ \\
 & $(8)$ & $(0.00001, 0.00001, 0.00001)$ & $(0.001, 83)$ & $(1000, 63)$ & $(0.001, 83)$ & $(0.00001, 183)$ & $(0.25, 0.25, 0.25, 2)$ & $(0.0001, 0.0001, 10)$ & $(5, 10, 0.001, 0.0001)$ & $(0.001, 100, 0.00001, 183)$ & $(0.1, 10000, 0.01, 143)$ \\
41000 & $65$ & $(43.33)$ & $(66.67)$ & $(68.33)$ & $(56.67)$ & $(65)$ & $(56.67)$ & $62.36$ & $65.83$ & $(75.67)$ & $(63.33)$ \\
 & $(0.125)$ & $(0.00001, 0.00001, 0.00001)$ & $(0.1, 43)$ & $(0.00001, 183)$ & $(0.1, 43)$ & $(1000, 23)$ & $(0.25, 0.25, 0.25, 4)$ & $(1, 1000, 100)$ & $(0.1, 2.5, 0.00001, 0.001)$ & $(0.01, 0.0001, 1, 183)$ & $(0.01, 1000, 0.001, 203)$ \\
46000 & $70$ & $(46.67)$ & $(78.33)$ & $(81.67)$ & $(80)$ & $(83.33)$ & $(65)$ & $72.45$ & $64.5$ & $(75)$ & $(78.33)$ \\
 & $(0.125)$ & $(0.00001, 0.00001, 0.00001)$ & $(0.001, 163)$ & $(0.00001, 43)$ & $(0.001, 163)$ & $(0.00001, 43)$ & $(0.25, 0.25, 0.25, 4)$ & $(1, 1, 100)$ & $(1, 0.01, 0.0001, 0.000001)$ & $(0.1, 1000, 0.0001, 83)$ & $(0.01, 1000, 0.00001, 83)$ \\ \hline
Average Acc. & $74.87$ & $49.93$ & $74.98$ & $74.83$ & $76.33$ & $75.43$ & $69.87$ & $73.98$ & $73.47$ & $78.01$ & $75.63$ \\ \hline
Average Rank & $5.57$ & $10.93$ & $5.4$ & $5.44$ & $4.47$ & $5.48$ & $7.55$ & $6.49$ & $6.82$ & $3.17$ & $4.68$ \\ \hline
\multicolumn{10}{l}{$^{\dagger}$ represents the proposed models.}
\end{tabular}}
\end{table*}

\end{document}